\setlist[enumerate]{leftmargin=.5in}
\setlist[itemize]{leftmargin=.5in}
\crefname{hypothesis}{Hypothesis}{Hypotheses}
\title{Solving Inverse Problems \\ by Joint Posterior Maximization with Autoencoding Prior\thanks{
Submitted to the editors on March 24th 2021. Revised version February 3rd 2022.
The authors would like to sincerely thank Mauricio Delbracio, José Lezama and Pablo Musé for their help, their insightful comments, and their continuous support throughout this project.
\funding{This work was funded by ECOS Sud Project U17E04, by the French-Uruguayan Institute of Mathematics and Interactions (\href{http://ifumi.cmat.edu.uy/}{IFUMI}), by CSIC I+D 2018-256 (Uruguay) and by ANII (Uruguay) under Grant
11 FCE\_1\_2017\_1\_135458 and by the French Research Agency through the PostProdLEAP project (ANR-19-CE23-0027-01). Computer experiments for this work ran on a Titan Xp GPU donated by NVIDIA, as well as on HPC resources from GENCI-IDRIS (Grants 2020-AD011011641 and 2021-AD011011641R1). }}}
\author{Mario González\thanks{DMEL, CenUR RN, Universidad de la República, Salto, Uruguay
  (\email{mgonzalez@unorte.edu.uy}, \url{http://dmel.interior.edu.uy/mario-gonzalez/}).}
\and Andrés Almansa \thanks{MAP5, CNRS \& Université Paris Cité, France 
  (\email{andres.almansa@parisdescartes.fr}).}
\and Pauline Tan \thanks{LJLL, Sorbonne Université, Paris, France (\email{pauline.tan@sorbonne-universite.fr})}
}
\def\1{\bm{1}}
\def\vmu{{\bm{\mu}}}
\def\vu{{\bm{u}}}
\def\vw{{\bm{w}}}
\def\vx{{\bm{x}}}
\def\vy{{\bm{y}}}
\def\vz{{\bm{z}}}
\def\mA{{\bm{A}}}
\def\mSigma{{\bm{\Sigma}}}
\DeclareMathAlphabet{\mathsfit}{\encodingdefault}{\sfdefault}{m}{sl}
\SetMathAlphabet{\mathsfit}{bold}{\encodingdefault}{\sfdefault}{bx}{n}
\newcommand{\R}{\mathbb{R}}
\DeclareMathOperator*{\argmax}{arg\,max}
\DeclareMathOperator*{\argmin}{arg\,min}
\theoremstyle{plain}\newtheorem{property}{Property}
\theoremstyle{plain}
\theoremstyle{plain}\newtheorem{assumption}{Assumption}
\newcommand{\MAP}{\textsc{map}}
\newcommand{\x}{\ensuremath{\vx}} %
\newcommand{\X}{X} %
\newcommand{\y}{\ensuremath{\vy}} %
\newcommand{\Y}{Y} %
\newcommand{\z}{\ensuremath{\vz}} %
\newcommand{\Z}{Z} %
\newcommand{\decoder}{\ensuremath{\mathsf{G}}}
\newcommand{\generator}{\decoder}
\newcommand{\encoderParams}{\ensuremath{\phi}}
\newcommand{\decoderParams}{\ensuremath{\theta}}
\newcommand{\muEncoder}{\vmu_{\encoderParams}}
\newcommand{\muDecoder}{\vmu_{\decoderParams}}
\newcommand{\SigmaEncoder}{\mSigma_{\encoderParams}}
\newcommand{\SigmaDecoder}{\mSigma_{\decoderParams}}
\newcommand{\sigmaDVAE}{\sigma_{\operatorname{DVAE}}}
\newcommand{\Fdata}{\ensuremath{F}}
\newcommand{\Gprior}{\ensuremath{U}}
\newcommand{\Hcoupling}{\ensuremath{H}}
\newcommand{\Htheta}{\ensuremath{\Hcoupling_{\decoderParams}}}
\newcommand{\Kcoupling}{\ensuremath{K}}
\newcommand{\Kphi}{\ensuremath{\Kcoupling_{\encoderParams}}}
\newcommand{\pDecoder}{p_\decoderParams}
\newcommand{\ptheta}{\pDecoder}
\newcommand{\qEncoder}{q_\encoderParams}
\newcommand{\qphi}{\qEncoder}
\newcommand{\Normal}{\mathcal{N}} %
\newcommand{\xdim}{\ensuremath{d}} %
\newcommand{\ydim}{\ensuremath{m}}
\newcommand{\zdim}{\ensuremath{l}}
\newcommand{\NN}{\mathbb{N}} %
\newcommand{\RR}{\mathbb{R}} %
\newcommand{\pdf}{p} %
\newcommand{\PDF}[2]{\pdf_{#1} \left({#2}\right)} %
\newcommand{\ConditionalPDF}[4]{\pdf_{{#1} \vert {#2}} \left( {#3} \,\middle\vert\, {#4} \right)} %
\DeclareMathOperator*{\gd}{\textsc{gd}}
\newcommand{\GD}[3]{\gd_{#1} {#2},\,\text{starting from $#1 = #3$}}
\newcommand{\prox}{\operatorname{prox}}
\definecolor{myred}{rgb}{0.8, 0.0, 0.0}
\begin{document}

\maketitle

\begin{abstract}
In this work we address the problem of solving ill-posed inverse problems in imaging where the prior is a variational autoencoder (VAE).
Specifically we consider the decoupled case where the prior is trained once and can be reused for many different log-concave degradation models without retraining. Whereas previous MAP-based approaches to this problem lead to highly non-convex optimization algorithms, our approach computes the joint (space-latent) MAP that naturally leads to alternate optimization algorithms and to the use of a stochastic encoder to accelerate computations.
The resulting technique (JPMAP) performs Joint Posterior Maximization using an Autoencoding Prior.
We show theoretical and experimental evidence that the proposed objective function is quite close to bi-convex. Indeed it satisfies a weak bi-convexity property which is sufficient to guarantee that our optimization scheme converges to a stationary point.
We also highlight the importance of correctly training the VAE using a denoising criterion, in order to ensure that the encoder generalizes well to out-of-distribution images, without affecting the quality of the generative model. This simple modification is key to providing robustness to the whole procedure.
Finally we show how our joint MAP methodology relates to more common MAP approaches, and we propose a continuation scheme that makes use of our JPMAP algorithm to provide more robust MAP estimates.
Experimental results also show the higher quality of the solutions obtained by our JPMAP approach with respect to other non-convex MAP approaches which more often get stuck in spurious local optima.
\end{abstract}

\begin{keywords}
  Image Restoration, Inverse Problems, Bi-convex Optimization, Bayesian Statistics, Generative Models, Variational Auto-encoders
\end{keywords}

\begin{AMS}
68U10,  %
65K10,  %
65D18,  %
68T05,  %
90C26,  %
90C25,  %
90C30,  %

\end{AMS}

\section{Introduction}
General inverse problems in imaging consist in estimating a clean image $\vx\in\R^\xdim$ from noisy, degraded measurements $\vy\in\R^\ydim$. In many cases the degradation model is known and its conditional density
$$ \ConditionalPDF{\Y}{\X}{\vy}{\vx} \propto e^{-\Fdata(\vx,\vy)} $$
is log-concave with respect to $\vx$. To illustrate this, let us consider the case where the negative log-conditional is quadratic with respect to $\vx$ 
\begin{equation} 
\label{eq:data-term}
\Fdata(\vx,\vy) = \frac{1}{2\sigma^2} \| \mA \vx - \vy \|^2. 
\end{equation}
This boils down to a linear degradation model that takes into account degradations such as, white Gaussian noise, blur, and missing pixels. When the degradation operator $\mA$ is non-invertible or ill-conditioned, or when the noise level $\sigma$ is high, obtaining a good estimate of $\vx$ requires prior knowledge on the image, given by $\PDF{X}{\vx} \propto e^{-\lambda \Gprior(\vx)}$. Variational and Bayesian methods in imaging are extensively used to derive MMSE or MAP estimators, 
\begin{equation} 
\label{eq:MAP}
\hat{\vx}_{\textsc{map}}\! =\! \argmax_\vx \ConditionalPDF{\X}{\Y}{\vx}{\vy} \! =\! \argmin_\vx \left \{\Fdata(\vx,\vy) + \lambda \Gprior(\vx)\right \}
\end{equation}
based on \emph{(a)} explicit hand-crafted priors like Tikhonov regularization \cite{Tikhonov1943}, total variation \cite{Rudin1992,Chambolle04,Louchet2013,Pereyra2016} and its higher order \cite{Bredies2010} and non-local \cite{Gilboa2008} variants, sparsity in a transformed domain \cite{Donoho1995}, or in redundant representations like wavelet frames or patch dictionaries \cite{Elad2010}, or \emph{(b)} learning-based priors like patch-based Gaussian mixture models \cite{Zoran2011,yu2011solving, Teodoro2018scene}.
\paragraph{Neural network regression}
Since deep neural networks (NN) showed their superiority in image classification tasks~\cite{Krizhevsky2012} researchers started to look for ways to use this tool to solve inverse problems too. The most straightforward attempts employed neural networks as \emph{regressors} to learn a risk minimizing mapping $\vy \mapsto \vx$ from many examples $(\vx_i,\vy_i)$ either agnostically~\cite{dong2014learning,zhang2017beyond,zhang2018ffdnet, gharbi2016deep,schwartz2018deepisp,gao2019dynamic} 
or including the degradation model in the network architecture via unrolled optimization techniques~\cite{gregor2010learning,Chen2017,diamond2017unrolled,gilton2019neumann}.
\paragraph{Implicitly decoupled priors}
The main drawback of neural networks regression is that they require to retrain the neural network each time a single parameter of the degradation model changes. To avoid the need for retraining, another family of approaches seek to \emph{decouple} the NN-based learned image prior from the degradation model.
A popular approach within this methodology are \emph{Plug \& Play} (or PnP) methods. Instead of directly learning the log-prior $-\log \PDF{X}{\vx} = \Gprior(\vx) + C$, these methods seek to learn an approximation of its gradient $\nabla\Gprior$ \cite{Bigdeli2017,Bigdeli2017a} or proximal operator $\prox_\Gprior$ \cite{venkatakrishnan2013plug,meinhardt2017learning,Zhang2017,chan2017plug,kamilov2017plug,ryu2019plug}, by replacing it by a denoising NN. Then, these approximations are used in an iterative optimization algorithm to find the corresponding MAP estimator in Equation~\eqref{eq:MAP} or more generally some sort of consensus equilibrium among the data fitting term and the priors~\cite{buzzard2018plug}.

Taking an apparently different approach Romano \emph{et al.} introduced the regularization by denoising (RED) algorithm \citep{romano2017little} which uses a denoiser $D_\sigma$ to construct an explicit regularizer $\Gprior(\x) = \frac{1}{2}\x^T (\x - D_\sigma(\x))$. %
Under certain conditions (see below) its gradient $\nabla \Gprior = I - D_\sigma$ can be conveniently computed in terms of the denoiser, leading to a gradient descent scheme for the associated MAP estimator, which is very easy to implement.

\paragraph{Explicitly decoupled generative priors}
In another series of works pioneered by Bora \emph{et al.} \cite{bora2017compressed} and followed by \cite{shah2018solving,Raj2019,Menon2020,Huang2018,Hand2020} the Plug \& Play prior is explicitly provided by a generative model, most often a generative adversarial network $\generator$ that maps a latent variable $\z \sim \mathcal{N}(0,I)$ to an image $\x = \generator(\z)$ with the desired distribution $p_{X}$ as represented by the learning dataset. More precisely these methods solve an optimization problem on the latent variable $\z$
\begin{equation}
\label{eq:Bora-optim}
\hat{\z}_{\textsc{map}}\! =\! \argmin_\z \left \{\Fdata(\generator(\z),\vy) + \frac{1}{2} \alpha \|\z\|^{2} \right \}
\end{equation}
and the reconstructed image is provided by $\hat{\x}_{\textsc{map}} = \generator(\hat{\z}_{\textsc{map}})$.
As we show in the following sub-section and in appendix~\ref{sec:mapZ}, this corresponds (when $\alpha=1$) to the Maximum A Posteriori (MAP) estimator with respect to the $\z$ variable. In this work we adopt this framework with some extensions that help avoid getting trapped in spurious critical points of the non-convex objective function. 

\paragraph{Empirical success of Plug \& Play and RED}
Plug \& Play and RED approaches became very popular because they allow to repurpose very powerful state of the art denoisers as regularizers of a large family of inverse problems in a quite straightforward manner. They have been successfully applied to many different problems in imaging and they have thus empirically proven their superiority (in terms of achievable reconstruction quality with respect to more classical regularization techniques), and opened the way for the solution of more difficult inverse problems in imaging.
\paragraph{Theoretical questions}
The success of Plug \& Play and RED approaches largely outpaced our understanding of why and when these techniques lead to algorithms that provably converge to well-posed statistical estimators with well known properties.
This is not surprising because obtaining convergence guarantees for non-convex optimization problems under realistic conditions is quite challenging.

A notable exception where strong convergence results have been obtained is the particular case of compressed sensing, where the rows of the degradation operator (or sensing matrix) $\mA$ are independent realizations of a zero-mean Gaussian distribution. For this problem, Hand \emph{et al.} \cite{Hand2020,Huang2018} show that the optimization objective~\eqref{eq:Bora-optim} has almost no spurious stationary points when the generator is assumed to be a random ReLU network with Gaussian weights. As a consequence, a minor modification to the gradient descent algorithm in~\cite{bora2017compressed} converges with high probability to the global optimum.

In this paper we are interested in more general inverse problems, where the sensing matrix $\mA$ is not necessarily random but deterministic and highly structured most often dictated by our modeling of the acquisition device.
In this more general setting the hypotheses of the CS results are not necessarily satisfied, and the kind of convergence guarantees that could be established for PnP algorithms with non-convex priors are much weaker (typically only convergence to a stationary point or fixed point is provided, not necessarily a global optimum), and most works concentrate in the implicit case, where the prior is not explicitly provided by a generative model, but implicit in a denoising algorithm.

In such a case the actual prior is unknown, the existence of a density whose gradient or proximal operator is well approximated by a neural denoiser is most often not guaranteed~\citep{reehorst2018regularization}, and the convergence of the algorithm is not guaranteed either unless the denoiser is retrained with specific constraints like idempotence  \citep{gupta2018cnn,shah2018solving}, contractive residual \citep{ryu2019plug} or exact, invertible, smooth MMSE denoisers \citep{Xu2020}.

The effect of such training constraints on the quality of the denoisers and the associated priors is yet to be explored in detail. But even when these constraints are satisfied, convergence conditions can be quite restrictive, either \emph{(a)} requiring  the data-fitting term $\Fdata$ to be strongly convex \citep{ryu2019plug} (thus excluding many important problems in computational imaging where $\mA$ is not full rank like interpolation, super-resolution, deconvolution with a non-invertible kernel or compressive sensing), and/or \emph{(b)} constraining the regularization parameter $\lambda$ outside of its useful range \cite{ryu2019plug,Xu2020}.\footnote{
In~\cite{laumont2021pnpsgd} the PnP-ADMM and PnP-FBS algorithms introduced in \cite{ryu2019plug,Xu2020} are reported to converge in practice quite far beyond the conditions of the theorem, but require (to obtain optimal performance) the regularization parameter $\lambda$ to be tuned to values that are far outside the region where convergence is guaranteed. Also, the performance is significantly degraded if $\lambda$ is constrained to the range where convergence is guaranteed.
}

Similarly, an early analysis of the RED approach~\citep{reehorst2018regularization} provides a convergence proof, but only under quite restrictive conditions (locally homogeneous denoisers with symmetric Jacobian) which exclude most state of the art denoisers like DnCNN, BM3D, NLMeans. A more recent analysis of a stochastic variant of the RED algorithm \citep{laumont2021pnpsgd} (called PnP-SGD) significantly expands the family of denoisers that provide convergence guarantees, including DnCNN and the doubly-stochastic variant of NLM \citep{Sreehari2015}. These guarantees come, however, at the expense of a very small descent step which leads to a very computationally expensive algorithm with slow convergence. In addition, the experiments with PnP-SGD show that this algorithm is extremely sensitive to the initial condition, and it can be easily get stuck on spurious local minima if not initialized very carefully.
\paragraph{Focus of this work}
Very recent works focused on developing MAP estimation algorithms with convergence guarantees under more realistic conditions.
The convergence analysis of the RED framework, and its RED-PRO variant was further refined under a demicontractive condition for the denoiser \citep{Cohen2020}. This condition is, however, difficult to verify according to \citet{Pesquet2020} who provides an alternative convergence analysis framework based on firmly non-expansive denoisers for which explicit training procedures exist \citep{Terris2020}.
In this work we explore alternative new ways to bring theory and practice closer together, by proposing novel Plug \& Play algorithms to compute the MAP estimator of an inverse problem with a neural regularizer.
Unlike previous approaches which were based on implicit priors, or on GAN-based explicit priors, our approach is based on an explicit generative prior that has been trained as a Variational AutoEncoder (VAE). As we shall see later, the %
additional VAE structure provides: \emph{(i)} powerful mechanisms to avoid getting stuck in spurious local minima of the associated non-convex functional, and \emph{(ii)} convergence guarantees under much less restrictive conditions on the inverse problem $\Fdata$ and regularization parameter $\lambda$.

The next Section~\ref{sec:map-gen} reviews previous work on similar approaches to compute a MAP estimator from a generative prior that was trained either as a VAE or a GAN. Section~\ref{sec:jpmap-intro} briefly introduces our approach and how it relates to previous work. The section finishes with an overview of the rest of the paper.

\subsection{Maximum a Posteriori meets Generative Models} \label{sec:map-gen} %

Our approach focuses on PnP algorithms where the prior is provided by a generative model.
For instance one could use a generative adversarial network (GAN) to learn a generative model for $X=\generator(\Z)$ with $\Z\sim N(0,I)$ a latent variable.
The generative model induces a prior on $X$ via the push-forward measure $p_\X = \generator \sharp p_\Z$, which following \citep[section 5]{Papamakarios2019} can be developed as
$$
\PDF{\X}{\x} = 
\frac{\PDF{\Z}{\generator^{-1}(\x)}}%
{\sqrt{\det S(\generator^{-1}(\x))}}
\delta_\mathcal{M}(\x)
$$
where 
$S = \left(\frac{\partial \generator}{\partial \vz}\right)^T\left(\frac{\partial \generator}{\partial \vz}\right)$ is the squared Jacobian and 
the manifold $\mathcal{M} = \lbrace \vx \,:\, \exists \vz,\, \vx = \generator(\vz)\rbrace$ represents the image of the generator \generator. With such a prior $p_\X$, the \x-optimization \eqref{eq:MAP} required to obtain $\hat{\x}_\MAP$ becomes intractable (in general), for various reasons:
\begin{itemize}
    \item the computation of $\det S$,
    \item the inversion of $\generator$, and
    \item the hard constraint $\x\in\mathcal{M}$.
\end{itemize}
These operations are all memory and/or computationally intensive, except when they are partially addressed by the use of a normalizing flow like in \citep{Helminger2020,Whang2020}.\\

Current attempts to use such a generative model as a prior, like the one proposed by~\citet{bora2017compressed} for GANs, circumvent these difficulties by performing an optimization on \z\ (in the latent domain) instead of \x. 
Instead of solving Equation~\eqref{eq:MAP}, they solve
\begin{equation}\label{eq:MAPz}
\begin{split}
\hat{\z}_\MAP & = \argmax_\z \left\{ \ConditionalPDF{\Y}{\X}{\vy}{\generator(\vz)}\PDF{\Z}{\vz} \right\} \\
& = \argmin_\z \left \{\Fdata(\generator(\vz),\vy) + \frac{1}{2}\|\vz\|^2\right \},
\end{split}
\end{equation}
by assuming a standard Gaussian prior. This problem is much more tractable, and the corresponding \x-estimate is obtained as \begin{equation}
\hat{\x}_{\MAP-\z} = \generator(\hat{\z}_\MAP).
\end{equation}

As we show in appendix~\ref{sec:mapZ}, this new estimator does not necessarily coincide with $\hat{\x}_\MAP$ but it does correspond to the MAP-estimator of \x\ after the change of variable $\x=\generator(\z)$, namely
$$
\hat{\x}_{\MAP-\z} = \generator\left(
\argmax_\z \left \{\ConditionalPDF{\Z}{\Y}{\z}{\y} \right \}
\right).
$$

Since \generator\ is non-linear, this problem (or its equivalent formulation \eqref{eq:MAPz}) is highly non-convex and difficult to solve with global optimality guarantees. Nevertheless, in the particular case where $\mA$ is a random Gaussian matrix (compressed sensing case) or when $\Fdata$ is strongly convex, recent work shows that the global optimum can be reached with linear convergence rates by a small modification of a gradient descent algorithm \cite{Huang2018,Hand2020}, or by an ADMM algorithm with non-linear constraints \cite{latorre2019fastADMM,Benning2016,Valkonen2014}.
To the best of our knowledge, these results do not extend, however, to the more general case we are interested in here, where $\mA$ is deterministic and rank-deficient, and $\Fdata$ is consequently not strongly convex. In this more general setting, convergence guarantees for this optimization problem remain extremely difficult to establish, as confirmed by experimental results presented in Section~\ref{sec:experiments}.

A common technique to solve difficult optimization problems like the one in Equation~\eqref{eq:MAPz} is to use (Half Quadratic) splitting methods
 \begin{equation}\label{eq:MAPz-splitting}
 \hat{\x}_{\beta} = \argmin_\x \min_\z 
\underbrace{\left \{\Fdata(\vx,\vy) + \frac{\beta}{2} \|\vx - \generator(\vz)\|^2 + \frac{1}{2}\|\vz\|^2\right \}}_{J_{1,\beta}(\x,\z)}
\end{equation}
combined with a continuation scheme, namely:
\begin{equation}\label{eq:MAPz-splitting-continuation-scheme}
\hat{\x}_{\MAP-\z} = \lim_{\beta \to \infty}  \hat{\x}_{\beta}.
\end{equation}
The convergence of the continuation scheme in the last line is a standard result in $\Gamma$-convergence (see \cite{DalMaso1993} and appendix~\ref{sec:continuation}).
The corresponding splitting algorithm is presented in Algorithm \ref{alg:MAPz-splitting}.

\renewcommand{\algorithmiccomment}[1]{\hfill // {#1}}
\begin{algorithm}
\caption{\MAP-\z\ splitting}
\label{alg:MAPz-splitting}
\begin{algorithmic}[1]
\REQUIRE Measurements $\vy$,  Initial condition $\vx_0$, maxiter, $k_{\max}$, $\left\lbrace \beta_{0}, \dots, \beta_{k_{\max}}\right\rbrace$
\ENSURE $\hat{\vx} = \generator\left( \argmax_{\vz} \ConditionalPDF{\Z}{\Y}{\vz}{\vy} \right)$

\FOR{$k:= 0$ \TO $k_{\max}$}
\STATE $\beta := \beta_k$
\FOR{$n:=0$ \TO maxiter}
\STATE $\vz_{n+1} := \argmin_\vz J_{1,\beta}(\vx_n,\vz)$ %
\COMMENT{Nonconvex}
\STATE $\vx_{n+1} := \argmin_\vx J_{1,\beta}(\vx,\vz_{n+1})$ \COMMENT{Quadratic} 
\ENDFOR 
\STATE $\x_{0} := \vx_{n+1}$
\ENDFOR
\RETURN $\vx_{n+1}$

\end{algorithmic}
\end{algorithm}

However, unlike most cases of HQS which include a linear constraint between the two variables, this splitting algorithm still contains (line 4) a difficult non-convex optimization problem\footnote{In another context a primal-dual optimization algorithm was proposed to solve a similar optimization problem~\cite{Benning2016}, but this approach was not explored in the context where $\generator$ is a generative model.}.

\subsection{Proposed method: Joint $\textsc{MAP}_{x,z}$}\label{sec:jpmap-intro}
In this work we propose to address this challenge by substituting the difficult non-convex sub-problem by a local quadratic approximation provided by the encoder of a variational autoencoder.

Indeed, as we show in Section~\ref{sec:JPMAP_framework}, a variational autoencoder allows to interpret the splitting Equation~\eqref{eq:MAPz-splitting} as the negative logarithm of the joint posterior density $\ConditionalPDF{\X,\Z}{\Y}{\vx,\vz}{\vy}$. Therefore, solving Equation~\eqref{eq:MAPz-splitting} amounts to compute a joint $\textsc{map}_{x,z}$ estimator that we denote by $\hat{\x}_{\MAP_{\x,\z}}^{\beta}$. 
Moreover if the same joint conditional density $\ConditionalPDF{\X,\Z}{\Y}{\vx,\vz}{\vy}$ is decomposed in a different manner, it leads to an approximate expression that makes use of the encoder, and is quadratic in $\z$. If this approximation is good enough then the maximization of the joint log-posterior becomes a bi-concave optimization problem or approximately so. And in that case, an extension of standard bi-convex optimization results~\cite{Gorski2007} shows that the algorithm converges to a stationary point.

We also highlight the importance of correctly training the VAE in such a way that the encoder generalizes well to noisy values of $\x$ outside of the support of $\PDF{\X}{\x}$. This can be achieved by training the VAE to reconstruct their clean inputs with noise injected at the input level, as proposed by~\citet{Im2017}. We observe that this modified training does not degrade the quality of the generative model, but makes our quasi-bi-convex optimization procedure  much more robust.

Finally we show that a continuation scheme allows to obtain the $\MAP_\z$ estimator as the limit of a series of joint $\MAP_{\x,\z}$ optimizations. This continuation scheme, in addition to the quasi-bi-convex optimization, and the initialisation heuristic provided by the denoising encoder leads to a much more robust non-convex optimization scheme which more often converges to the right critical point than a straightforward gradient descent of the $\MAP_{\z}$ model.

The remainder of this paper is organized as follows. In Section~\ref{sec:JPMAP_framework} we derive a model for the joint conditional posterior distribution of space and latent variables $\vx$ and $\vz$, given the observation $\vy$. This model makes use of a generative model, more precisely a VAE with Gaussian decoder. We then propose an alternate optimization scheme to maximize the joint posterior model, and state convergence guarantees. Section~\ref{sec:experiments} presents first a set of experiments that illustrates the convergence properties of the optimization scheme. We then test our approach on classical image inverse problems, and compare its performance with state-of-the-art methods. Concluding remarks are presented in Section~\ref{sect:future_work}.

\section{From Variational Autoencoders to Joint Posterior Maximization}
\label{sec:JPMAP_framework}

Recently, some generative models based on neural networks have shown their capability to approximate the complex image distribution in a data-driven fashion. In particular, \emph{Variational Autoencoders (VAE)}~\cite{Kingma2014} combine variational inference to approximate unknown posterior distributions of latent variable models with the ability of neural networks to learn such approximations.

Consider a graphical model $\vz\to \vx$ in which we assume that a latent variable $\vz$ is responsible of the observed image $\vx$. For example, in an image of a handwritten digit we can imagine which digit is represented in the image, width, angle (and so on) as latent variables.
We choose a generative model
$$ \ptheta(\vx,\vz) = \ptheta(\vx|\vz)\PDF{\Z}{\vz} $$
 where $\PDF{\Z}{\vz}$ is some simple distribution (which we can easily sample from) and $\ptheta(\vx|\vz)$ is the approximation of the probability distribution of $\vx$ given $\vz$ parameterized by a neural network (with weights $\theta$) known as \emph{stochastic decoder}.
 
The intractability of $\ptheta(\vx) = \int \ptheta(\vx|\vz)\PDF{\Z}{\vz}\,d\vz$ is related to the posterior distribution $\ptheta(\vz|\vx)$ by
\begin{equation}\label{eq:intractable-posterior}
    \ptheta(\vz|\vx)=\frac{\ptheta(\vx|\vz)\PDF{\Z}{\vz}}{\ptheta(\vx)}.
\end{equation}
The \emph{variational inference} approach consists in approximating this posterior with another model 
 $\qphi(\vz|\vx)$ which, in our case, is another neural network with parameters $\phi$, called a \emph{stochastic encoder}.
 
Following \cite{Kingma2014}, we consider the \emph{Evidence Lower BOund (ELBO)} as
\begin{equation}
    \mathcal{L}_{\theta,\phi}(\vx) := \log \ptheta(\vx) - KL(\qphi(\vz|\vx)\;||\; \ptheta(\vz|\vx)) \le \log \ptheta(\vx)
\end{equation}
 where KL is the Kullback-Leibler divergence.
Thus, given a dataset $\mathcal{D}=\{\vx_1,\ldots,\vx_N\}$ of image samples, maximizing the averaged ELBO on $\mathcal{D}$ means maximizing $\log \ptheta(\mathcal{D})$ which is the maximum likelihood estimator of weights $\theta$ \emph{and} minimizing $KL(\qphi(\vz|\vx)\;||\; \ptheta(\vz|\vx))$ which enforces the approximated posterior $\qphi(\vz|\vx)$ to be similar to the true posterior $\ptheta(\vz|\vx)$.
 
It can be shown \cite{Kingma2014} that the ELBO can be rewritten as
\begin{equation}
 \mathcal{L}_{\theta,\phi}(\vx) = \mathbb{E}_{\qphi(\vz|\vx)} [\log \ptheta(\vx|\vz)] - KL(\qphi(\vz|\vx)\;||\; \PDF{\Z}{\vz}).
 \label{eq:vae_loss}
\end{equation}
 The first term in \eqref{eq:vae_loss} is a \emph{reconstruction loss} similar to the one of plain autoencoders: it enforces that the code $\vz\sim \qphi(\cdot|\vx)$ generated by the encoder $\qphi$ can be used by the decoder $\ptheta$ to reconstruct the original input $\vx$. The second term is a \emph{regularization term} that enforces the distribution $\qphi(\vz|\vx)$ of the latent code $\vz$ (given $\vx$) to be close to the prior distribution $\PDF{\Z}{\vz}$.
It is common to choose an isotropic Gaussian as the prior distribution of the latent code: $$ \PDF{\Z}{\vz} = \mathcal{N}(\z\,|\,0,I) \propto e^{-\|\vz\|^2/2}$$ and a Gaussian encoder $\qphi(\vz|\vx)=\mathcal{N}(\vz\,|\,\mu_\phi(\vx),\Sigma_\phi(\vx))$, so that the KL divergence in \eqref{eq:vae_loss} is straightforward to compute. 
For the decoder $\ptheta(\vx|\vz)$ a Gaussian decoder is the most common choice and as we will see we benefit from that.

\subsection{Learning approximations vs. encoder approximations}

 In this work we construct an image prior using a Variational Autoencoder (VAE). Like any machine learning tool VAEs make different kinds of approximations. Let's distinguish two types of approximations that shall be important in the sequel:
 
 \begin{description}
 \item[Learning approximation:] The ideal prior $p_X^*$ can only be approximated by our VAE due to its architectural constraints, finite complexity, truncated optimization algorithms, finite amount of data and possible biases in the data. Due to all these approximations, after learning we have only access to an approximate prior $p_X \approx p^*_X$.
 VAEs give access to this approximate prior $p_X$ via a generative model: taking samples of a latent variable $\Z$ with known distribution $\mathcal{N}(0,I)$ in $\R^\zdim$ (with $\zdim\ll\xdim)$, and feeding these samples through a learned decoder network, we obtain samples of $X\sim p_X$. The approximate prior itself
        \begin{equation}\label{eq:pX}
             \PDF{\X}{\vx} = \int \ptheta(\vx|\vz)\, \PDF{\Z}{\vz}\, d\vz
        \end{equation}
 is intractable because it requires computing an integral over all possible latent codes $\vz$.
 However the approximate joint distribution is readily accessible
      $$ \PDF{\X,\Z}{\vx,\vz} = \ptheta(\vx|\vz)\, \PDF{\Z}{\vz}$$
 thanks to $\ConditionalPDF{\X}{\Z}{\vx}{\vz} = \ptheta(\vx|\vz)$ which is provided by the decoder network.
 
 \item[Encoder approximation:] In the previous item we considered the VAE as a generative model without making use of the encoder network. The encoder network
 $$\tilde{p}_{\Z|\X}(\z\,|\,\x):=\qphi(\vz|\vx) \approx \ConditionalPDF{\Z}{\X}{\vz}{\vx} $$
 is introduced as an approximate way to solve the intractability of $\ConditionalPDF{Z}{X}{\z}{\x} = \ptheta(\z|\x)$ (which is related to the intractability of $\ptheta(\x)$ as observed in equation~\eqref{eq:intractable-posterior}).
 
 Using the encoder network we can provide an alternative approximation for the joint distribution
 $$ \tilde{p}_{X,Z}(\x,\z) := \qphi(\vz|\vx)\, \PDF{\X}{\vx} \approx \PDF{\X,\Z}{\vx,\vz} $$
 which shall be useful in the sequel.
 
\end{description}
 
 Put another way, the ideal joint distribution $p_{\X,\Z}^*$ is inaccessible, but can be approximated in two different ways:
 
 The first expression denoted $\PDF{\X,\Z}{\vx,\vz}$ only uses the decoder and is only affected by the \emph{learning approximation}
     $$ p_{X,Z}^*(\x,\z) \approx \PDF{\X,\Z}{\vx,\vz} := \ptheta(\vx|\vz)\, \PDF{\Z}{\vz}.$$

 The second expression denoted $\tilde{p}_{X,Z}(\x,\z)$ uses both encoder and decoder and is affected both by the \emph{learning approximation} and by the \emph{encoder approximation}
 $$ \PDF{\X,\Z}{\vx,\vz} \approx \tilde{p}_{X,Z}(\x,\z) := \qphi(\vz|\vx)\, \PDF{\X}{\vx} $$

 In the following subsection we shall forget about the ideal prior $p^*_X$ and joint distribution $p_{X,Z}^*$ which are both inaccessible. Instead we accept $p_X$ (with its learning approximations) as our prior model which shall guide all our estimations.
 The approximation symbol shall be reserved to expressions that are affected by the encoder approximation \emph{in addition to} the learning approximation.

\subsection{Variational Autoencoders as Image Priors}
\label{subsec:VAEs_priors}

To obtain the Maximum a Posteriori estimator (MAP), we could plug in the approximate prior $p_X$ in equation~\eqref{eq:MAP}, but this leads to a numerically difficult problem to solve due to the intractability of $p_X$. Instead, we propose to maximize the joint posterior $\ConditionalPDF{\X,\Z}{\Y}{\vx,\vz}{\vy}$ over $(\vx,\vz)$ which is equivalent to minimizing
\begin{equation} \label{eq:JPMAP}
\begin{aligned}
        J_1(\vx,\vz) & := -\log \ConditionalPDF{\X,\Z}{\Y}{\vx,\vz}{\vy} \\
        & = 
        -\log \ConditionalPDF{\Y}{\X,\Z}{\vy}{\vx,\vz} \ptheta(\vx\,|\,\vz)\PDF{\Z}{\vz} \\
        &= \Fdata(\vx,\vy) + \Htheta(\vx,\vz) + \frac{1}{2}\|\vz\|^2.
\end{aligned}
\end{equation}
Note that the first term is quadratic in $\vx$ (assuming~\eqref{eq:data-term}), the third term is quadratic in $\vz$ and all the difficulty lies in the coupling term $\Htheta(\vx,\vz)=-\log \ptheta(\x\,|\,\z)$. 
For Gaussian decoders~\cite{Kingma2014}, the latter can be written as
\begin{equation}\label{eq:coupling_term}
\begin{split}
 \Htheta(\vx,\vz) & =  %
    \frac{1}{2}\Big (\xdim\log(2\pi) + \log\det\SigmaDecoder(\vz)  \\ 
     & \quad + \,  \|\SigmaDecoder^{-1/2}(\vz)(\vx-\muDecoder(\vz))\|^2 \Big ).
    \end{split}
\end{equation}
which is also convex in $\vx$. Hence, minimization with respect to $\vx$ takes the convenient closed form:
\begin{equation}\label{eq:xmin}
\begin{split}
    \argmin_\vx J_1(\vx,\vz) & = 
    \left(\mA^T\mA + \sigma^2\SigmaDecoder^{-1}(\vz)\right)^{-1} \\
    & \quad \times \left(\mA^T\vy + \sigma^2\SigmaDecoder^{-1}(\vz)\muDecoder(\vz)\right).
\end{split}
\end{equation}

Unfortunately the coupling term $\Hcoupling$ and hence $J_1$ is a priori non-convex in $\vz$. 
As a consequence the $\vz$-minimization problem
\begin{equation}\label{eq:zmin-exact}
    \argmin_\vz J_1(\vx,\vz) 
 \end{equation}
 is a priori more difficult.
However, for Gaussian encoders, VAEs provide an approximate expression for this coupling term which is quadratic in $\vz$. Indeed, given the equivalence  
\begin{equation}
    \begin{split}
        \ptheta(\vx\,|\,\vz) \, \PDF{\Z}{\vz}
        & = \PDF{\X,\Z}{\vx,\vz} \\
        & = \ConditionalPDF{ \Z }{\X}{\vz}{\vx} \, \PDF{\X}{\vx} \\
        & \approx \qphi(\vz\,|\,\vx) \, \PDF{\X}{\vx}
    \end{split}
\label{eq:encoder_approximation}
\end{equation}
we have that 
\begin{equation}
\label{eq:autoencoder-approx}
    \Htheta(\vx,\vz) + \frac{1}{2}\|\vz\|^2 \approx \Kphi(\vx,\vz) - \log \PDF{\X}{\vx}.
\end{equation}
where $\Kphi(\vx,\vz)=-\log \qphi(\vz\,|\,\vx)$. Therefore, this new coupling term becomes
\begin{align*}
 \Kphi(\vx,\vz) & = -\log \mathcal{N}\left(\vz; \muEncoder(\vx),\SigmaEncoder(\vx)\right) \\
              & = \frac{1}{2} \big[ \zdim \log(2\pi) + \log \det \SigmaEncoder(\vx) \nonumber \\
              & \quad +  \|\SigmaEncoder^{-1/2}(\vx)(\vz-\muEncoder(\vx))\|^2 \big],
\end{align*}
which is quadratic in $\vz$. This provides an approximate expression for the energy~\eqref{eq:JPMAP} that we want to minimize, namely 
\begin{equation}\label{eq:JPMAPapprox}
        J_2(\vx,\vz) := \Fdata(\vx,\vy) + \Kphi(\vx,\vz) - \log \PDF{\X}{\vx}
        \approx J_1(\vx,\vz).
\end{equation}
This approximate functional is quadratic in $\vz$, and minimization with respect to this variable yields
\begin{equation}\label{eq:zmin}%
    \argmin_\vz J_2(\vx,\vz) =
    \muEncoder(\vx).
\end{equation}

In the case of linear VAEs,
\begin{align}
    \PDF{\Z}{\vz} &= \Normal(\vz;0,I)\\
    \ConditionalPDF{\X}{\Z}{\vx}{\vz} &= \Normal(\vx;V_\decoderParams\vz+v_\decoderParams,\SigmaDecoder).
    \label{eq:pPCA_decoder}
\end{align}
It is easily shown that the posterior is also Gaussian~\cite{Lucas2019}, namely
\begin{equation}
    \ConditionalPDF{\Z}{\X}{\vz}{\vx} = \Normal(\vz;MV_\decoderParams^T (x-v_\decoderParams),\SigmaDecoder M)\qquad
    \text{where }M=(V_\decoderParams^TV_\decoderParams+\SigmaDecoder)^{-1}.
    \label{eq:linear_posterior}
\end{equation}
Hence, the linear encoder $\qphi(\vz\,|\,\vx)$ that minimizes the ELBO is that of equation~\eqref{eq:linear_posterior} so the approximation~\eqref{eq:encoder_approximation} is exact and then $J_1=J_2$.

\subsection{Alternate Joint Posterior Maximization}
\label{sect:assumptions}

The previous observations suggest to adopt an alternate scheme to minimize $-\log \ConditionalPDF{\X,\Z}{\Y}{\vx,\vz}{\vy}$ in order to solve the inverse problem. We begin our presentation by 
a simple version of the proposed algorithm, which aims at managing the case where
the approximation of $J_1$ by $J_2$ is exact 
(at least in the sense given in Assumption \ref{exact-approximation} below); 
then we propose an adaptation for the more realistic non-exact case and we explore its convergence properties.

When $J_1=J_2$ is a bi-convex function, Algorithm~\ref{alg:JPMAPexact} is known as \emph{Alternate Convex Search}. Its behavior has been studied in \citep{Gorski2007,Aguerrebere2014b}.
Here we shall consider the following (strong) assumption, which includes the strictly bi-convex case ($J_1=J_2$):
\begin{assumption}\label{exact-approximation}
For any $\vx$, if $\vz^*$ is a global minimizer of $J_2(\vx,\cdot)$, then $\vz^*$ is a global minimizer of $J_1(\vx,\cdot)$.
\end{assumption}

The proposed 
alternate minimization takes the simple and fast form depicted in Algorithm~\ref{alg:JPMAPexact}, which can be shown to converge to a 
stationary point of $J_1$
under Assumptions \ref{exact-approximation} and \ref{functioncondition},
as stated in 
Proposition~\ref{thm:convergence-approx}
below. Note that the minimization in step~2 of Algorithm~\ref{alg:JPMAPexact} does not require the knowledge of the unknown term $- \log \PDF{\X}{\vx}$ in Equation~\eqref{eq:JPMAPapprox} since it does not depend on $\vz$.

\renewcommand{\algorithmiccomment}[1]{\hfill // #1}
\begin{algorithm}
\caption{Joint posterior maximization - exact case}
\label{alg:JPMAPexact}
\begin{algorithmic}[1]
\REQUIRE Measurements $\vy$, Autoencoder parameters \decoderParams, \encoderParams, Initial condition $\vx_0$
\ENSURE $\hat{\vx}, \hat{\vz} = \argmax_{\vx,\vz} \ConditionalPDF{\X,\Z}{\Y}{\vx,\vz}{\vy}$

\FOR{$n:=0$ \TO maxiter}
\STATE $\vz_{n+1} := \argmin_\vz J_2(\vx_n,\vz)$ \COMMENT{Quadratic approx}%
\STATE $\vx_{n+1} := \argmin_\vx J_1(\vx,\vz_{n+1})$ \COMMENT{Quadratic}%
\ENDFOR 
\RETURN $\vx_{n+1}, \vz_{n+1}$

\end{algorithmic}
\end{algorithm}

The convergence analysis of the proposed schemes requires some general assumptions on the functions $J_1$ and $J_2$ :
\begin{assumption}\label{functioncondition}
$J_1(\cdot,\vz)$ is convex and admits a unique minimizer for any $\vz$.
Moreover, $J_1$ is coercive and continuously differentiable.
\end{assumption}

The unicity of the minimizers of the partial function $J_1(\cdot,\vz)$ can be dropped. In this case, the proof of the convergence of Algorithm \ref{alg:JPMAPexact} has to be slightly adapted.

The convergence property of Algorithm \ref{alg:JPMAPexact} will be investigated in a wider framework below (Proposition \ref{thm:convergence-approx}).
Note that all the properties required in Assumption \ref{functioncondition} are satisfied if we use a differentiable activation function like the Exponential Linear Unit (ELU)~\citep{Clevert2016} with $\alpha=1$, instead of the more common ReLU activation function. More details can be found in Appendix~\ref{sec:J1prop}.

\subsection{Approximate Alternate Joint Posterior Maximization}
When the autoencoder approximation in~\eqref{eq:JPMAPapprox} is not exact (Assumption~\ref{exact-approximation}),
the energy we want to minimize in Algorithm \ref{alg:JPMAPexact}, namely $J_1$ may not decrease. To ensure the decay, some additional steps can be added.
Noting that
the approximation provided by $J_2$ 
provides a fast and accurate heuristic to initialize the minimization of $J_1$,
an alternative scheme is proposed
in Algorithm~\ref{alg:JPMAP2new}.

\renewcommand{\algorithmiccomment}[1]{\hfill // #1}
\begin{algorithm}
\caption{Joint posterior maximization - approximate case}
\label{alg:JPMAP2new}
\begin{algorithmic}[1]
\REQUIRE Measurements $\vy$, Autoencoder parameters \decoderParams, \encoderParams, Initial conditions $\vx_0, \vz_0$ 
\ENSURE $\hat{\vx}, \hat{\vz} = \argmax_{\vx,\vz} \ConditionalPDF{\X,\Z}{\Y}{\vx,\vz}{\vy}$

\FOR{$n:=0$ \TO maxiter}
\STATE $\vz^{1} := \argmin_\vz J_2(\vx_n,\vz)$ \COMMENT{Equation~\eqref{eq:zmin}}
\STATE $\vz^{2} := \GD{\vz}{J_1(\vx_n,\vz)}{\vz^{1}}$
\STATE $\vz^{3} := \GD{\vz}{ J_1(\vx_n,\vz)}{\vz_{n}}$

\FOR{$i:=1$ \TO 3}
\STATE $\vx^i := \argmin_\vx J_1(\vx,\vz^i)$ \COMMENT{Equation~\eqref{eq:xmin}} 
\ENDFOR 

\STATE $i^* := \argmin_{i \in \{1,2,3\}} J_1(\vx^i,\vz^i) $
\STATE $(\vx_{n+1},\vz_{n+1}) := (\vx^{i^*},\vz^{i^*}) $

\ENDFOR 
\RETURN $\vx_{n+1}, \vz_{n+1}$

\end{algorithmic}
\end{algorithm}
In Algorithm~\ref{alg:JPMAP2new}, $\gd$ is a gradient descent scheme such that for any starting point $\vz_0$, the output $\vz^+$ satisfies
\[
\frac{\partial J_1}{\partial \vz}(\vx,\vz^+) = 0
\quad\text{and}\quad
J_1(\vx,\vz^+) \leq J_1(\vx,\vz_0)
\]
Hence, one can consider for instance a gradient descent scheme which finds a local minimizer of $J_1(\vx,\cdot)$ starting from $\vz_0$.

Our experiments with Algorithm~\ref{alg:JPMAP2new} (Section~\ref{sec:qualitative-JPMAP2}) show that during the first few iterations (where the approximation provided by $J_2$ is good enough) $\vz^1$ and $\vz^2$ reach convergence faster than $\vz^3$. After a critical number of iterations the opposite is true (the initialization provided by the previous iteration is better than the $J_2$ approximation, and $\vz^3$ converges faster).

These observations suggest that a faster execution, with the same convergence properties, can be achieved by the variant in Algorithm~\ref{alg:JPMAP3new},
which avoids the costly computation of $\vz^2$ and $\vz^3$ when unnecessary.
Hence, in practice, we will use Algorithm \ref{alg:JPMAP3new} rather than Algorithm \ref{alg:JPMAP2new}. However, Algorithm~\ref{alg:JPMAP2new} provides a useful tool for diagnostics. Indeed, the comparison of the evaluation of $J_1(\vx^i,\vz^i)$ for $i=1,2,3$ performed in step 8 permits to assess the evolution of the approximation of $J_1$ by $J_2$.

\renewcommand{\algorithmiccomment}[1]{\hfill // #1}
\begin{algorithm}
\caption{Joint posterior maximization - approximate case (faster version)}
\label{alg:JPMAP3new}
\begin{algorithmic}[1]
\REQUIRE Measurements $\vy$, Autoencoder parameters \decoderParams, \encoderParams, Initial condition $\vx_0$, iterations $n_1 \leq n_2 \leq n_{\max}$
\ENSURE $\hat{\vx}, \hat{\vz} = \argmax_{\vx,\vz} \ConditionalPDF{\X,\Z}{\Y}{\vx,\vz}{\vy}$

\FOR{$n:=0$ \TO $n_{\max}$}

\STATE done := FALSE

\IF{$n<n_1$}
\STATE $\vz^1 := \argmin_\vz J_2(\vx_n,\vz)$
\COMMENT{Equation~\eqref{eq:zmin}} %
\STATE $\vx^1 := \argmin_\vx J_1(\vx,\vz^1)$ \COMMENT{Equation~\eqref{eq:xmin}} %

\IF{$J_1(\vx^1,\vz^1) < J_1(\vx_n,\vz_n)$} 
    \STATE $i^* := 1$ \COMMENT{%
                                                        $J_2$ is good enough}
    \STATE done := TRUE
\ENDIF
\ENDIF

\IF{\NOT{done} \AND $n<n_2$}
    \STATE $\vz^1 := \argmin_\vz J_2(\vx_n,\vz)$
    \STATE $\vz^2 := \GD{\vz}{J_1(\vx_n,\vz)}{\vz^{1}}$
    \STATE $\vx^2 := \argmin_\vx J_1(\vx,\vz^2)$ \COMMENT{Equation~\eqref{eq:xmin}} %
    
    \IF{$J_1(\vx^2,\vz^2) < J_1(\vx_n,\vz_n)$}
        \STATE $i^* := 2$ \COMMENT{%
                                                        $J_2$ init is good enough}
        \STATE done := TRUE
    \ENDIF
\ENDIF

\IF{\NOT{done}}
    \STATE $\vz^3 := \GD{\vz}{J_1(\vx_n,\vz)}{\vz_{n}}$
    \STATE $\vx^3 := \argmin_\vx J_1(\vx,\vz^3)$ \COMMENT{Equation~\eqref{eq:xmin}} %
    \STATE $i^* := 3$
\ENDIF

\STATE $(\vx_{n+1}, \vz_{n+1}) := (\vx^{i^*}, \vz^{i^*})$

\ENDFOR 
\RETURN $\vx_{n+1}, \vz_{n+1}$

\end{algorithmic}
\end{algorithm}

Algorithm~\ref{alg:JPMAP3new} is still quite fast when $J_2$ provides a sufficiently good approximation, since in that case the algorithm chooses $i^*=1$, and avoids any call to the iterative gradient descent algorithm. Even if we cannot give a precise definition of what {\em sufficiently good} means, the sample comparison of $\Kphi$ and $\Htheta$ as functions of $\vz$, displayed in Figure~\ref{fig:encoder_approximation}, shows that the approximation is fair enough in the sense that it preserves the global structure of $J_1$. The same behavior was observed for a large number of random tests.

Note that Algorithm \ref{alg:JPMAPexact} is a particular instance of Algorithm \ref{alg:JPMAP3new} in the case where Assumption~\ref{exact-approximation} holds, and  $n_1=n_2=0$ and if grad descent gives a global minimizer of the considered function (in this case, the computation of $\vz^1$, $\vz^2$, are skipped and only $\vz^3$ is computed).

\begin{proposition}[Convergence of Algorithm~\ref{alg:JPMAP3new}]\label{thm:convergence-approx}%
Let $\left\lbrace (\vx_{n},\vz_{n}) \right\rbrace$ be a sequence generated by Algorithm~\ref{alg:JPMAP3new}.
Under Assumption
\ref{functioncondition}
we have that:
\begin{enumerate}
\item The sequence $\left\lbrace J_1(\vx_{n},\vz_{n}) \right\rbrace$
converges monotonically when $n \to \infty$.
\item The sequence $\left\lbrace (\vx_{n},\vz_{n})  \right\rbrace$ has at least one accumulation point.
\item All accumulation points 
of $\left\lbrace (\vx_{n},\vz_{n})  \right\rbrace$
are
stationary points
of $J_1$ and they all have the same function value.
\end{enumerate}
\end{proposition}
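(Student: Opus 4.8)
The plan is to treat Algorithm~\ref{alg:JPMAP3new} as a descent method on $J_1$ and follow the classical pattern for convergence of alternate minimization / block-coordinate schemes (in the spirit of \cite{Gorski2007}), adapted to the fact that the $\vz$-update is not an exact partial minimization of $J_1$ but only a \emph{monotone} step: at each iteration the returned pair $(\vx_{n+1},\vz_{n+1})=(\vx^{i^*},\vz^{i^*})$ satisfies, by the $\argmin$ over $i\in\{1,2,3\}$ in step~8 (and the corresponding branching logic in Alg.~\ref{alg:JPMAP3new}), $J_1(\vx_{n+1},\vz_{n+1})\le J_1(\vx^3,\vz^3)\le J_1(\vx_n,\vz^3)\le J_1(\vx_n,\vz_n)$, where the middle inequality uses that the $\vx$-update is exact minimization of the convex $J_1(\cdot,\vz^3)$ and the last uses the defining property of the $\gd$ step ($J_1(\vx_n,\vz^+)\le J_1(\vx_n,\vz_0)$ with $\vz_0=\vz_n$). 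This establishes monotone non-increase of $\{J_1(\vx_n,\vz_n)\}$; since $J_1$ is coercive (Assumption~\ref{functioncondition}) and bounded below along the sequence, the sequence of values converges, giving claim~1.

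For claim~2, coercivity of $J_1$ together with the monotone decrease confines $\{(\vx_n,\vz_n)\}$ to the sublevel set $\{J_1\le J_1(\vx_0,\vz_0)\}$, which is compact by coercivity and continuity; hence the iterates admit at least one accumulation point. For claim~3, I would pass to a convergent subsequence $(\vx_{n_k},\vz_{n_k})\to(\vx^*,\vz^*)$ and argue stationarity in each block separately. The $\vx$-block is easy: since $\vx_{n+1}$ exactly minimizes the convex, differentiable $J_1(\cdot,\vz_{n+1})$, we have $\partial_\vx J_1(\vx_{n+1},\vz_{n+1})=0$ identically, so by continuity of $\nabla J_1$ the limit satisfies $\partial_\vx J_1(\vx^*,\vz^*)=0$. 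For the $\vz$-block the key is to show $\partial_\vz J_1(\vx^*,\vz^*)=0$; here I would use that along the subsequence the $\vz$-update that was actually \emph{selected} produces a point with zero $\vz$-gradient of $J_1(\vx_n,\cdot)$ in cases $i^*=2,3$ (by the $\gd$ stopping condition $\partial_\vz J_1(\vx,\vz^+)=0$), and in case $i^*=1$, $\vz^1=\muEncoder(\vx_n)$ need not be a critical point of $J_1(\vx_n,\cdot)$ — but this case is only ever chosen when $n<n_1$, i.e. finitely often, so for $k$ large enough only $i^*\in\{2,3\}$ occurs and the selected $\vz_{n_k+1}$ satisfies $\partial_\vz J_1(\vx_{n_k},\vz_{n_k+1})=0$. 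Combining with a standard argument that consecutive iterates get close (using that the value gap $J_1(\vx_n,\vz_n)-J_1(\vx_{n+1},\vz_{n+1})\to0$, strict convexity/uniqueness of the $\vx$-minimizer, and coercivity), one transfers $\partial_\vz J_1(\vx_{n_k},\vz_{n_k+1})=0$ to the limit $\partial_\vz J_1(\vx^*,\vz^*)=0$ by continuity of the gradient. Together the two blocks give that $(\vx^*,\vz^*)$ is a stationary point of $J_1$. Finally, "all accumulation points have the same function value" follows immediately from claim~1: $J_1$ is continuous and $J_1(\vx_n,\vz_n)$ converges to a single limit $\ell$, so every accumulation point has $J_1$-value $\ell$.

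The main obstacle I expect is the $\vz$-block stationarity argument — specifically, showing that the $\vz$-gradient at the \emph{current} $\vx_n$ being zero (which is what the $\gd$ steps for $i^*=2,3$ guarantee) genuinely transfers to zero $\vz$-gradient at the limit $(\vx^*,\vz^*)$. This requires controlling the coupling between blocks across iterations; the clean way is to prove that $\|\vx_{n+1}-\vx_n\|\to 0$ and $\|\vz_{n+1}-\vz_n\|\to0$ (or at least along the subsequence) using the telescoping of the value decrease plus strong convexity of the quadratic $\vx$-subproblem (uniform in $\vz$ over the compact sublevel set, via Assumption~\ref{functioncondition} and uniqueness of the $\vx$-minimizer) and an analogous one-sided bound for $\vz$. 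A secondary subtlety is bookkeeping the case distinctions in Algorithm~\ref{alg:JPMAP3new} so that one is always comparing against $J_1(\vx_n,\vz_n)$ correctly when $i^*=3$ is forced — but this is a finite check and not a genuine difficulty. Everything else (coercivity $\Rightarrow$ compactness, continuity $\Rightarrow$ closedness of the zero-gradient set) is routine and can be invoked directly from Assumption~\ref{functioncondition}, whose hypotheses are exactly tailored (smooth ELU activations, see Appendix~\ref{sec:J1prop}) to make $J_1$ coercive and $C^1$.
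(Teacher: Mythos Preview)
Your plan is essentially the paper's proof: restrict attention to $n>n_2$ so that $i^*=3$ and the chain $J_1(\vx_{n+1},\vz_{n+1})\le J_1(\vx_n,\vz_{n+1})\le J_1(\vx_n,\vz_n)$ holds, get monotone convergence of values from coercivity, get boundedness of iterates from coercivity, and use $\partial_\vx J_1(\vx_{n+1},\vz_{n+1})=0$ and $\partial_\vz J_1(\vx_n,\vz_{n+1})=0$ together with continuity of $\nabla J_1$ to conclude stationarity. The only genuine difference is the mechanism you propose for transferring the mixed condition $\partial_\vz J_1(\vx_n,\vz_{n+1})=0$ to the limit.

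The paper does \emph{not} try to show $\|\vx_{n+1}-\vx_n\|\to 0$ or $\|\vz_{n+1}-\vz_n\|\to 0$. Instead it performs a double sub-extraction: given $(\vx_{n_j+1},\vz_{n_j+1})\to(\vx^*,\vz^*)$, extract further so that $(\vx_{n_j},\vz_{n_j+1})\to(\hat\vx^*,\vz^*)$ (same $\vz$-component by construction). Continuity gives $\partial_\vz J_1(\hat\vx^*,\vz^*)=0$. The sandwich of values forces $J_1(\hat\vx^*,\vz^*)=J_1(\vx^*,\vz^*)$, and since both $\hat\vx^*$ and $\vx^*$ are global minimizers of the convex $J_1(\cdot,\vz^*)$, the \emph{uniqueness} hypothesis in Assumption~\ref{functioncondition} gives $\hat\vx^*=\vx^*$, hence $\partial_\vz J_1(\vx^*,\vz^*)=0$. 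This uses exactly the unique-minimizer assumption and nothing more.

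Your route via $\|\vx_{n+1}-\vx_n\|\to 0$ would also work, but note two things. First, ``uniqueness of the $\vx$-minimizer'' from Assumption~\ref{functioncondition} does not by itself give the quantitative bound you need; you are really using \emph{uniform strong convexity} of $J_1(\cdot,\vz)$, which holds for the concrete quadratic $\vx$-subproblem (Hessian $\sigma^{-2}\mA^T\mA+\gamma^{-2}I$) but is not what Assumption~\ref{functioncondition} states. Second, drop the claim about $\|\vz_{n+1}-\vz_n\|\to 0$: there is no strong convexity in $\vz$, this is not provable from the assumptions, and it is not needed --- only the $\vx$-increment control is required to pass $\partial_\vz J_1(\vx_{n-1},\vz_n)=0$ to the limit. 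With those two adjustments your argument is complete; the paper's double-extraction trick is simply the way to run the same idea using only uniqueness rather than strong convexity.
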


\begin{proof}

Since we are interested in the behaviour for $n\to\infty$, we assume $n>n_2$ in Algorithm~\ref{alg:JPMAP3new}.

1. Since $n>n_2$ the algorithm chooses $i^*=3$ and $\vz_{n+1} = \vz^3$. According to the definition of grad descent, one has
\[
J_1(\vx_n,\vz_{n+1}) \leq J_1(\vx_n,\vz_n)
\]
and by optimality one has
\[
J_1(\vx_{n+1},\vz_{n+1}) \leq J_1(\vx_n,\vz_{n+1}).
\]
Hence, since $J_1$ is coercive (thus, lowerbounded), Statement 1 is straightforward.
    
2. Thanks to the coercivity of $J_1$, the sequences $\left\lbrace (\vx_{n},\vz_{n}) \right\rbrace$ and $\left\lbrace (\vx_{n},\vz_{n+1}) \right\rbrace$ are bounded, thus admit an accumulation point.

3.  Using Fermat's rule and the definition of grad descent, one has
    \[\frac{\partial J_1}{\partial \vz}(\vx_n,\vz_{n+1}) = 0
    \quad\text{and}\quad \frac{\partial J_1}{\partial \vx}(\vx_{n+1},\vz_{n+1})=0.\]
    Let $(\vx^*,\vz^*)$ be an accumulation point of $\left\lbrace (\vx_{n},\vz_{n}) \right\rbrace$. By double extraction, one can find two subsequences such that
    \[
(\vx_{n_j+1},\vz_{n_j+1}) \to (\vx^*,\vz^*)    \text{ and }
   (\vx_{n_j},\vz_{n_j+1}) \to (\hat\vx^*,\vz^*)    \]
   By continuity of $\nabla J_1$, one gets that
    \[\frac{\partial J_1}{\partial \vz}(\hat\vx^*,\vz^*) = 0
    \quad\text{and}\quad \frac{\partial J_1}{\partial \vx}(\vx^*,\vz^*)=0\]
    In particular, the convexity of $J_1(\cdot,\vz^*)$ and Assumption \ref{functioncondition} ensure that $\vx^*$ is
    a %
    global minimizer of $J_1(\cdot,\vz^*)$. Besides, the inequalities proved in Point 1  above show that
    \[
    J_1(\vx^*,\vz^*) = J_1(\hat \vx^*,\vz^*) = \lim_{n\to\infty} J_1(\vx_n,\vz_n)
    \]
    that is, $\hat\vx^*$ is also a 
    global minimizer of $J_1(\cdot,\vz^*)$.
    Since $J_1(\cdot,\vz^*)$ has a unique minimizer, one has $\hat\vx^*=\vx^*$, and
\[
\frac{\partial J_1}{\partial \vz}(\vx^*,\vz^*)
= 0\]
namely
$({\vx}^*,\vz^*)$
is a stationary point of $J_1$.
Note that we have also proved that $\vx_{n_j}$ and $\vx_{n_j+1}$ have same limit.
\end{proof}

\begin{remark} Note that if $n_1=n_2=\infty$ we cannot assume that $i^*=3$. In that case statements 1 and 2 are still valid but the third statement is not. The reason is that for $i^*\in\lbrace 1, 2 \rbrace$ we cannot guarantee the chain of inequalities 
\[
J_1(\vx_{n+1},\vz_{n+1}) \leq J_1(\vx_n,\vz_{n+1}) \leq J_1(\vx_n,\vz_n)
\]
but only
\[
J_1(\vx_{n+1},\vz_{n+1}) \leq J_1(\vx_n,\vz_n).
\]
This is consistent with the design of the algorithm where iterations $n<n_2$ serve as an heuristic to guide the algorithm to a sensible critical point. However, convergence to a critical point is only guaranteed by the final iterations $n>n_2$.
\end{remark}

\subsection{MAP-z as the limit case for $\beta\to\infty$}

If one wishes to compute the \MAP-\z\  estimator instead of the joint \MAP-\x-\z\  from the previous section, one has two options:

\begin{enumerate}
    \item Use your favorite gradient descent algorithm to solve equation~\eqref{eq:MAPz}.
    \item Use Algorithm~\ref{alg:JPMAP3new} to solve a series of joint \MAP-\x-\z\  problems with increasing values of $\beta \to \infty$ as suggested in Algorithm~\ref{alg:MAPz-splitting}.
\end{enumerate}

In the experimental section we show that the second approach most often leads to a better optimum.

In practice, in order to provide a stopping criterion for Algorithm~\ref{alg:MAPz-splitting} and to make a sensible choice of $\beta$-values we reformulate Algorithm~\ref{alg:MAPz-splitting} as a constrained optimization problem
\[
\argmin_{\x,\z\,:\, \|\generator(\z)-\x\|^2 \leq \varepsilon} \Fdata(\x,\y) + \frac12\|\z\|^2.
\]
The corresponding Lagrangian form is
\begin{equation}\label{eq:lagrangian_emm}
\max_\beta \min_{\x,\z} \Fdata(\x,\y) + \frac12\|\z\|^2 + \beta \left( \|\generator(\z)-\x\|^2 - \varepsilon \right)^+
\end{equation}
and we use the exponential multiplier method \cite{Tseng1993}
to guide the search for the optimal value of $\beta$ (see Algorithm~\ref{alg:MAPzCS})

\renewcommand{\algorithmiccomment}[1]{\hfill // #1}
\begin{algorithm}
\caption{\MAP-\z\ as the limit of joint \MAP-\x-\z.}
\label{alg:MAPzCS}
\begin{algorithmic}[1]
\REQUIRE Measurements $\vy$, Tolerance $\varepsilon$, Rate $\rho>0$, Initial $\beta_0$, Initial $\x_0$, Iterations $0\leq n_1 \leq n_2 \leq n_{\max}$
\ENSURE $\argmin_{\x,\,\z\,:\, \|\generator(\z)-\x\|^2 \leq \varepsilon} \Fdata(\x,\y) + \frac12\|\z\|^2$.

\STATE $\beta := \beta_0$
\STATE $\x^0, \z^0 := $ Algorithm~\ref{alg:JPMAP3new} starting from $\x=\x_0$ with $\beta, n_1,\,n_2,\,n_{\max}$.
\STATE converged := FALSE
\STATE $k := 0$
\WHILE{\NOT{converged}}
\STATE $\x^{k+1}, \z^{k+1} := $ Algorithm~\ref{alg:JPMAP3new} starting from $\x=\x^k$ with $\beta$ and $n_1=n_2=0$
\STATE $C = \|\generator(\z^{k+1}) - \x^{k+1}\|^2 - \varepsilon$
\STATE $\beta := \beta \exp(\rho C)$
\STATE converged := $(C \leq 0)$
\STATE $k := k+1$
\ENDWHILE
\RETURN $\vx^k, \vz^k$

\end{algorithmic}
\end{algorithm}

\section{Experimental results}
\label{sec:experiments}

\subsection{Baseline algorithms}
\label{sec:baseline_algorithms}
To validate our approach, we perform comparisons on several inverse problems with the following algorithms:
\begin{itemize}
\item CSGM (Bora et al.~\cite{bora2017compressed}) directly computes the $\z-\MAP$ estimator as defined in Equation~\eqref{eq:MAPz} using gradient descent. We run CSGM using the decoder of a VAE as generator $\generator$ starting at random $\z_0$. In addition, as Bora et al. note that random restarts are important for good performance, we also compute the best result (as measured by~\eqref{eq:MAPz}) among $m=10$ different random initializations $\z_0$ and refer to this variant as mCSGM.
\item PULSE \citep{Menon2020} is very similar to CSGM but restricts the search of the latent code $\z$ to the sphere of radius $\sqrt{\zdim}$, arguing that it concentrates most of the probability mass of a Gaussian distribution $\Normal(0,I)$ on a high-dimensional space $\RR^\zdim$. 
\item PGD-GAN \citep{shah2018solving} performs a projected gradient descent of $\Fdata(\vx,\vy)$ wrt $\x$:
\begin{equation}
    \begin{split}
        \left\{\begin{array}{l}
            \vw_k = \x_k - \eta \mA^T(\mA\x_k-\y)\\
            \x_{k+1} = \generator(\argmin_\z \|\vw_k-\generator(\z)\|).
        \end{array}
        \right.
    \end{split}
\end{equation}
\item In addition, we implement the splitting method of Algorithm~\ref{alg:MAPz-splitting} which is a simple continuation scheme for the $\z-\MAP$ estimator of Equation~\eqref{eq:MAPz}.
\end{itemize}
For a fair comparison we run all algorithms on the same prior, \emph{i.e.} the same generator network $\generator=\muDecoder$ where $\muDecoder$ is the decoder mean from the VAE model that we trained for JPMAP.

\subsection{Inverse problems}

Here, we briefly describe the inverse problems $\vy=\mA\vx+\eta$, $\eta\sim\Normal(0,\sigma^2 I)$ to be considered for validating our approach:
\begin{itemize}
    \item \emph{Denoising}: $\mA=I$ and $\sigma$ large.%
    \item \emph{Compressed Sensing}: the sensing matrix $\mA\in\RR^{q\times\xdim}$ has Gaussian random entries $\mA_{ij}\sim\Normal(0,1/q)$, where $q\ll\xdim$ is the number of measurements.%
    \item \emph{Interpolation}: $\mA$ is a diagonal matrix  with random binary entries, so masking a percentage $p$ of the image pixels.%
    \item \emph{(Non-blind) Deblurring}: $\mA\vx = h\ast\vx$ where $h$ is a known convolution kernel.%
    \item \emph{Super-resolution}: $\mA$ is a downsampling/decimation operator of scaling factor $s$.
\end{itemize}

\subsection{AutoEncoder and dataset}
\label{subsec:exp_setup}

In order to test our joint prior maximization model we first train a Variational Autoencoder like in \citep{Kingma2014} on the training data of MNIST handwritten digits \citep{MNIST}.

The \emph{stochastic encoder} takes as input an image $\vx$ of $28\times28 = 784$ pixels and produces as an output the mean and (diagonal) covariance matrix of the Gaussian distribution $q_\phi(\vz|\vx)$, where the latent variable $\vz$ has dimension $8$.
The architecture of the encoder is composed of 3 fully connected layers with ELU activations (to preserve continuous differentiability). The sizes of the layers are as follows:
$784\to 500\to 500 \to (8+8).$
Note that the output is of size $8+8$ in order to encode the mean and diagonal covariance matrix, both of size 8.\\

The \emph{stochastic decoder} takes as an input the latent variable $\vz$ and outputs the mean and covariance matrix of the Gaussian distribution $p_\theta(\x|\z)$. Following \citep{TwoStageVAE} we chose here an isotropic covariance $\SigmaDecoder(\vz)=\gamma^2 I$ where $\gamma>0$ is trained, but independent of $\vz$. This choice simplifies the minimization problem \eqref{eq:zmin-exact}, because the term $\det\Sigma_\theta(\vz)$ (being constant) has no effect on the $\vz$-minimization. The architecture of the decoder is also composed of 3 fully connected layers with ELU activations (to preserve continuous differentiability). The sizes of the layers are as follows:
$8\to 500\to 500 \to 784.$
Note that the covariance matrix is constant, so it does not augment the size of the output layer which is still $784=28\times28$ pixels.\\

We also trained a VAE on CelebA~\cite{liu2015faceattributes} images cropped to $64\times 64\times 3$, with latent dimension ranging from 64 to 512.
We choose a DCGAN-like~\cite{radford2015unsupervised} CNN architecture as encoder and a symmetrical one as decoder with ELU activations, batch normalization and isotropic covariance as before. For more details, see the code\footnote{Code available at \url{https://github.com/mago876/JPMAP}.}.\\

We train these architectures using PyTorch~\cite{paszke2017automatic} with batch size 128 and Adam algorithm for 200 epochs with learning rate 0.0001 and rest of the parameters as default.

\begin{figure*}[htbp]
\begin{center}
  \subfigure[Denoising]{
  \includegraphics[width=0.3\textwidth]{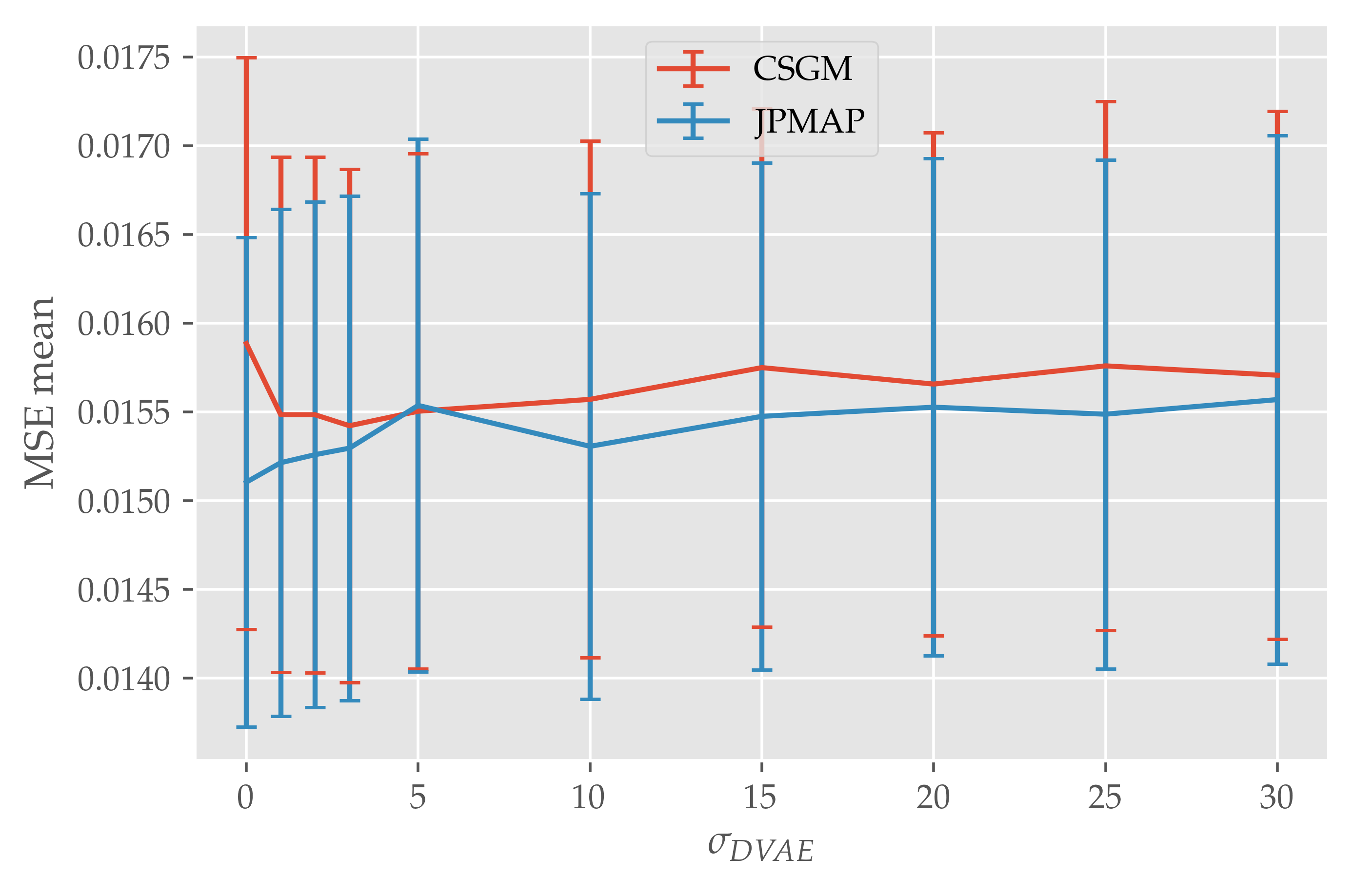}
  }
  \subfigure[Compressed Sensing]{
  \includegraphics[width=0.3\textwidth]{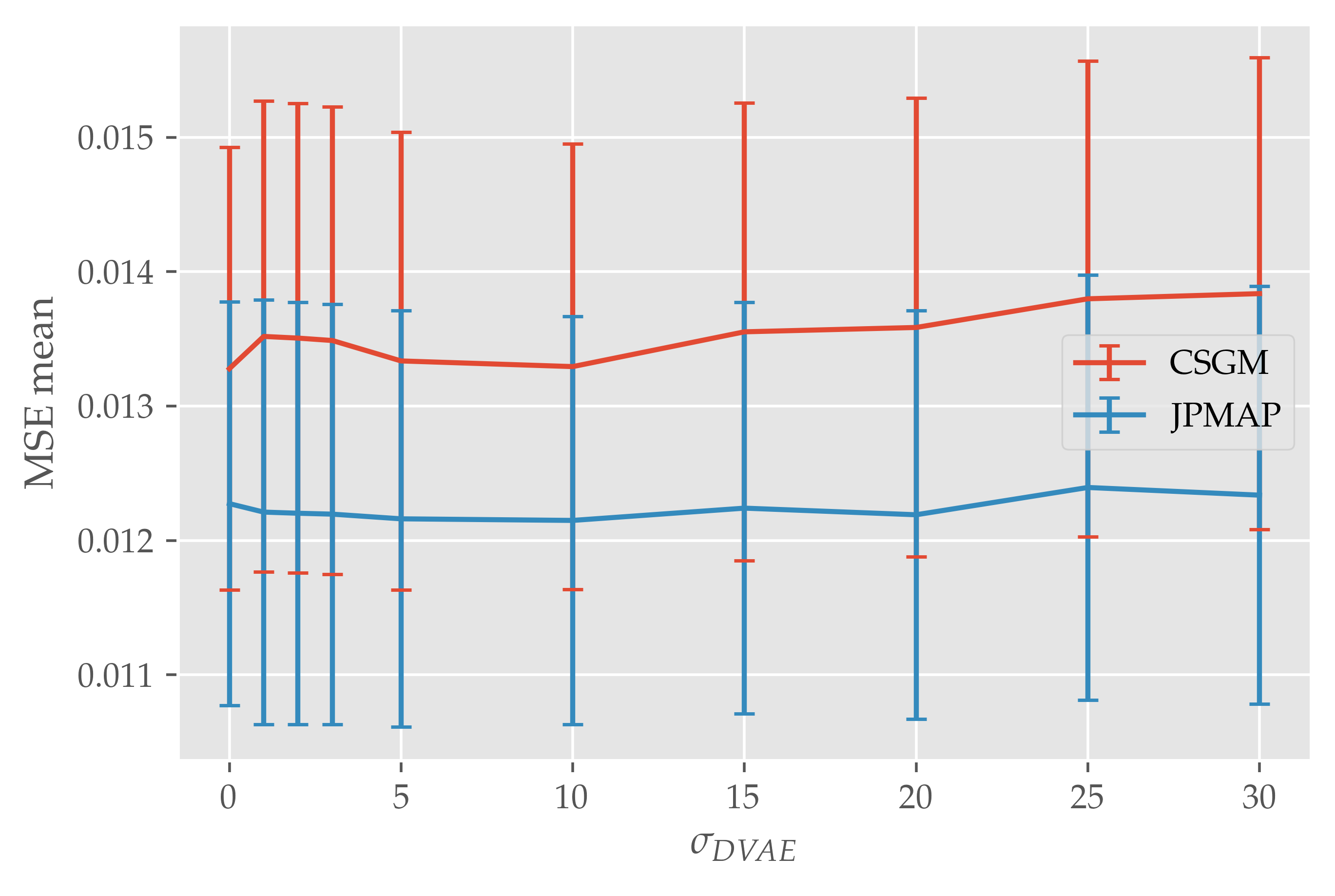}
  }
  \subfigure[Interpolation]{
  \includegraphics[width=0.3\textwidth]{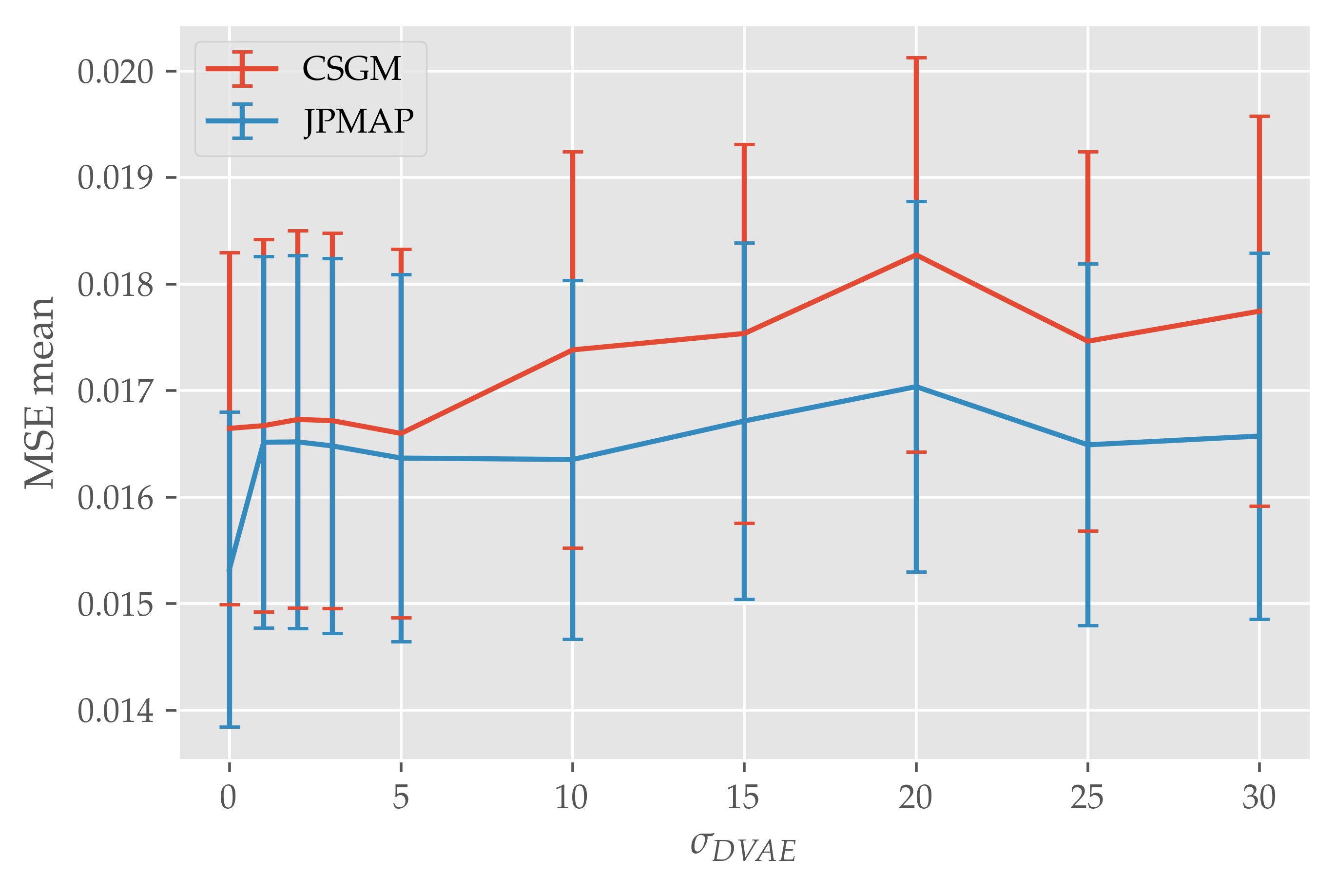}
  }
\caption{Evaluating the quality of the generative model as a function of $\sigmaDVAE$. On (a) Denoising (Gaussian noise $\sigma=150$), (b) Compressed Sensing ($\sim 10.2\%$ measurements, noise $\sigma=10$) and (c) Interpolation ($80\%$ of missing pixels, noise $\sigma=10$). Results of both algorithms are computed on a batch of 50 images and initialising on ground truth $\vx^*$ (for CSGM we use $\vz_0 = \muEncoder(\vx^*)$).}
\label{fig:DVAE-generative-quality}
\end{center}
\end{figure*}

\begin{figure*}[htbp]
\begin{center}
  \subfigure[Denoising]{
  \includegraphics[width=0.3\textwidth]{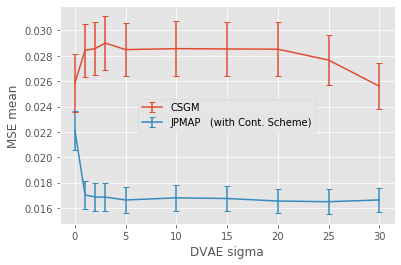}
  }
  \subfigure[Compressed Sensing]{
  \includegraphics[width=0.3\textwidth]{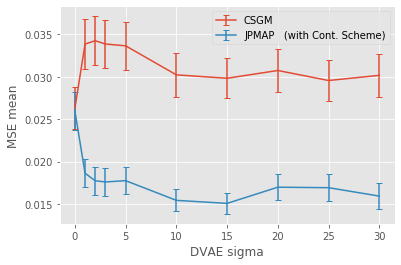}
  }
  \subfigure[Interpolation]{
  \includegraphics[width=0.3\textwidth]{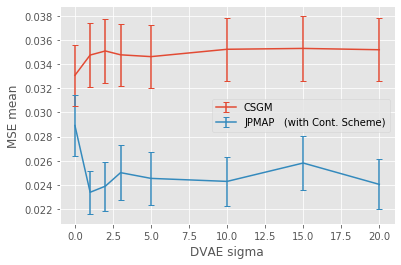}
  }
\caption{Evaluating the effectiveness of JPMAP vs CGSM as a function of $\sigmaDVAE$ (same setup of Figure~\ref{fig:DVAE-generative-quality}). Without a denoising criterion $\sigmaDVAE=0$ the JPMAP algorithm may provide wrong guesses $\z^{1}$ when applying the encoder in step 2 of Algorithm~\ref{alg:JPMAP2new}. For $\sigmaDVAE>0$ however, the alternating minimization algorithm can benefit from the robust initialization heuristics provided by the encoder, and it consistently converges to a better local optimum than the simple gradient descent in CSGM. }
\label{fig:DVAE-restoration}
\end{center}
\end{figure*}

\begin{figure*}
\centering
  \subfigure[Encoder approximation]{
  \label{fig:encoder_approximation}
  \parbox{0.22\textwidth}{\centering
  \includegraphics[height=0.19\textwidth]{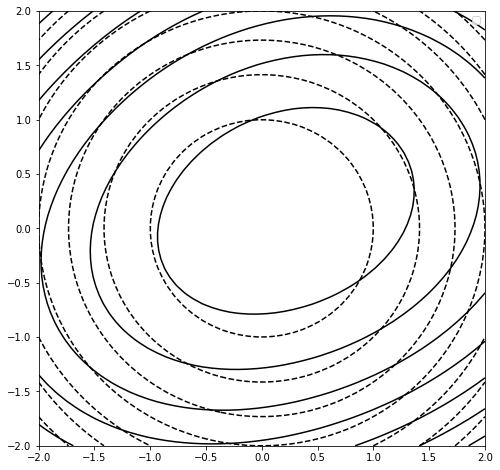}
  \includegraphics[height=0.19\textwidth]{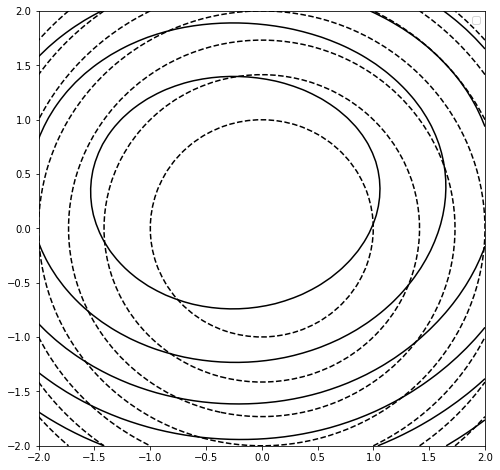}
  }
  }\hfill
  \subfigure[Decoded exact optimum]{
  \label{fig:decoded_exact_optimum}
  \parbox{0.22\textwidth}{\centering
  \includegraphics[height=0.19\textwidth]{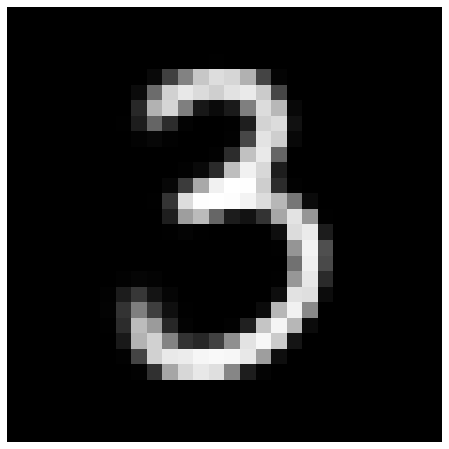}
  \includegraphics[height=0.19\textwidth]{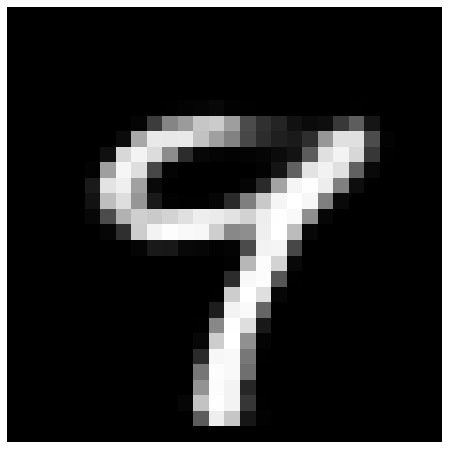}
  }
  }\hfill
  \subfigure[Decoded approx. optimum]{
  \label{fig:decoded_approximate_optimum}
  \parbox{0.22\textwidth}{\centering
  \includegraphics[height=0.19\textwidth]{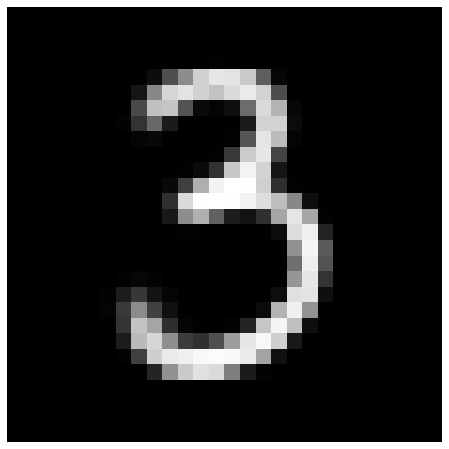}
  \includegraphics[height=0.19\textwidth]{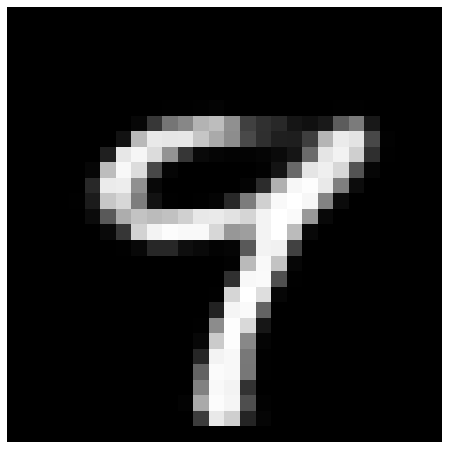}
  }
  }\hfill
  \subfigure[Difference (b)-(c)]{
  \label{fig:decoded_approximate_optimum2}
  \parbox{0.22\textwidth}{\centering
  \includegraphics[height=0.19\textwidth]{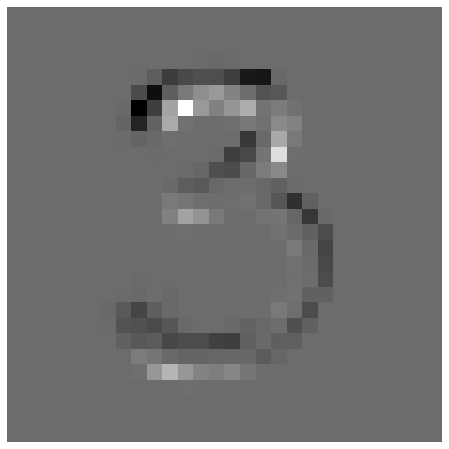}
  \includegraphics[height=0.19\textwidth]{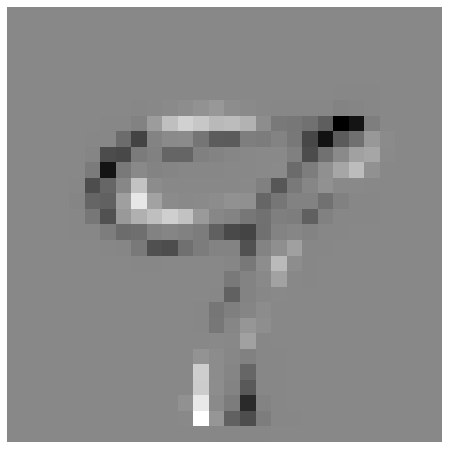}
  }
  }

  \caption{
  Encoder approximation: \emph{(a)}
  Contour plots of $-\log p_\theta(\vx|\vz) +\frac{1}{2}\|\vz\|^2$ and $-\log q_\phi(\vz|\vx)$ for a fixed $\vx$ and for a random 2D subspace in the $\vz$ domain (the plot shows $\pm 2 \SigmaEncoder^{1/2}$ around $\muEncoder$). Observe the relatively small gap between the true posterior $p_\theta(\vz|\vx)$ and its variational approximation $q_\phi(\vz|\vx)$. This figure shows some evidence of partial $\z$-convexity of $J_1$ around the minimum of $J_2$, but it does not show how far is $\z^1$ from $\z^2$.
  \emph{(b)}
  Decoded exact optimum $\vx_1 = \muDecoder\left( \arg\max_\vz p_\theta(\vx|\vz)e^{\frac{1}{2}\|\vz\|^2} \right)$.
  \emph{(c)}
  Decoded approximate optimum $\vx_2 = \muDecoder\left( \arg\max_\vz q_\phi(\vz|\vx) \right)$.
  \emph{(d)} Difference betweeen (b) and (c).
  }
  \label{fig:trained_VAE}
 \end{figure*}

 \begin{figure*}[tbh]
\begin{center}
  \subfigure[Energy evolution, initializing with $\mathcal{N}(0,I)$.]{
  \label{fig:assumption2A}
  \includegraphics[width=0.45\textwidth]{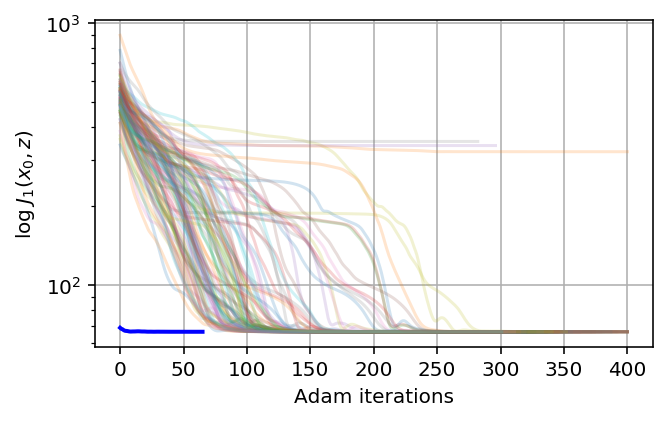}
  }
  \subfigure[Distance to the optimum at each iteration of (a).]{
  \label{fig:assumption2B}
  \includegraphics[width=0.45\textwidth]{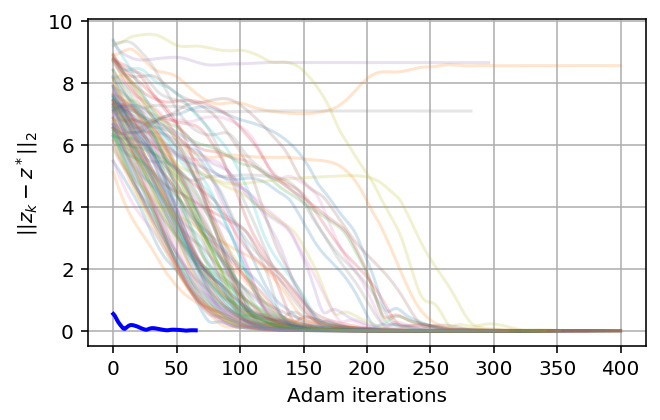}
  }
  \end{center}
  \caption{
  \emph{Effectiveness of the encoder approximation:} We take $\vx_0$ from the \emph{test} set of MNIST and minimize $J_1(\vx_0,\vz)$ with respect to $\vz$ using gradient descent from random Gaussian initializations $\vz_0$. The blue thick curve represents the trajectory if we initialize at the encoder approximation $\vz^1=\argmin_\vz J_2(\vx_0,\vz)=\muEncoder(\vx_0)$. \emph{(a)}: Plots of the energy iterates $J_1(\vx_0,\vz_k)$.
  \emph{(b)}: $\ell^2$ distances of each trajectory with respect to the global optimum $\vz^*$.
  \emph{Conclusion:} Observe that the encoder initialization allows much faster convergence both in energy and in $\vz$, and it avoids the few random initializations that lead to a wrong stationary point different from the unique global minimizer.
  }
  \label{fig:encoder-init}
 \end{figure*}

 \subsection{Need to train the VAE with a denoising criterion}\label{sec:denoising-criterion}

It should be noted that when training our Variational Autoencoder we should be more careful than usual. Indeed in the most widespread applications of VAEs they are only used as a generative model or as a way to interpolate between images that are close to $\mathcal{M}$, \emph{i.e.} the image of the generator $\muDecoder$.
For such applications it is sufficient to train the encoder $\muEncoder,\,\SigmaEncoder$ on a training set that is restricted to $\mathcal{M}$.\\

In our case however, we need the encoder to provide sensible values even when its input $\x$ is quite far away from $\mathcal{M}$: the encoder has to actually fulfill two functions at the same time:
\begin{enumerate}
\item (Approximately) project $\x$ to its closest point in $\mathcal{M}$, and
\item compute the encoding of this projected value (which should be the same as the encoding of the original $\x$.
\end{enumerate}

Traditional VAE training procedures do not ensure that the encoder generalizes well to $\x \not\in \mathcal{M}$. In order to ensure this generalization ability we adopt the training procedure of the DVAE (Denoising VAE) proposed by~\citet{Im2017}, which consists in adding various realizations of zero-mean Gaussian noise of variance $\sigmaDVAE^{2}$ to the samples $\x$ presented to the encoder, while still requiring the decoder to match the noiseless value, \emph{i.e.} we optimize the parameters in such a way that
\begin{equation}
    \muDecoder(\muEncoder(\tilde \x)) \approx \x
    \label{eq:denoising_criterion}
\end{equation}
where $\tilde\x = \x + \sigmaDVAE \varepsilon$ and $\varepsilon \sim \mathcal{N}(0,I)$ for all $\x$ in the training set and for many realizations of $\varepsilon$.\\

More specifically, if we take a corruption model $p(\tilde \vx\,|\,\vx)$ like above, it can be shown~\cite{Im2017} that
\begin{equation}
 \mathcal{\tilde L}_{\theta,\phi}(\vx) = \mathbb{E}_{p(\tilde \vx|\vx)} \left[\mathbb{E}_{\qphi(\vz|\tilde \vx)} [\log \ptheta(\vx|\vz)] - KL(\qphi(\vz|\tilde \vx)\;||\; \PDF{\Z}{\vz})\right]
 \label{eq:dvae_loss}
\end{equation}
is an alternative ELBO of \eqref{eq:vae_loss}. In practice, using Monte Carlo for estimating the expectation $\mathbb{E}_{p(\tilde \vx|\vx)}$ in \eqref{eq:dvae_loss}, we only need to add noise to $\x$ before passing it to the encoder $q_\phi$ during training, as mentioned in \eqref{eq:denoising_criterion}.

Our experiments with this denoising criterion confirm the observation by~\citet{Im2017} that it does not degrade the quality of the generative model, as long as $\sigmaDVAE$ is not too large (see Figure~\ref{fig:DVAE-generative-quality}). As a side benefit, however, we obtain a more robust encoder that generalizes well for values of $\x$ that are not in $\mathcal{M}$ but within a neighbourhood of size $\approx \sigmaDVAE$ around $\mathcal{M}$. This side benefit, which was not the original intention of the DVAE training algorithm in \citep{Im2017} is nevertheless crucial for the success of our algorithm as demonstrated in Figure~\ref{fig:DVAE-restoration}.
The same figure shows that as long as $\sigmaDVAE\geq 5$ its value does not significantly affect the performance. In the sequel we use $\sigmaDVAE=15$.

\begin{figure*}[htbp]
\begin{center}
  \subfigure{
  \includegraphics[width=0.9\textwidth]{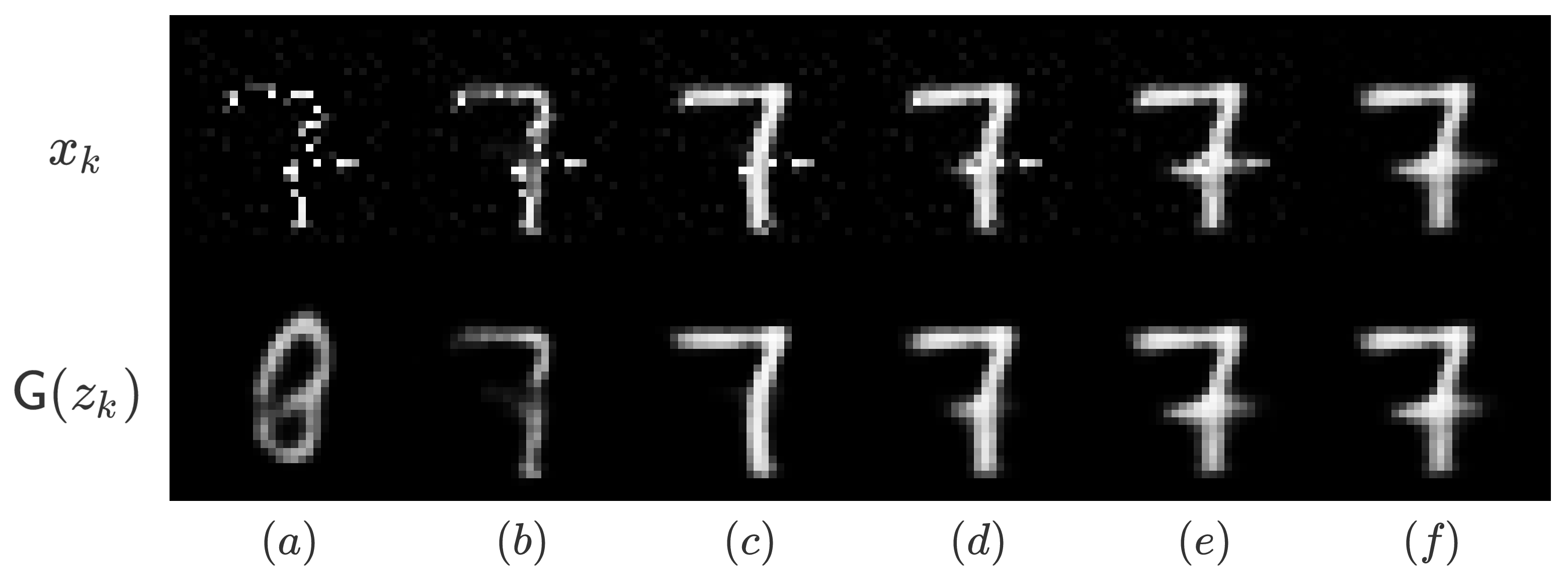}
  }\\
  \subfigure{%
  \includegraphics[width=0.45\textwidth]{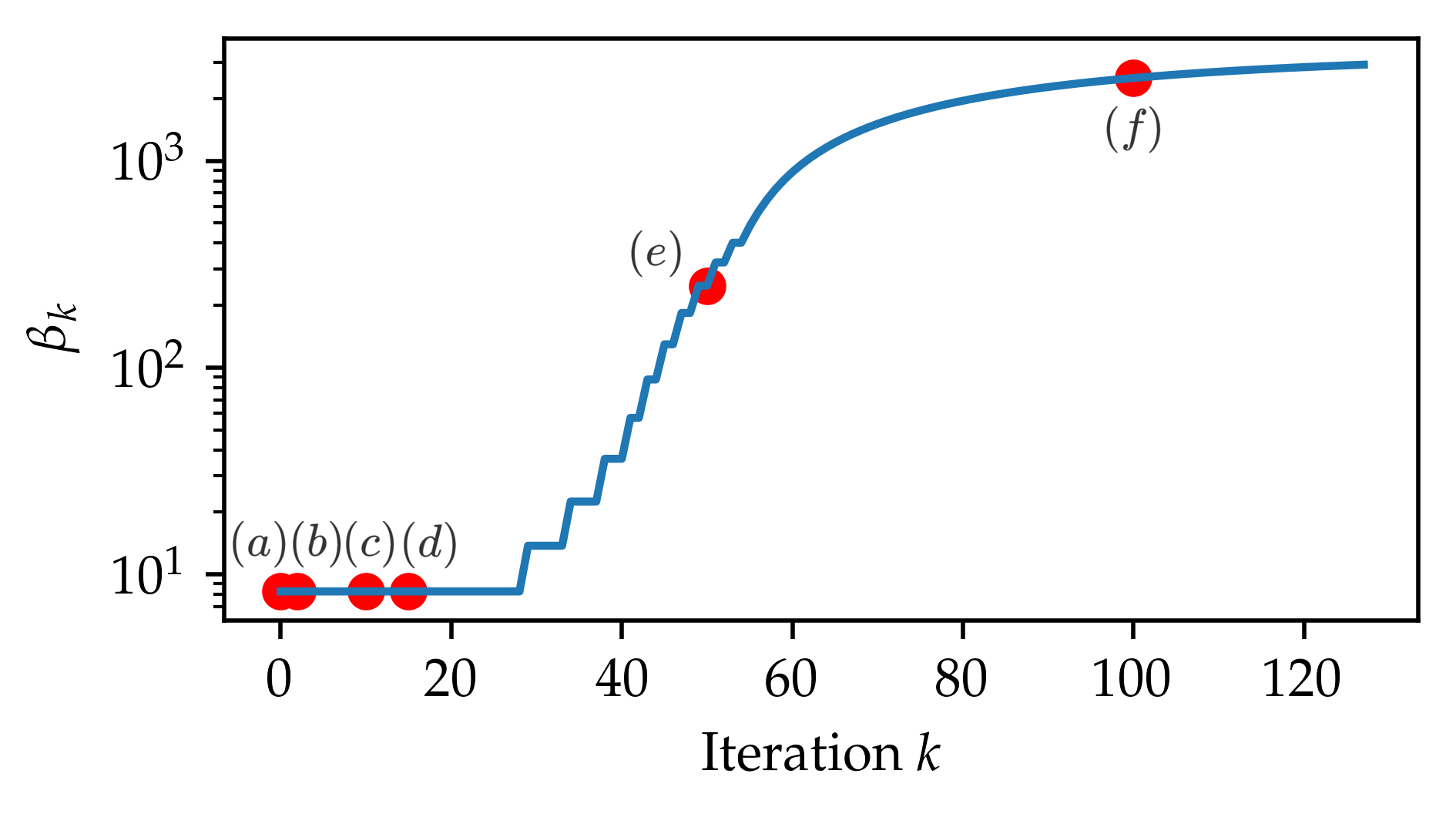}
  }
  \subfigure{%
  \includegraphics[width=0.45\textwidth]{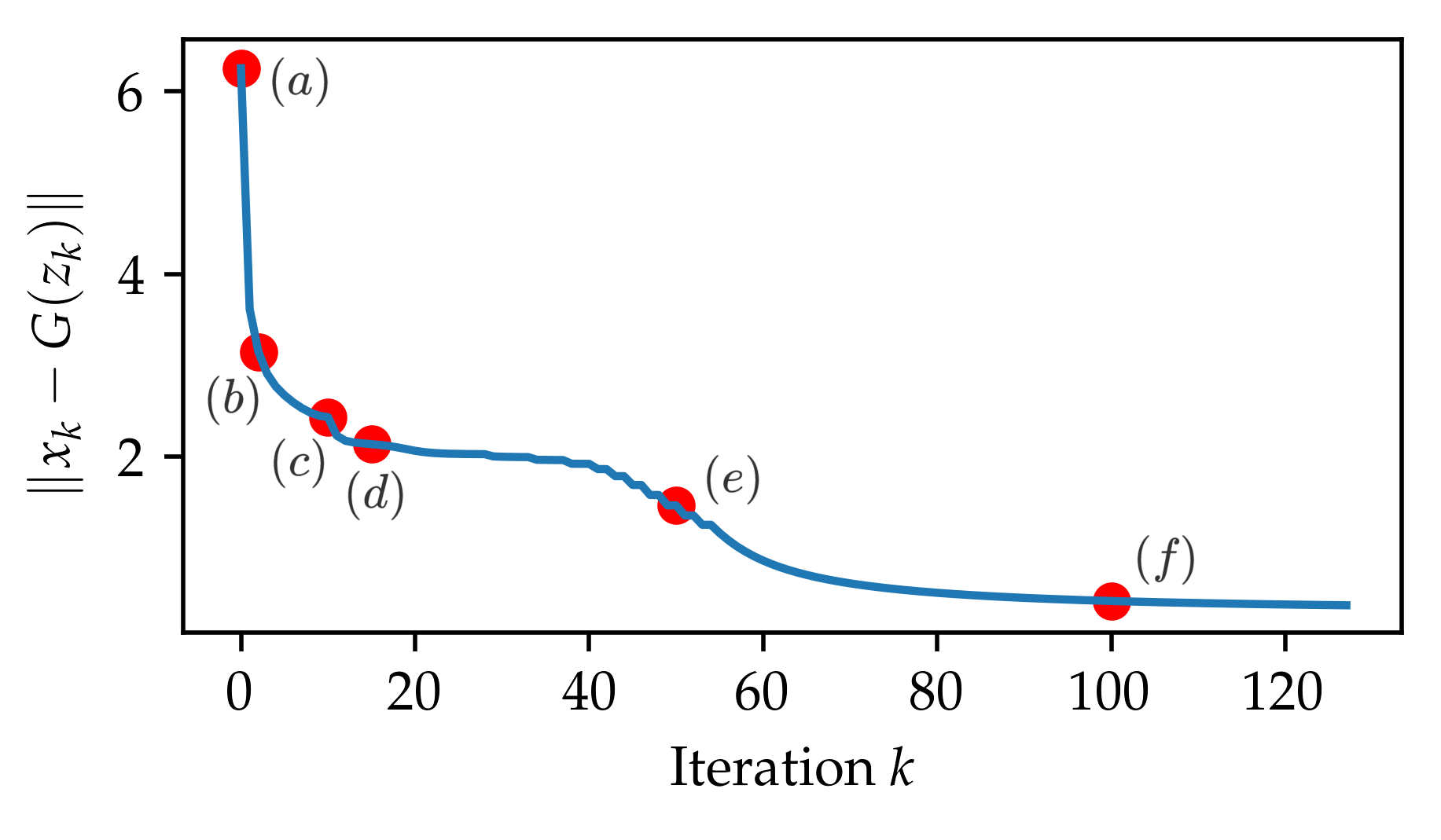}
  }
\caption{\emph{Evolution of Algorithm~\ref{alg:MAPzCS}.} In this interpolation example, JPMAP starts with the initialization in $(a)$. During first iterations $(b)-(d)$ where $\beta_k$ is small, $\vx_k$ and $\generator(\vz_k)$ start loosely approaching each other at a coarse scale, and $\vx_k$ only fills missing pixels with the ones of $\generator(\vz_k)$ (in particular the noise of $\y$ is still present). By increasing $\beta_k$ in $(e)-(f)$ we enforce $\|\generator(\z_k)-\x_k\|^2 \leq \varepsilon$. Here we set $\epsilon=\left(\frac{3}{255}\right)^2 d$, that is, MSE of 3 gray levels.
}
\label{fig:evolution_JPMAP}
\end{center}
\end{figure*}

\begin{figure*}[htbp]
\begin{center}
  \subfigure[Denoising (PSNR)]{
  \includegraphics[width=0.45\textwidth]{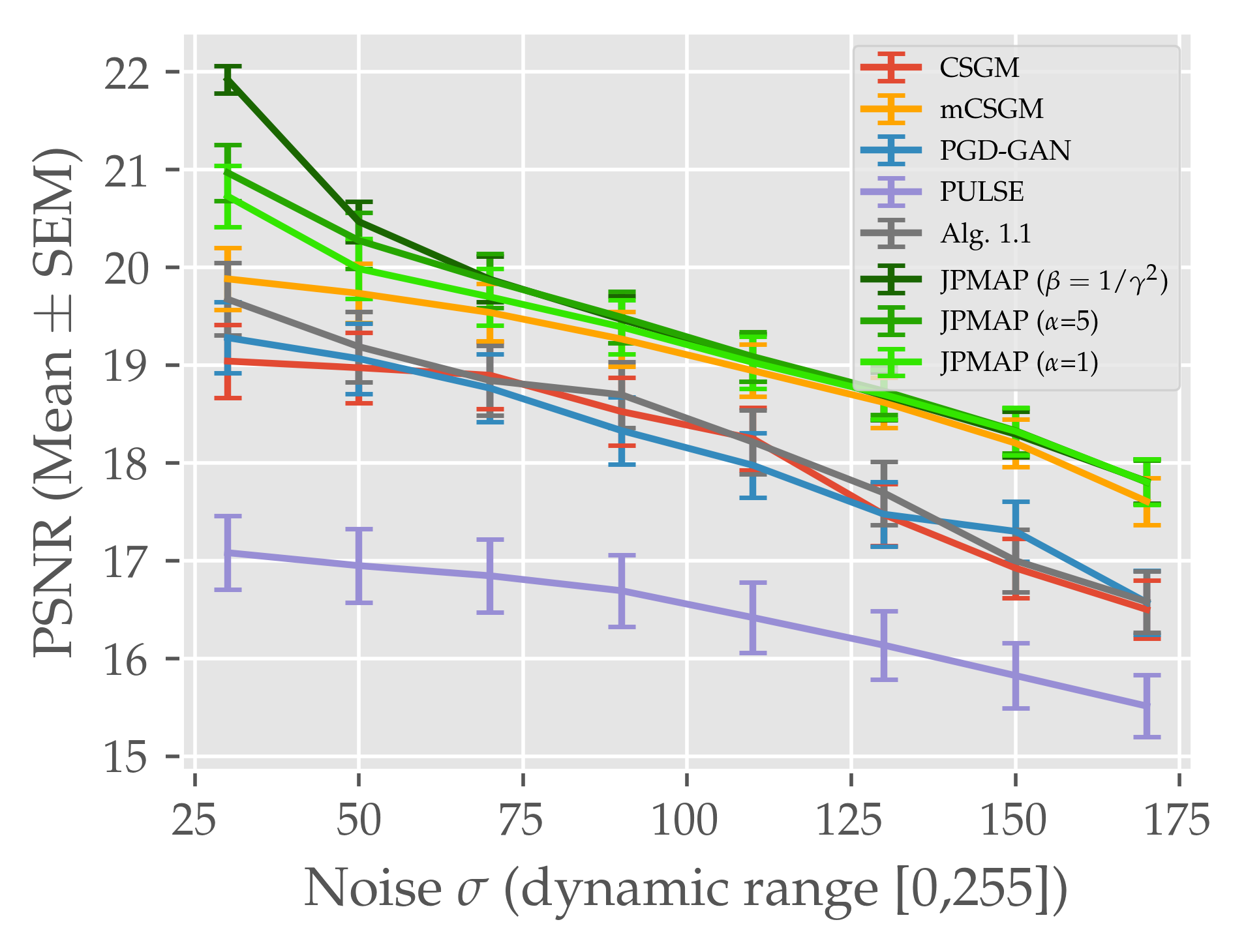}
  }
  \subfigure[Denoising (LPIPS)]{
  \includegraphics[width=0.45\textwidth]{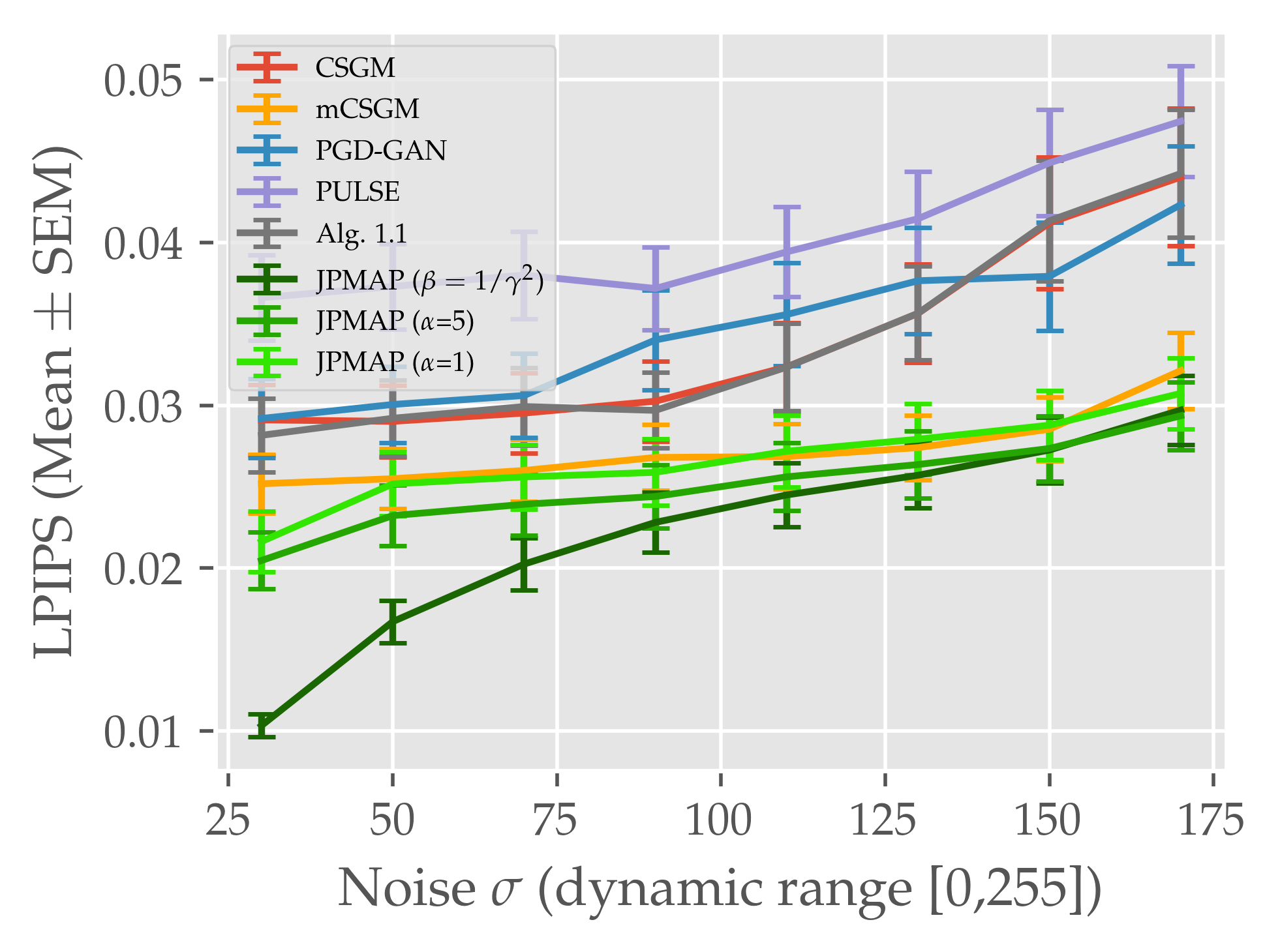}
  }\\
  \subfigure[Compressed Sensing (PSNR)]{
  \includegraphics[width=0.45\textwidth]{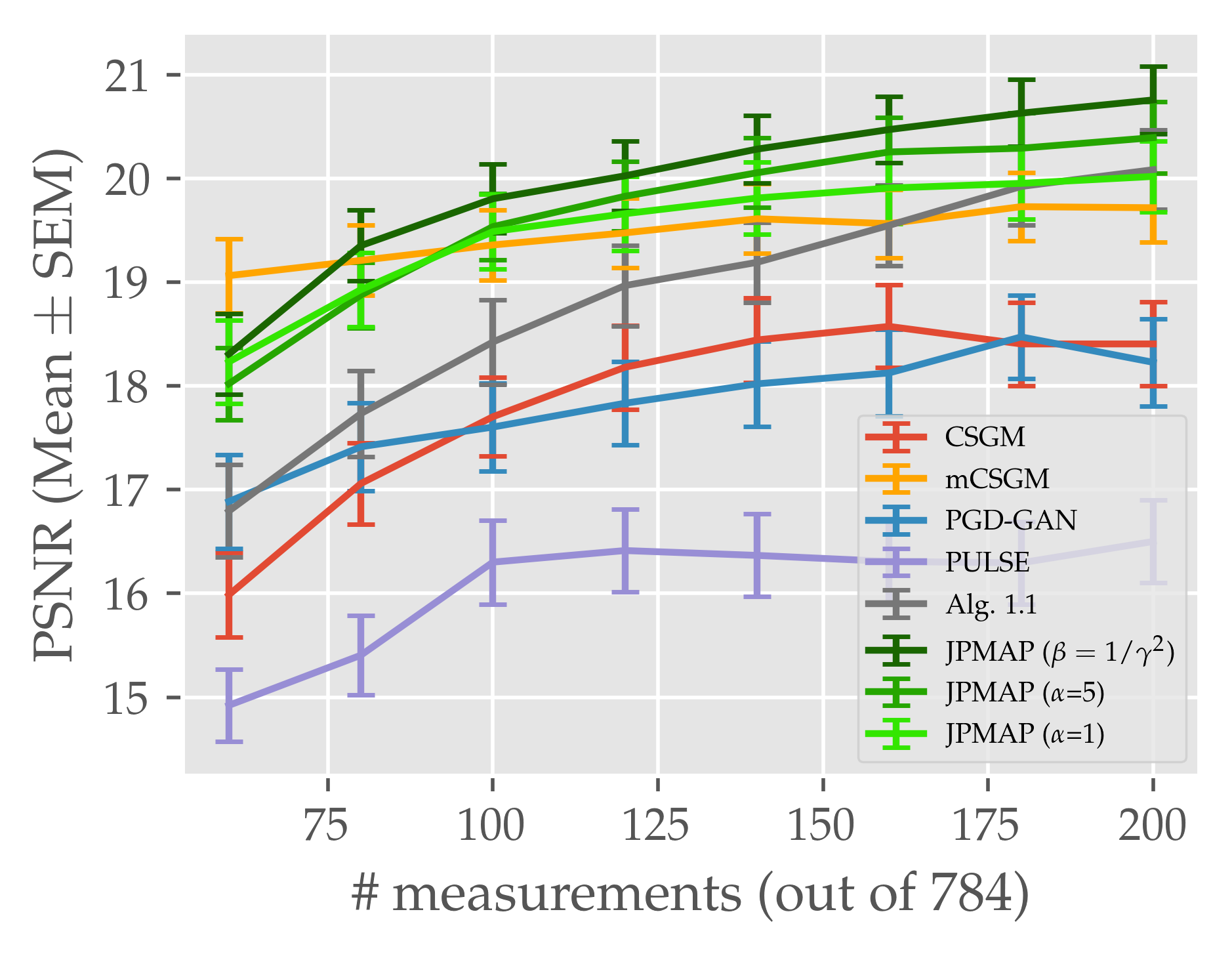}
  }
  \subfigure[Compressed Sensing (LPIPS)]{
  \includegraphics[width=0.45\textwidth]{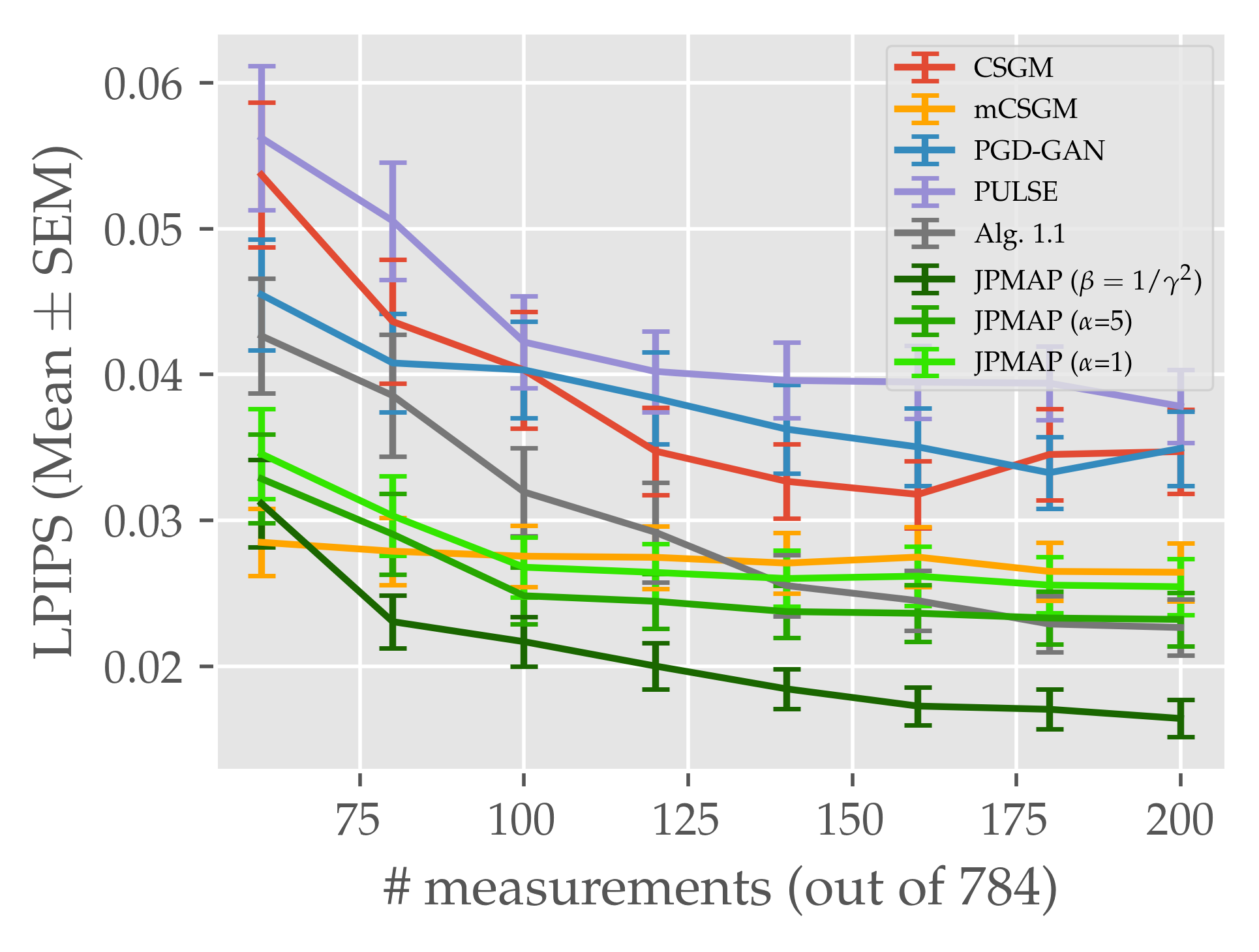}
  }\\
  \subfigure[Interpolation (PSNR)]{
  \includegraphics[width=0.45\textwidth]{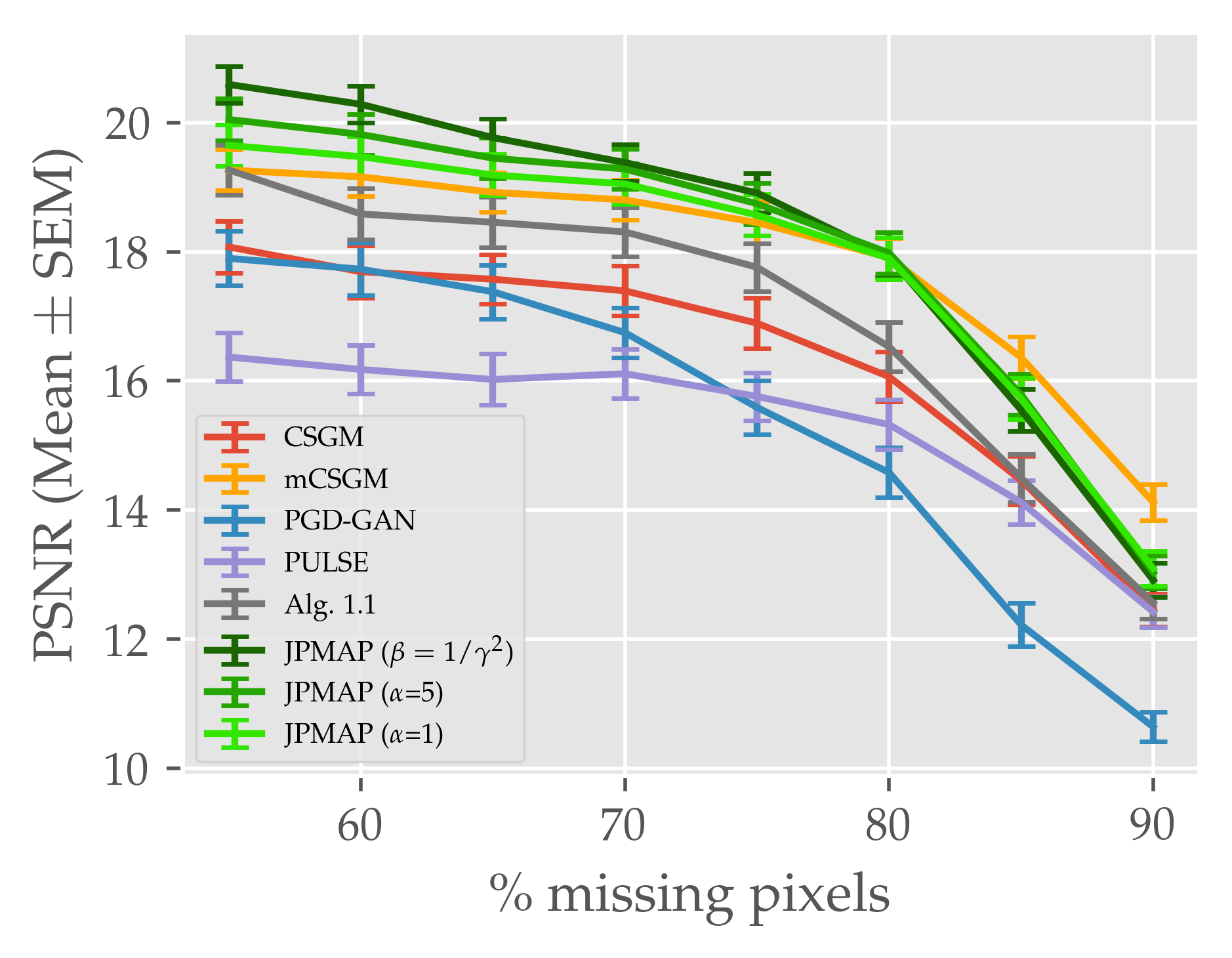}
  }
  \subfigure[Interpolation (LPIPS)]{
  \includegraphics[width=0.45\textwidth]{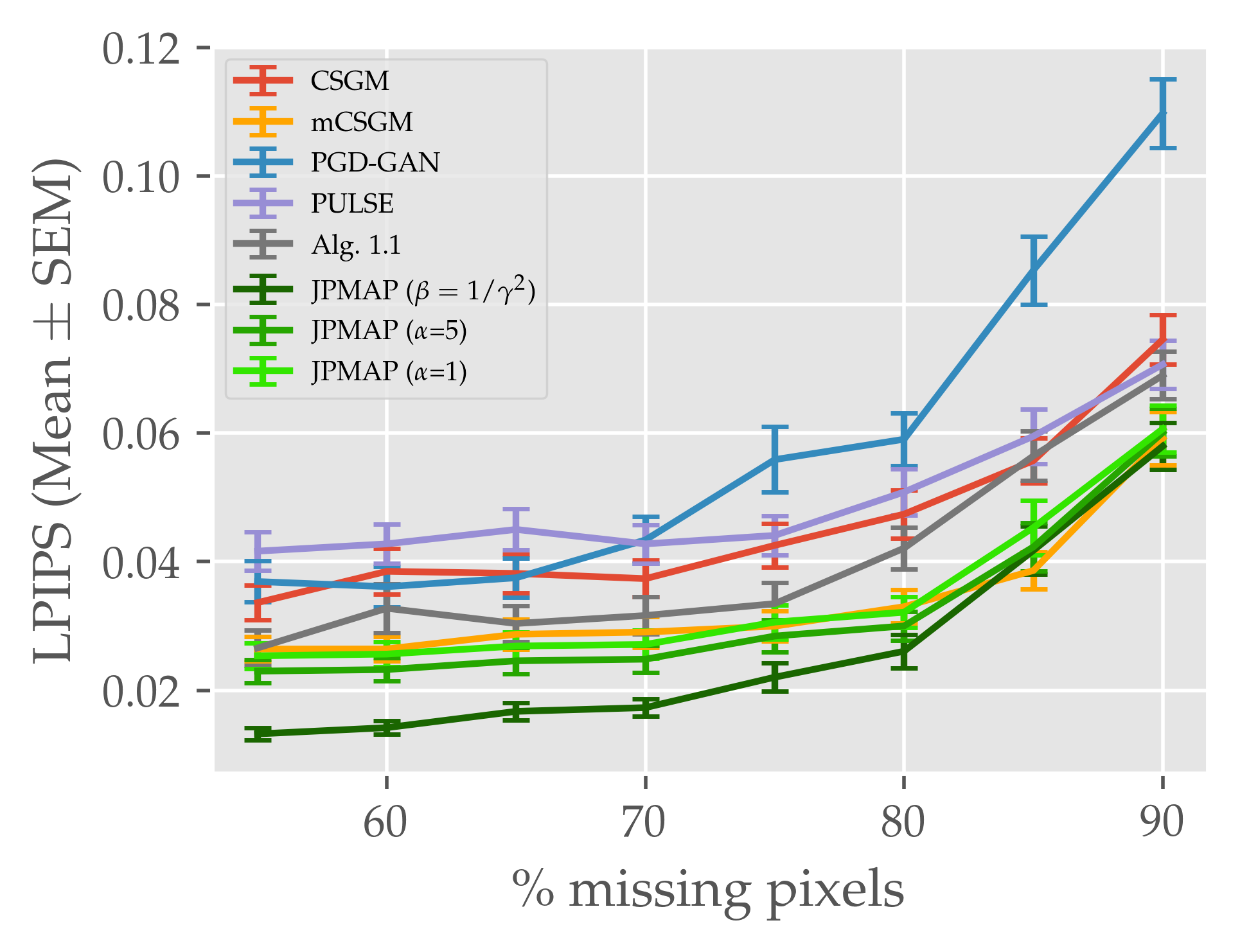}
  }
\caption{\emph{Denoising, Compressed Sensing and Interpolation}: Evaluating the effectiveness of Algorithm \ref{alg:JPMAP3new} (fixed $\beta$) and Algorithm  \ref{alg:MAPzCS} for different values of $\epsilon=\left(\frac{\alpha}{255}\right)^2 n$, with $\sigmaDVAE = 15$ (metrics were computed on a batch of 100 test images). For PSNR, higher is better and for LPIPS, lower is better. For comparison we provide the results of the baselines introduced in Section~\ref{sec:baseline_algorithms} (namely, Algorithm~\ref{alg:MAPz-splitting}, CSGM \cite{bora2017compressed}, mCSGM (CSGM with restarts), PGD-GAN \cite{shah2018solving} and PULSE \citep{Menon2020}.)}
\label{fig:DVAE-opt-restoration1}
\end{center}
\end{figure*}

\begin{figure*}[htbp]
\begin{center}
  \subfigure[Deblurring (PSNR)]{
  \includegraphics[width=0.45\textwidth]{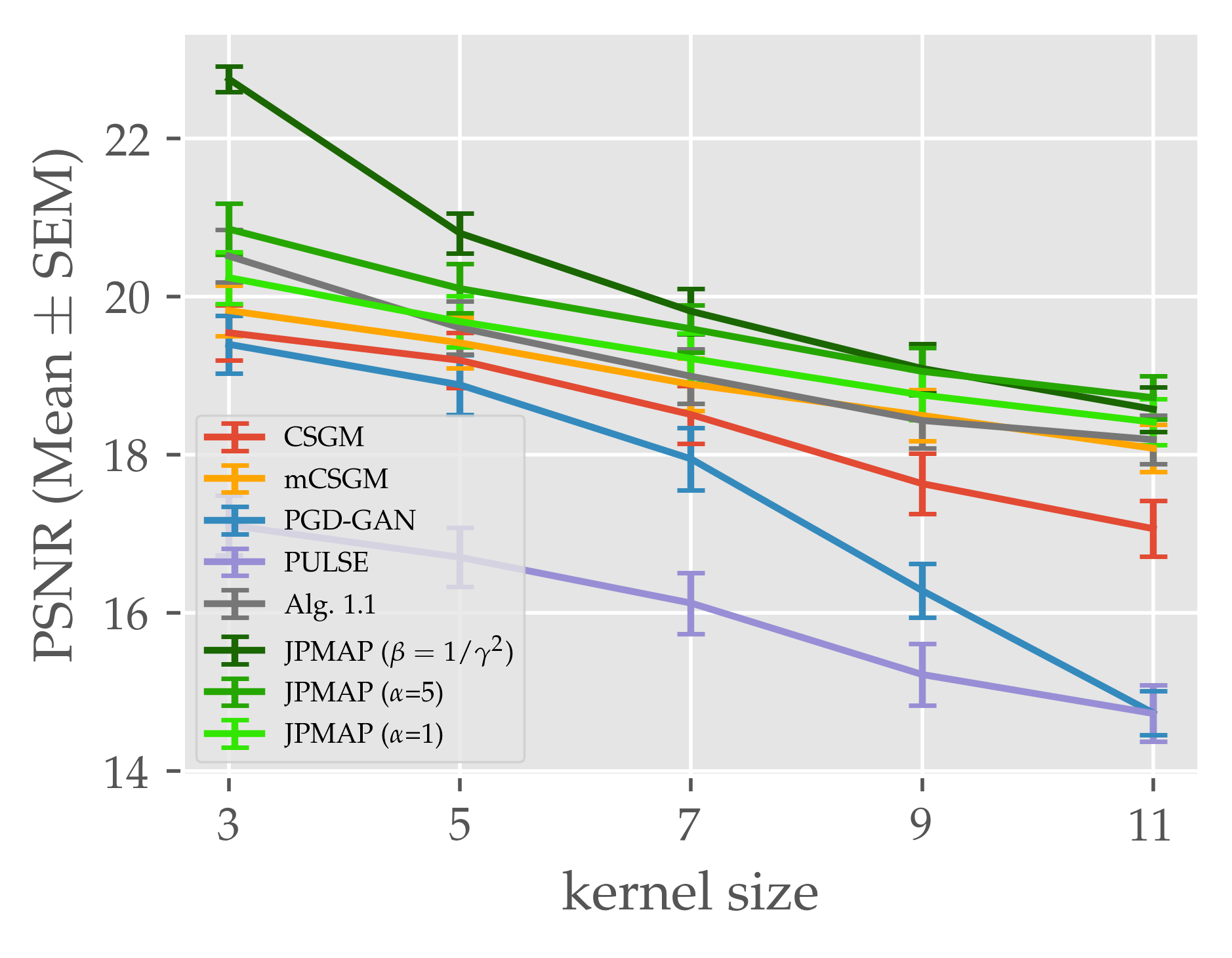}
  }
  \subfigure[Deblurring (LPIPS)]{
  \includegraphics[width=0.45\textwidth]{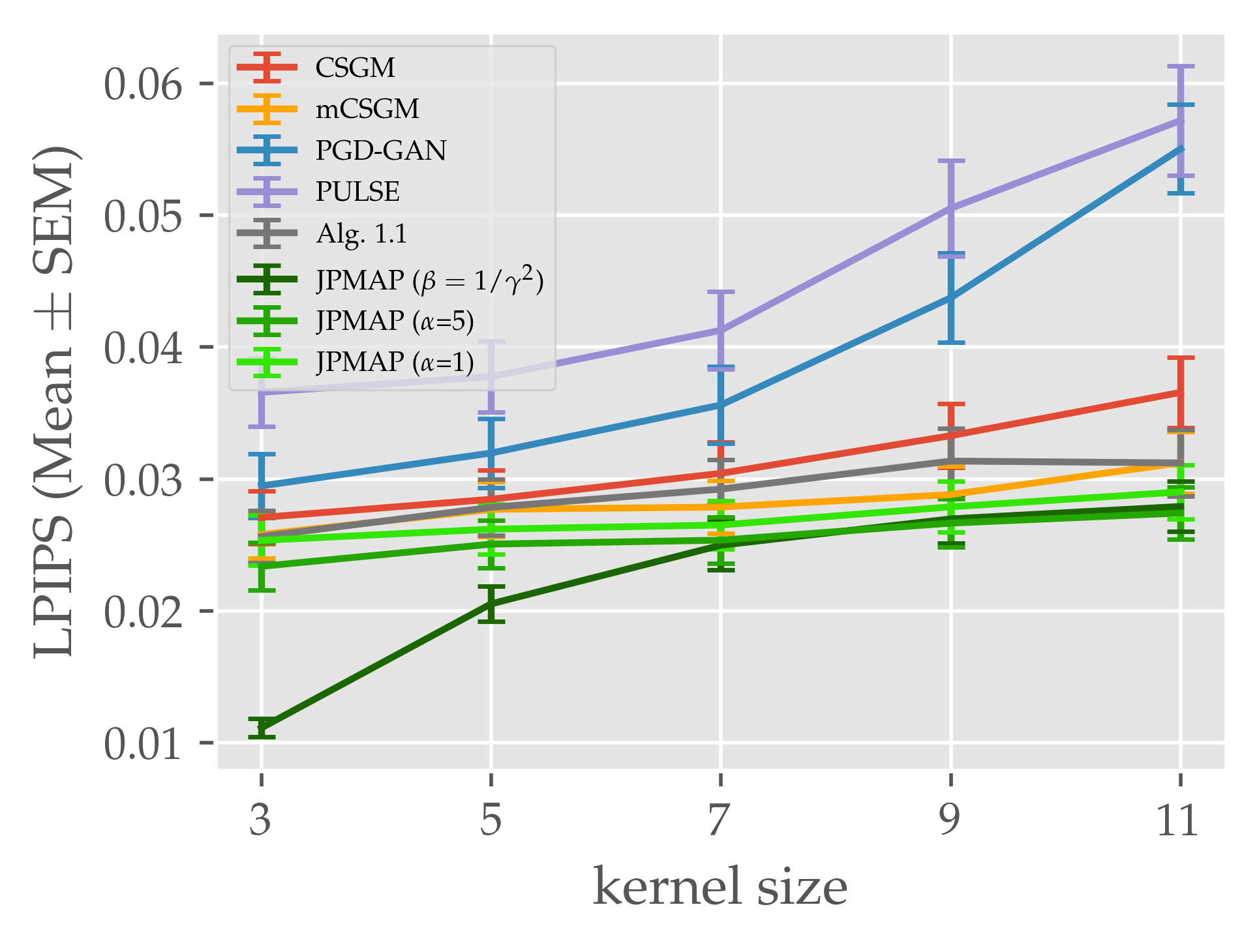}
  }
  \subfigure[Super-resolution (PSNR)]{
  \includegraphics[width=0.45\textwidth]{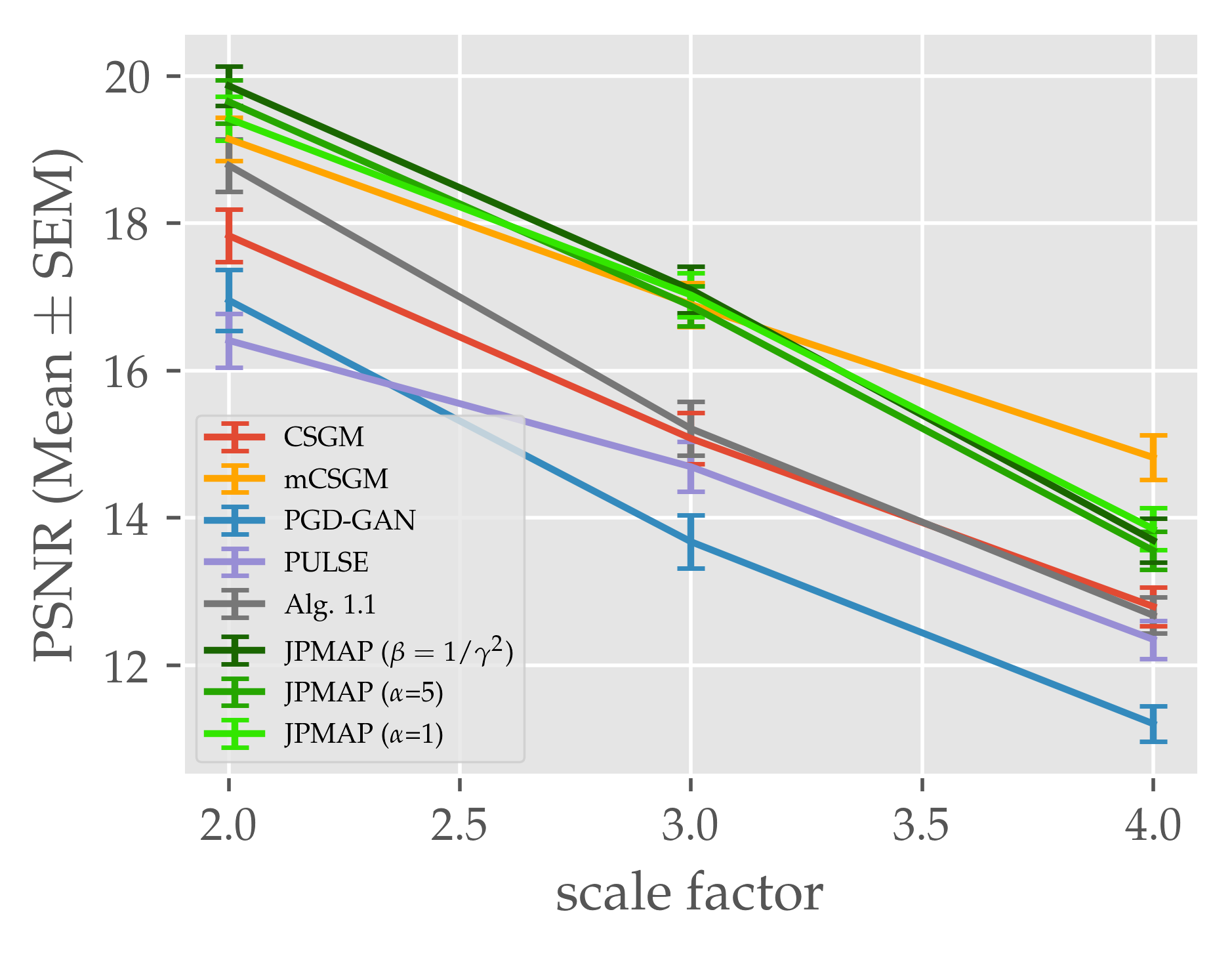}
  }
  \subfigure[Super-resolution (LPIPS)]{
  \includegraphics[width=0.45\textwidth]{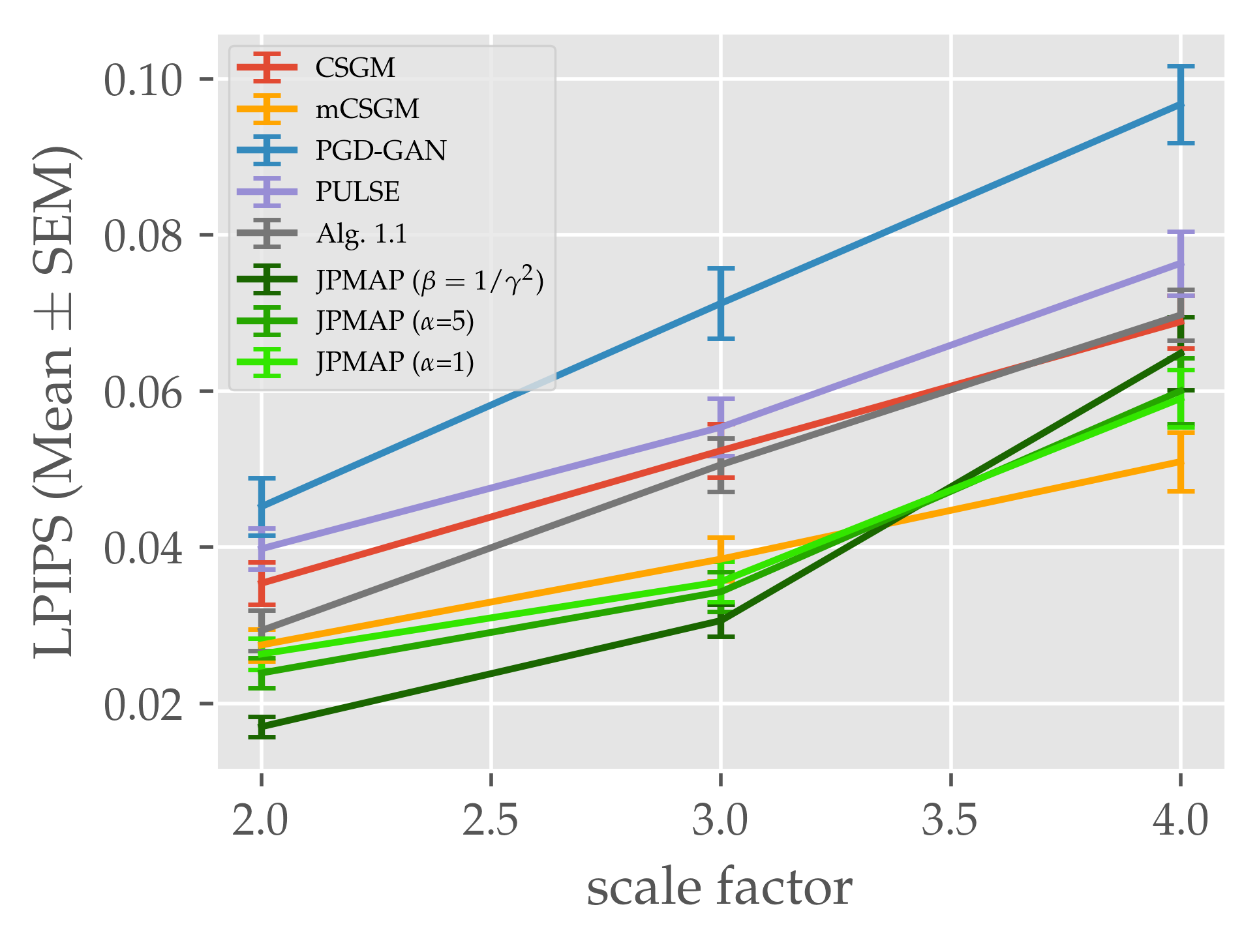}
  }
\caption{\emph{Deblurring and Super-resolution}: Evaluating the effectiveness of Algorithm \ref{alg:JPMAP3new} (fixed $\beta$) and Algorithm  \ref{alg:MAPzCS} for different values of $\epsilon=\left(\frac{\alpha}{255}\right)^2 n$, with $\sigmaDVAE = 15$ (metrics were computed on a batch of 100 test images). For PSNR, higher is better and for LPIPS, lower is better. For comparison we provide the results of the baselines introduced in Section~\ref{sec:baseline_algorithms} (namely, Algorithm~\ref{alg:MAPz-splitting}, CSGM \cite{bora2017compressed}, mCSGM (CSGM with restarts), PGD-GAN \cite{shah2018solving} and PULSE \citep{Menon2020}.)}
\label{fig:DVAE-opt-restoration2}
\end{center}
\end{figure*}

\begin{figure*}[htbp]
\begin{center}
  \subfigure[Interpolation ($p=75$)]{
  \includegraphics[width=0.45\textwidth]{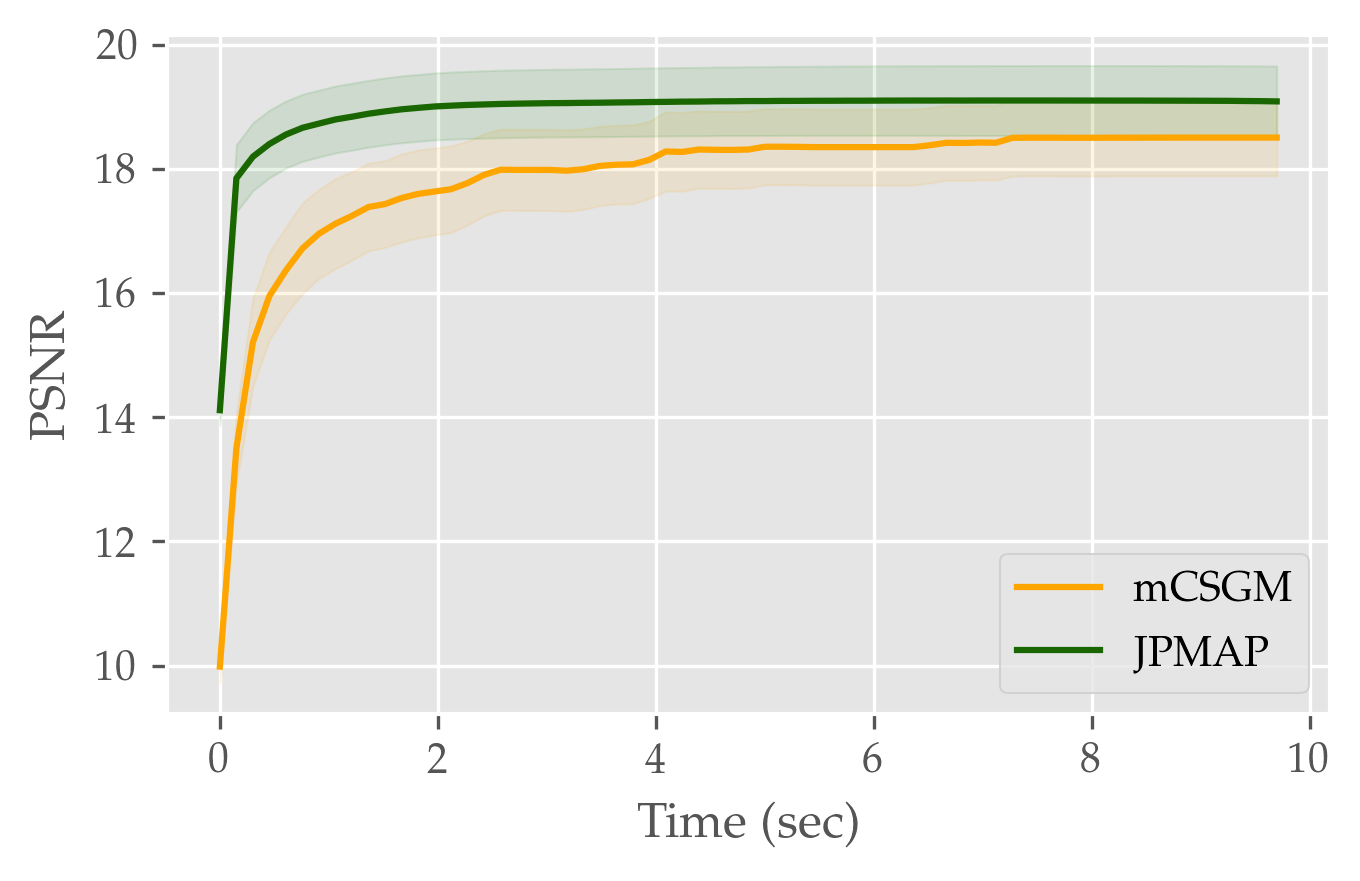}
  }
  \subfigure[Example run]{
  \includegraphics[width=0.45\textwidth]{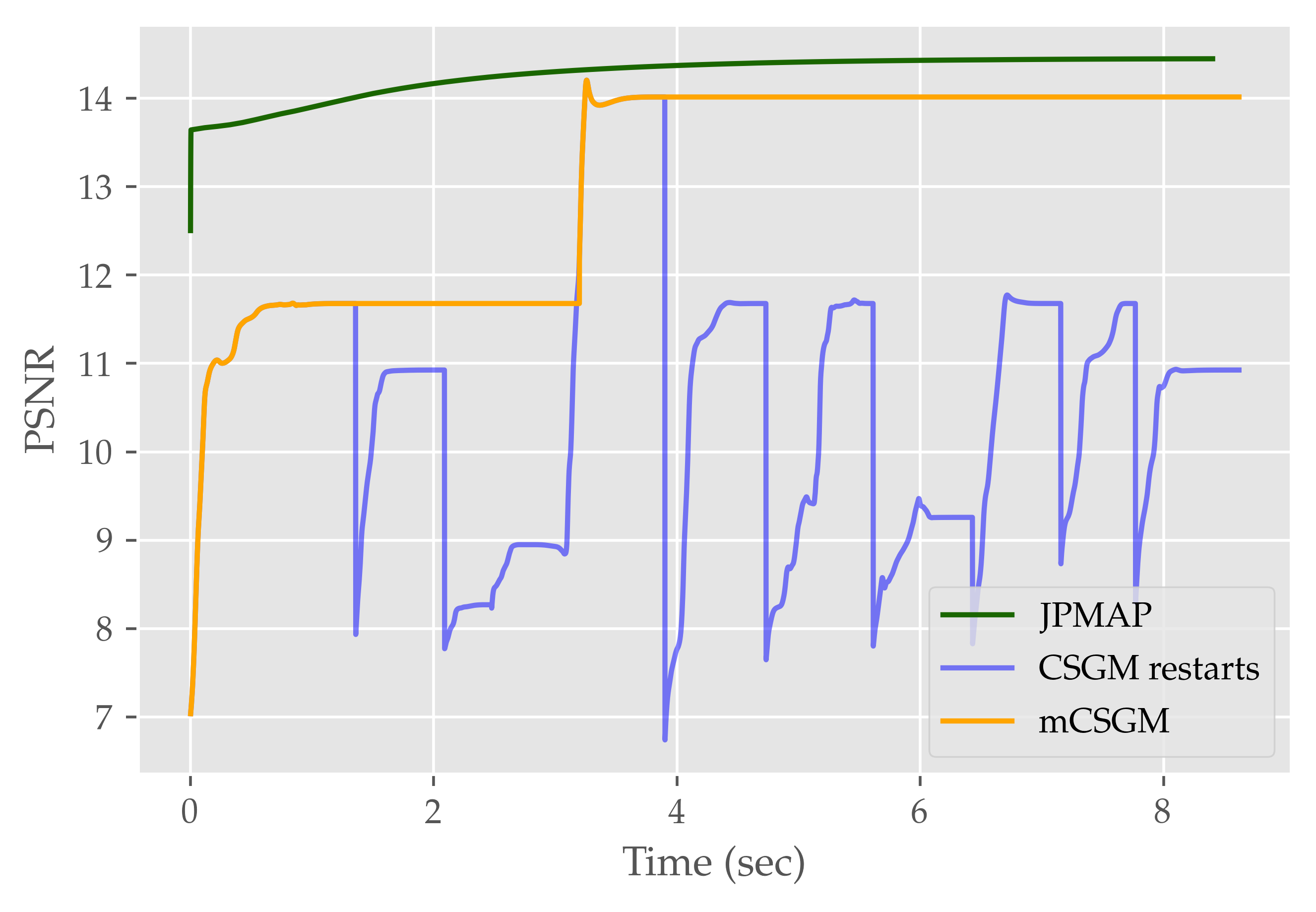}
  }
  \subfigure[Interpolation ($p=90$)]{
  \includegraphics[width=0.45\textwidth]{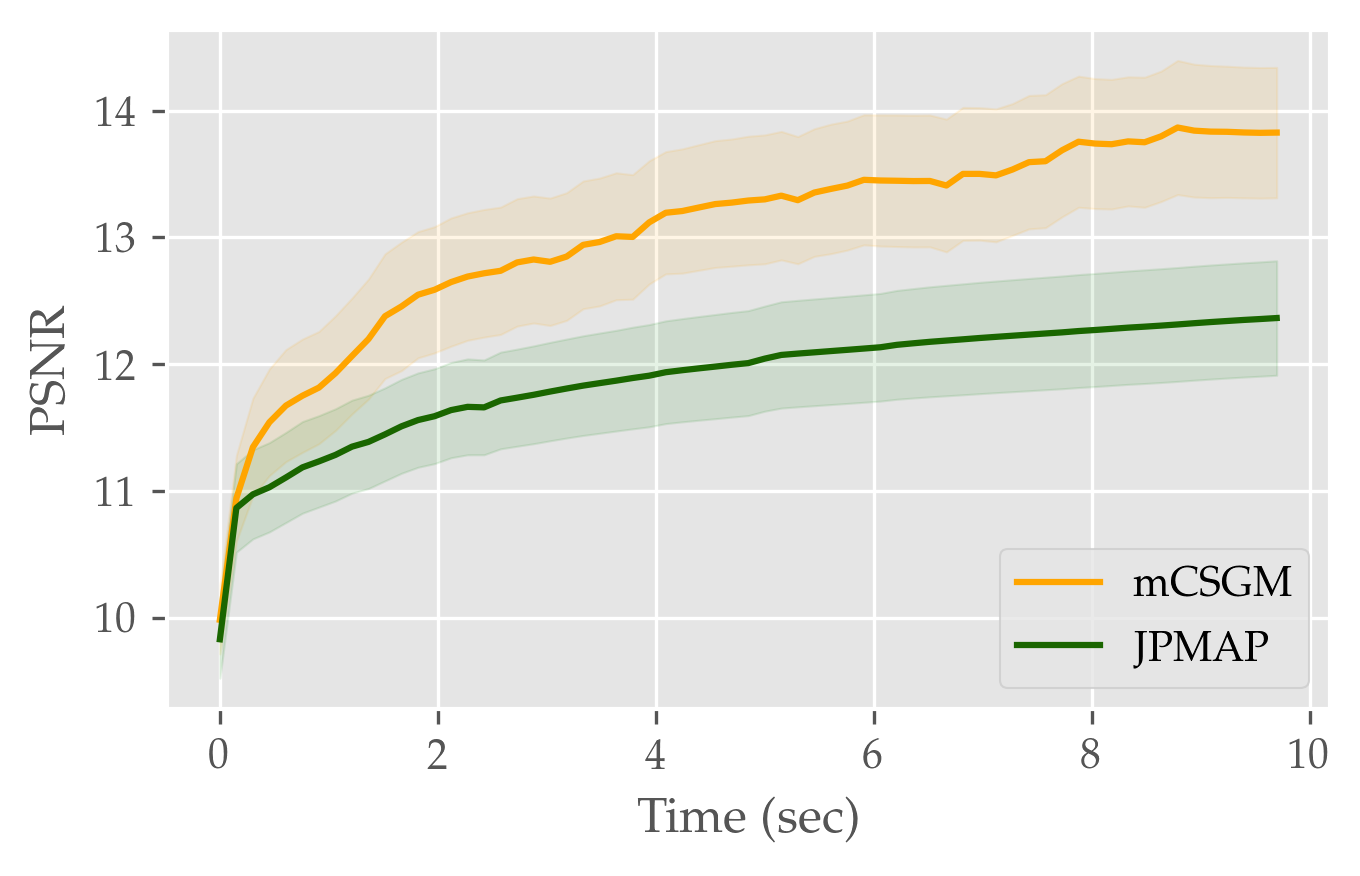}
  }
  \subfigure[Example run]{
  \includegraphics[width=0.45\textwidth]{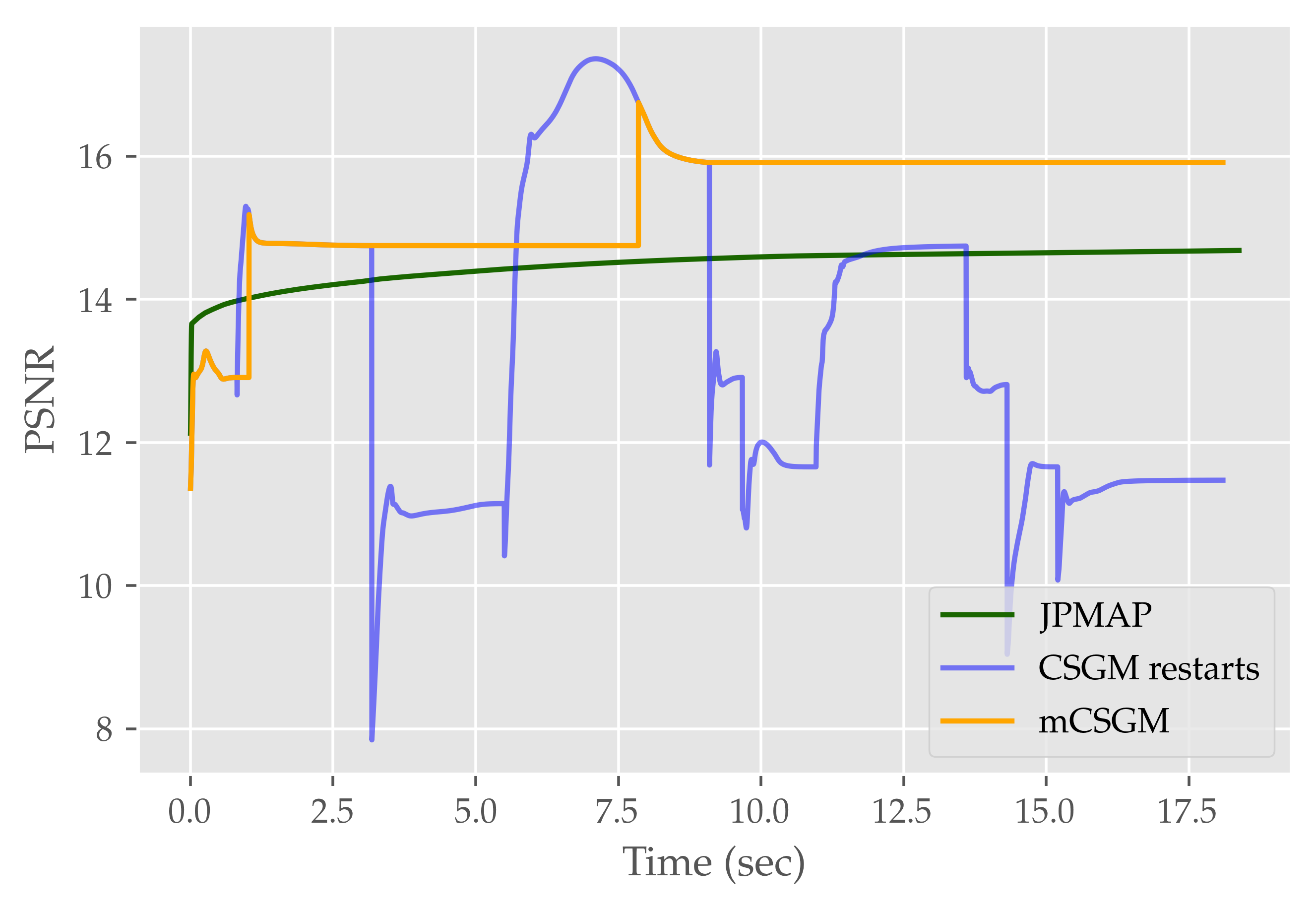}
  }
\caption{\emph{Time/PSNR comparison between mCSGM and JPMAP.} \emph{Left}: Confidence intervals (for a batch of 100 random experiments) for PSNR vs computing time for both algorithms on the interpolation problem with $p\%$ of missing pixels with noise std $\sigma=10/255$. \emph{Right}: Detailed view of one of the 100 random experiments on the left. Blue lines represent $m=10$ random restarts of CSGM and the orange line is the PSNR of the best $\vz_k$ at iteration $k$ of mCSGM as measured by~\eqref{eq:MAPz}.}
\label{fig:timePSNR_interpolation}
\end{center}
\end{figure*}

\begin{figure*}[htbp]
\centering
  \subfigure[Results on interpolation]{
  \label{fig:results_interpolation}
  \includegraphics[width=0.75\textwidth]{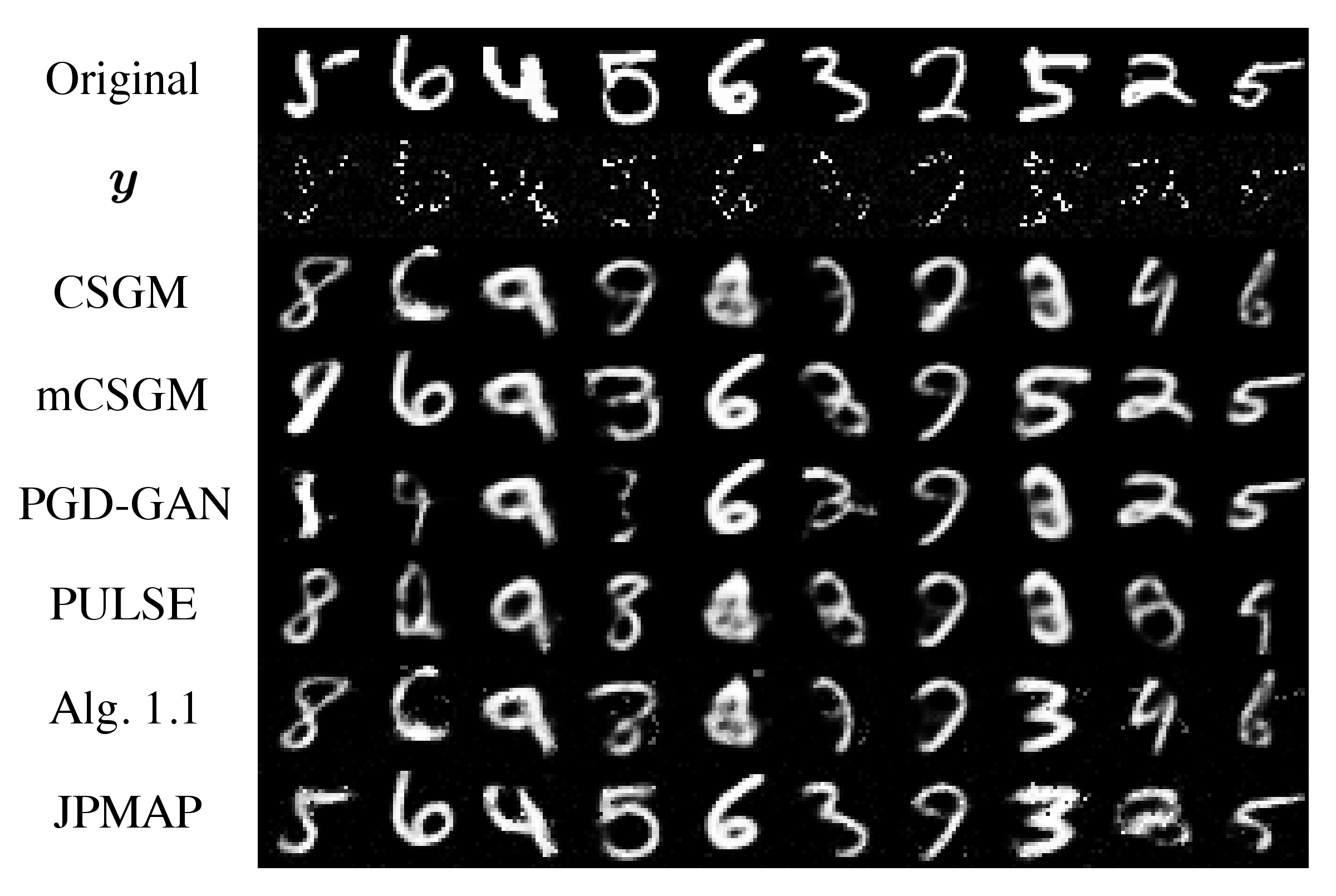}
  }
  \subfigure[Results on deblurring.]{
  \label{fig:results_deblurring}
    \includegraphics[width=0.75\textwidth]{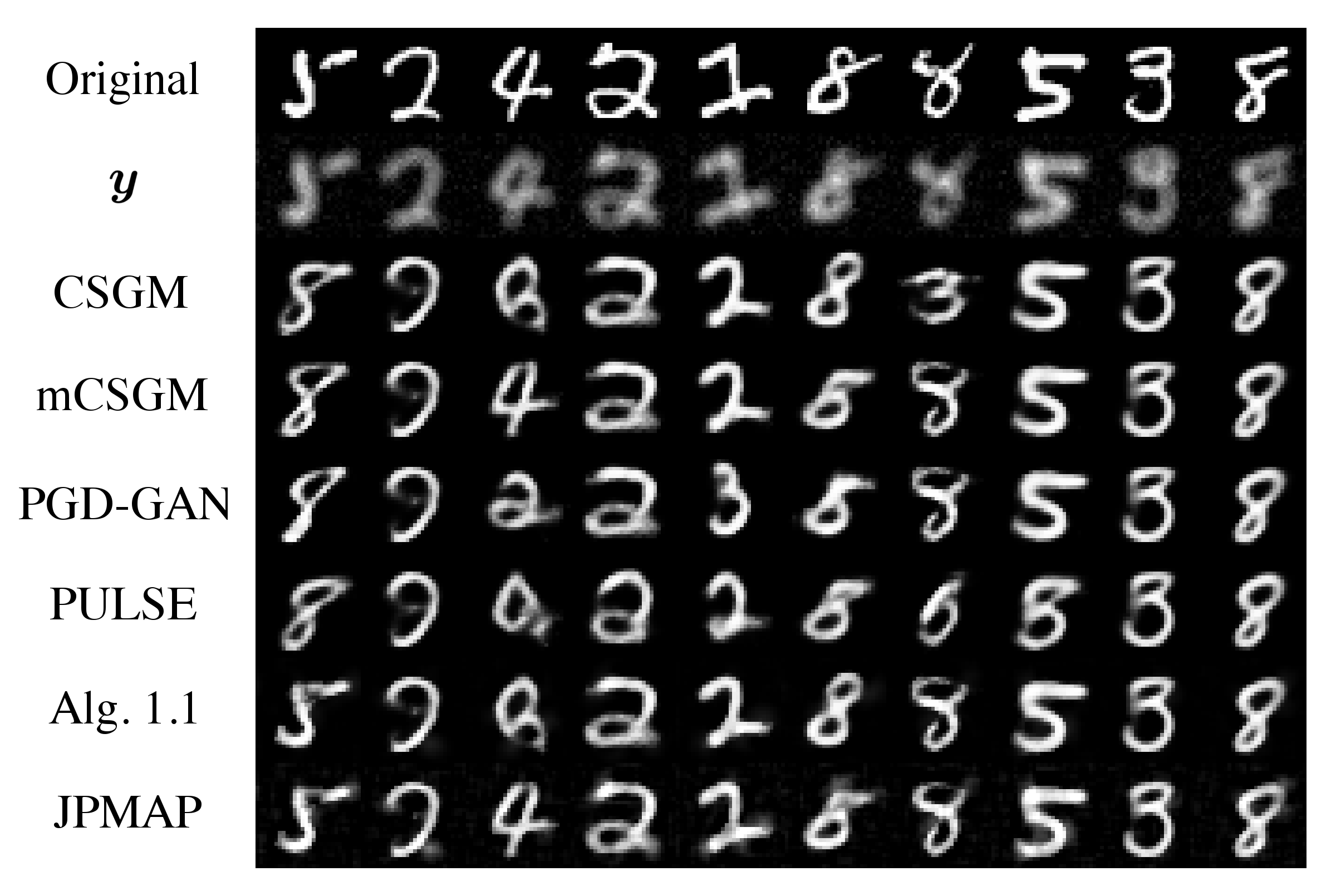}
  }\\
  \caption{\emph{Experimental results on MNIST.} Comparison of JPMAP with the baseline algorithms described in Section~\ref{sec:baseline_algorithms}.
  \emph{(a)} Some selected results from the interpolation experiment with 80\% of missing pixels and Gaussian noise with $\sigma=10/255$.
  From top to bottom: original image $\vx^*$, corrupted image $\vy$, and the results computed by CSGM, mCSGM, PGD-GAN, PULSE, Algorithm~\ref{alg:MAPz-splitting} and JPMAP.
  \emph{(b)} Same as (a) from the deblurring experiment with kernel size $3\times 3$ and Gaussian noise with $\sigma=10/255$.
  \emph{Conclusion:} Our algorithm performs generally better than the baseline algorithms, although in some cases it falls behind mCSGM. %
  }
  \label{fig:MNIST_results_comparison}
 \end{figure*}

 \begin{figure*}[htbp]
\centering
  \subfigure[Results of interpolation on CelebA.]{
  \label{fig:celeba_inpainting}
    \includegraphics[width=0.45\textwidth]{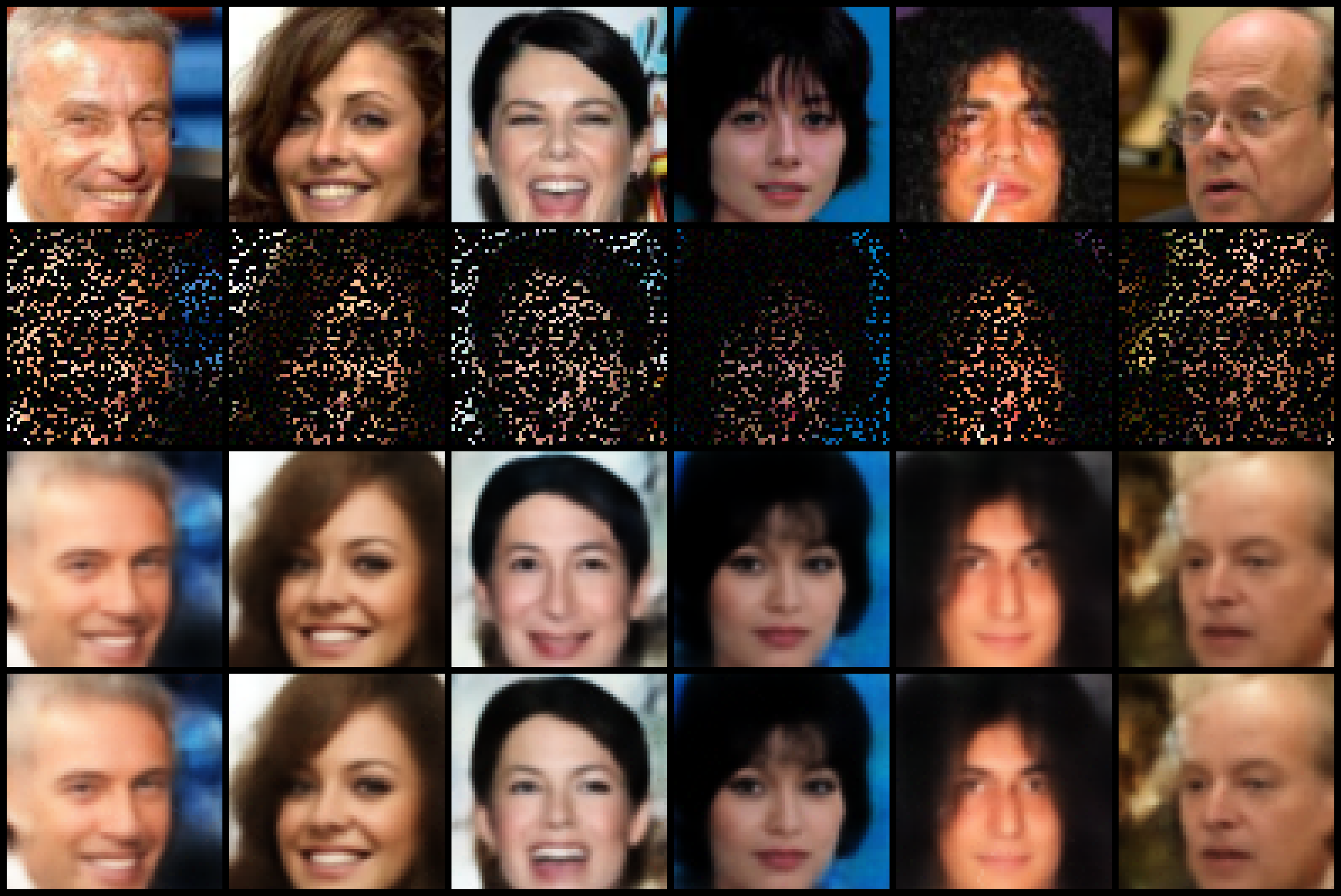}
  }
  \subfigure[Reconstructions $\muDecoder(\muEncoder(\x))$ .]{
  \label{fig:celeba_reconstructions}
    \includegraphics[width=0.45\textwidth]{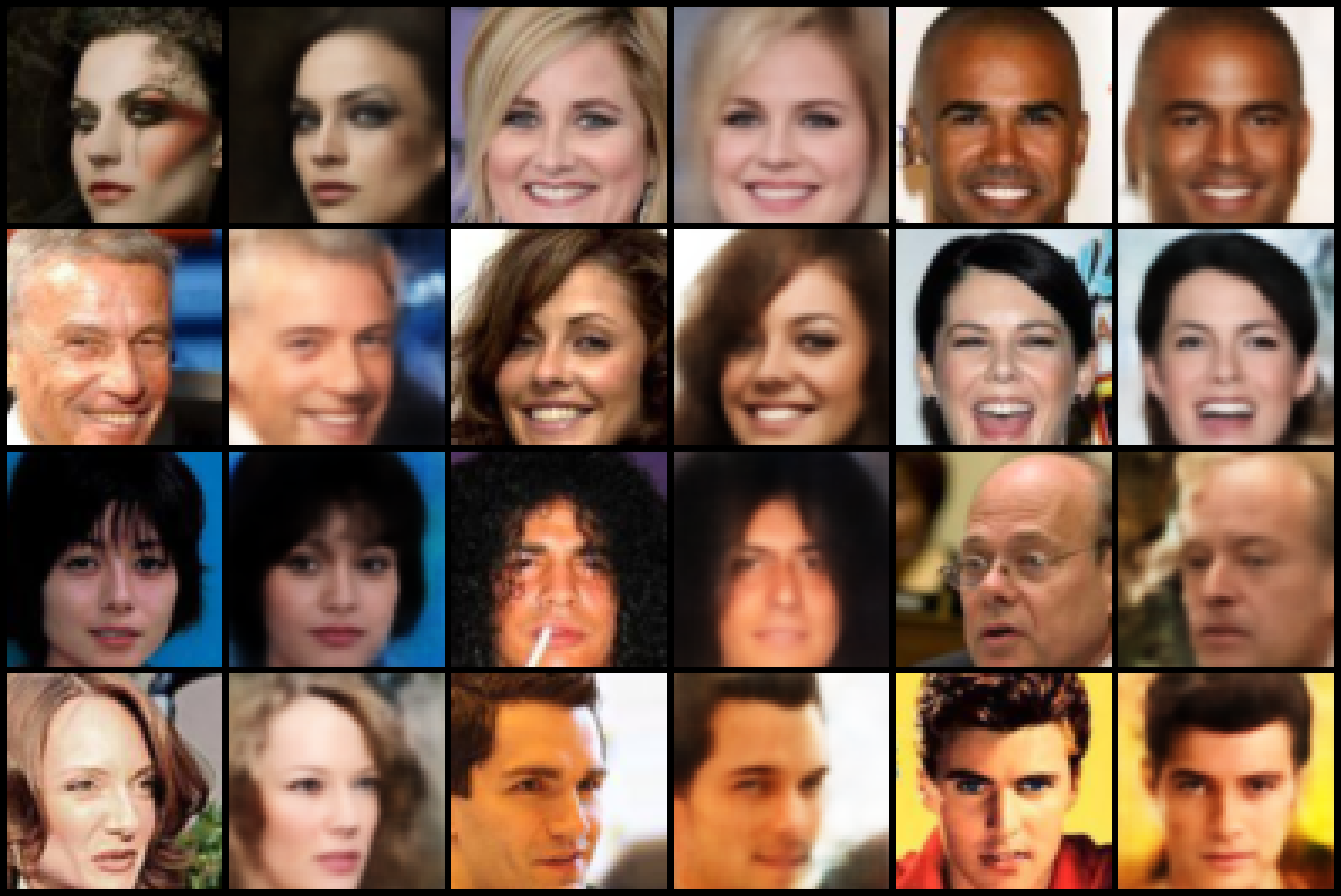}
  }
  \caption{\emph{(a)} Some preliminary results on CelebA: 80\% of missing pixels, noise std $\sigma=10/255$. From top to bottom: original image $\vx^*$, corrupted image $\tilde \vx$, restored by CSGM \cite{bora2017compressed}, restored image $\hat \vx$ by our framework. \emph{(b)} Reconstructions $\muDecoder(\muEncoder(\vx))$ (even columns) for some test samples $\vx$ (odd columns), showing the over-regularization of data manifold imposed by the trained vanilla VAE.
  As a consequence, $-\log \ConditionalPDF{\Z}{\Y}{\z}{\y}$  does not have as many local minima and then a simple gradient descent yields almost the same result as JPMAP (except on third column of (a)).
  }
  \label{fig:CelebA_results_comparison}
 \end{figure*}

\subsection{Effectiveness of the encoder as a fast approximate minimizer}
\label{sec:qualitative-JPMAP2}

Proposition~\ref{thm:convergence-approx} shows that the proposed alternate minimization scheme in Algorithm~\ref{alg:JPMAP3new} converges to a stationary point of $J_1$. And so does the gradient descent scheme in~\citep{bora2017compressed}. 
Since both algorithms have to deal with non-convex energies, they both risk converging to spurious local minima. Also both algorithms solve essentially the same model when the variance $\gamma$ of the coupling term tends to zero.\\

If our algorithm shows better performance (see next subsection), it is mainly because it relies on a previously trained VAE in two fundamental ways:
\emph{(i)} to avoid getting trapped in spurious local minima and 
\emph{(ii)} to accelerate performance during the initial iterations ($n<n_{\min}$). These two features are only possible if the autoencoder approximation is good enough and if the encoder is able to provide good initializations for the non-convex $\vz$- optimization subproblem in line 13 of Algorithm~\ref{alg:JPMAP3new}.\\

Figures~\ref{fig:trained_VAE}~and~\ref{fig:encoder-init} illustrate these two properties of our VAE.
We do so by selecting a random $\vx_0$ from MNIST test set and by computing $\vz^*(\vz_0) := \gd_\vz J_1(\vx_0,\vz)$ with different initial values $\vz_0$. These experiments were performed using the ADAM minimization algorithm with learning rate equal to $0.01$. Figure~\ref{fig:assumption2A} shows that $\vz^*(\vz_0)$ reaches the global optimum for most (but not all) initializations $\vz_0$.
Indeed from 200 random initializations $\vz_0 \sim \Normal(0,I)$, 195 reach the same global minimum, whereas 5 get stuck at a higher energy value. However these 5 initial values have energy values $J_1(\vx_0,\vz_0) \gg J_1(\vx_0,\vz^1)$ far larger than those of the encoder initialization $\vz^1 = \muEncoder(\vx_0)$, and are thus never chosen by Algorithm~\ref{alg:JPMAP3new}.
The encoder initialization $\vz^1$ on the other hand provides much faster convergence to the global optimum.\\

In addition, this experiment shows that we cannot assume $\vz$-convexity: The presence of plateaux in the trajectories of many random initializations as well as the fact that a few initializations do not lead to the global minimum indicates that $J_1$ may not be everywhere convex with respect to $\vz$. However, in contrast to classical works on alternate convex search, our approach adopts weaker assumptions and does not require %
convexity on
$\vz$ to prove convergence in Proposition~\ref{thm:convergence-approx}.\\

In Figure~\ref{fig:assumption2B} we display the distances of each trajectory to the global optimum $\vz^*$ (taken as the median over all initializations $\vz_0$ of the final iterates $\vz^*(\vz_0)$); note that this optimum is always reached, which suggests that $\vz \mapsto J_1(\vx_0,\vz)$ has a unique global minimizer in this case. Finally, Figure~\ref{fig:trained_VAE} shows that the encoder approximation is quite good both in the latent space (Figure~\ref{fig:encoder_approximation}) and in image space (Figures~\ref{fig:decoded_exact_optimum}~and~\ref{fig:decoded_approximate_optimum}). It also shows that the true posterior $p_\theta(\vz|\vx)$ is pretty close to log-concave near the maximum of $q_\phi(\vz|\vx)$.\\

\subsection{Image restoration experiments}
\emph{Choice of $\vx_0$:} In the previous section, our validation experiments used a random $\vx_0$ from the data set as initialization. When dealing with an image restoration problem, Algorithms~\ref{alg:JPMAP2new}~and~\ref{alg:JPMAP3new} require an initial value of $\vx_0$ to be chosen. In all experiments we choose this initial value as $\mA^T\y$.

\emph{Choice of $n_1$ and $n_2$:} 
After a few runs of Algorithm~\ref{alg:JPMAP2new} we find that in most cases, during the first 10 or 20 iterations $\vz^1$ decreases the energy with respect to the previous iteration, and this value depends on the inverse problems (for example, for denoising is smaller than for compressed sensing). But after at most 150 iterations the autoencoder approximation is no longer good enough and we need to perform gradient descent on $\vz_n$ in order to further decrease the energy. Based on these findings we set $n_1 = 25$ and $n_2 = 150$ in Algorithm~\ref{alg:JPMAP3new} for all experiments.
Note that we could also choose $n_1=n_2=n_{\max}$, since in all our experiments we observed that the algorithm auto-regulates itself, evolving from $i^*=1$ in the first few dozen iterations to $i^*=3$ when it is close to convergence. Choosing a finite value for $n_1$ and $n_2$ is only needed to ensure that $i^*=3$ when $n\to\infty$, which is a necessary condition to prove statement 3 of Proposition~\ref{thm:convergence-approx}.

Figure~\ref{fig:evolution_JPMAP} shows the evolution of $\vx_k$ and $\generator(\vz_k)$ from Algorithm~\ref{alg:MAPzCS} in an interpolation example. Here we can see how the exponential multiplier method in Equation~\eqref{eq:lagrangian_emm} updates the values of $\beta_k$ to ensure $\|\generator(\z_k)-\x_k\|^2 \leq \varepsilon$.\\

Figures~\ref{fig:DVAE-opt-restoration1} and \ref{fig:DVAE-opt-restoration2} show the results of denoising, interpolation, compressed sensing, deblurring and super-resolution experiments on MNIST for different degradation levels using the proposed algorithm (JPMAP) and the baseline algorithms introduced in section \ref{sec:baseline_algorithms}. The metrics used are PSNR and LPIPS\footnote{MNIST images were zero-padded to $32\times32$ because LPIPS does not accept $28\times28$ images.}~\citep{zhang2018unreasonable} mean $\pm$ its standard error computed over 100 random experiments for each problem.
Figure~\ref{fig:MNIST_results_comparison} displays the images of 10 representative interpolation and deblurring experiments from the hundreds of experiments summarized in figures \ref{fig:DVAE-opt-restoration1} and \ref{fig:DVAE-opt-restoration2}.

These results show that JPMAP outperforms all other baseline algorithms in terms of PSNR and LPIPS when random restarts are not allowed.
When 10 random restarts are allowed for CSGM, but not for JPMAP, then both algorithms (JPMAP and mCSGM) show a similar global performance: JPMAP tends to provide a slightly better result than mCSGM except for the most extremely ill-posed interpolation, super-resolution and compressed sensing experiments (when available measurements are less than 10\% the number of pixels). In that case mCSGM outperforms JPMAP by an equally small margin.
The latter case can be explained by the fact that the encoder (which is used by JPMAP but not by CSGM) struggles to generalize to images $\x$ which are very far away from $\mathcal{M}$ (the range of the generator). Indeed, in Section~\ref{sec:denoising-criterion} we trained the VAE's encoder to generalize to $\mathcal{M}+n$ where $n\sim\mathcal{N}(0,\sigmaDVAE^2I)$ and $\sigmaDVAE=15/255$. This value is optimal for moderately ill-posed problems, but more extreme problems may require larger values of $\sigmaDVAE$ or a coarse to fine scheme, where a coarse VAE (with large $\sigmaDVAE$) is used during the first few iterations and a finer VAE (with smaller $\sigmaDVAE$) is used later until convergence.
Finally one may consider using random restarts for both JPMAP and CSGM for a more fair comparison.

Figure~\ref{fig:timePSNR_interpolation} performs a more detailed comparison between JPMAP and mCSGM, which also considers running times of both algorithms. For the stopping criteria used in our experiments, one run of JPMAP requires roughly as much time as mCSGM (with $m=10$ restarts).
In addition for moderately ill-posed problems (like interpolation of 75\% missing pixels see subfigures (a) and (b)) where JPMAP's performance beats mCSGM, we can observe that JPMAP also converges much faster to that solution.
For more extremely ill-posed problems (like interpolation of 90\% missing pixels, see subfigures (c) and (d)) the opposite is true.

In the case of CelebA, we did not observe as much difference between JPMAP and CSGM as on MNIST. In Figure \ref{fig:celeba_inpainting} the restorations on an interpolation problem (80\% of missing pixels) are very similar to each other, but blurry. Also, although this problem is very ill-posed, both algorithms impressively find a solution $\vz^*$ very close to the code $\muEncoder(\vx)$ of the ground truth image $\vx$, except for the third column where CSGM converges to a local minimum.\\

We hypothesize that, as CelebA is a substantially more complex dataset than MNIST, a simple model like vanilla VAE is over-regularizing the manifold of samples (underfitting problem). In particular, because of the spectral bias \citep{rahaman2019spectral} the learned manifold perhaps only contains low-frequency approximations of the true images as we can see in the reconstructions $\muDecoder(\muEncoder(\vx))$ of test samples (see Figure \ref{fig:celeba_reconstructions}). This may cause  the posterior $\ConditionalPDF{\Z}{\Y}{\z}{\y}$ to have fewer local minima. With more realistic generative models such as VDVAE~\citep{child2020vdvae} or NVAE~\citep{vahdat2020nvae}, which better represent the true data manifold, we expect the objective function $-\log \ConditionalPDF{\Z}{\Y}{\z}{\y}$ to exhibit a much larger number of local minima, thus making it more difficult to optimize by a simple gradient descent scheme. In that situation the proposed JPMAP method would more clearly show its advantages.

\section{Conclusions and Future work}\label{sect:future_work}

In this work we presented a new framework to solve inverse problems with a convex data-fitting term and a non-convex regularizer learned in the latent space via variational autoencoders. Unlike similar approaches like CSGM \cite{bora2017compressed}, PULSE \cite{Menon2020} and PGD-GAN \cite{shah2018solving} which learn the prior based on generative models, our approach is based on a generalization of alternate convex search to quasi-bi-convex functionals. This quasi-bi-convexity is the result of considering the joint posterior distribution of latent and image spaces. As a result, the proposed approach provides convergence guarantees that extend to a larger family of inverse problems. 
Experiments on denoising, interpolation, deconvolution, super-resolution and compressed sensing confirm this, since our approach gets stuck much less often in spurious local minima than CSGM, PGD-GAN or PULSE, which are simply based on gradient descent of a highly non-convex functional. This leads to restored images which are significantly better in terms of PSNR and LPIPS.

\paragraph{JPMAP vs related \emph{Plug \& Play} approaches}
When compared to other decoupled \emph{plug \& play} approaches that solve inverse problems using NN-based priors, our approach is constrained in different ways:

\emph{(a)} In a certain sense our approach is \emph{less constrained} than existing decoupled approaches since we do not require to retrain the NN-based denoiser to enforce any particular property to ensure convergence: \citet{ryu2019plug} requires the denoiser's residual operator to be non-expansive, and \citet{gupta2018cnn,shah2018solving} require the denoiser to act as a projector. The effect of these modifications to the denoiser on the quality of the underlying image prior has never been studied in detail and chances are that such constraints degrade it. Our method only requires a variational autoencoder without any further constraints, and the quality and expressiveness of this prior can be easily checked by sampling and reconstruction experiments. Checking the quality of the prior is a much more difficult task for \citet{ryu2019plug,gupta2018cnn,shah2018solving} which rely on an implicit prior, and do not provide a generative model.

\emph{(b)} Unlike \citep{ryu2019plug} which requires the data-fitting term $\Fdata(\vx)$ to be \emph{strongly convex} to ensure convergence, our method admits weakly convex and ill-posed data-fitting terms like missing pixels, compressed sensing and non-invertible blurring for instance.

\emph{(c)} On the other hand our method is \emph{more constrained} in the sense that it relies on a generative model of a \emph{fixed size}. Even if the generator and encoder are both convolutional neural networks, training and testing the same model on images of different sizes is a priori not possible because the latent space has a fixed dimension and a fixed distribution. As a future work we plan to explore different ways to address this limitation. The most straightforward way is to use our model to learn a prior of image patches of a fixed size and stitch this model via aggregation schemes like in EPLL~\cite{Zoran2011} to obtain a global prior model for images of any size. Alternatively we can use hierarchical generative models like in \cite{Karras2018, vahdat2020nvae} or resizable ones like in \cite{Bergmann2017,Whang2020}, and adapt our framework accordingly.

\paragraph{\MAP-x or \MAP-z or joint \MAP-x-z}

In this work we explored and clarified the tight relationships between joint \MAP-\x-\z\ estimation, splitting and continuation schemes and the more common \MAP-\z\ estimator in the context of inverse problems with a generative prior.
On the other hand \MAP-\x\ estimators (which are otherwise standard in bayesian imaging) remained largely unexplored in the context of generative priors, due to the optimization challenges they impose, until the recent work of \citet{Helminger2020,Whang2020} showed that a normalizing flow-based generative model allows to overcome those challenges and deems this problem tractable. Similarly \citet{Oberlin2021} use Glow (an invertible normalizing flow) to compare synthesis-based and analysis-based reconstructions. 
Yet an extensive comparison of the advantages and weaknesses of these three families of estimators under the same prior model is still missing, and so is the link between these MAP estimators and the analysis/synthesis-based estimators in \citep{Oberlin2021}. This will be the subject of future work.

\paragraph{Extension to higher dimensional problems} 

The present paper provides a first proof of concept of our framework, on a very simple dataset (MNIST) with a very simple VAE. More experiments are needed to verify that the framework preserves its qualitative advantages on more high-dimensional datasets (like CelebA, FFHQ, etc.), and a larger selection of inverse problems.

Generalizing our proposed method to much higher dimensional problems implies training much more complex generative models which can match the finer details and higher complexity of such data.
We can still use over-simplified generative models in those cases, but our preliminary experiments suggest that in that situation, not only do we obtain relatively poor reconstructions, but the objective function associated to the \MAP-\z\ problem presents less spurious local-minima: as a consequence our proposed joint \MAP-\x-\z\ is overkill in that configuration, and does not present such a great competitive advantage. 

The big challenge of generalizing our proposed method to much higher dimensional problems is then to train sufficiently detailed and complex generative models. And in this area VAEs traditionally lagged behind GANs in terms of quality of the generated samples, the former producing in general more blurred samples. Nevertheless some studies \cite{Sajjadi2018a} %
show that VAEs and Normalizing Flows produce more accurate representations of the probability distribution. In the medium term our work should be able to benefit from recent advances in VAE architectures \citep{child2020vdvae,vahdat2020nvae,TwoStageVAE}, and adversarial training for VAEs \citep{Pu2017,Pu2017a,PGA-Zhang2019} that reach GAN-quality samples with the additional benefits of VAEs. These extensions are however non-trivial, since these VAEs have a huge number of parameters  and they need to be retrained or fine-tuned using a denoising criterion (see section~\ref{sec:denoising-criterion} and \cite{Im2017}) for our method to work properly. In addition, the latent space of the most competitive VAEs is much larger than the image space, which may reduce its regularization capabilities. 

As an alternative, GAN-based generative models can be augmented with a denoising encoder network \cite{Donahue2019}, and Normalizing Flows can also act as projectors or denoising VAEs if we split the latent space to separate the data manifold from its complement, as suggested in \cite{Brehmer2020,Liu2021}. In combination with relaxation techniques, such augmented GANs or specially tailored Flows may provide SOTA priors that fit our quasi-bi-convex optimization framework.

\paragraph{Towards stronger convergence guarantees under weaker conditions.}
The proposed Algorithm~\ref{alg:MAPzCS} bears strong similarities with ADMM with non-linear constraints as introduced by Valkonen \emph{et al.} \cite{Valkonen2014,Benning2016} and analyzed by Latorre-Gómez \emph{et al.} \cite{latorre2019fastADMM}. Latorre-Gómez result provides very strong convergence guarantees (linear convergence rates to a global optimum), but requires the data fitting term to be strongly convex or to satisfy a restricted strong convexity property. Our result, on the other hand, provides much weaker convergence guarantees (convergence to a stationary point), but does not require strong convexity. Further exploring these connections might hopefully lead to something closer to the best of both worlds.  

\section*{Acknowledgments}
We would like to sincerely thank Mauricio Delbracio, José Lezama and Pablo Musé for their help, their insightful comments, and their continuous support throughout this project.

\bibliographystyle{plainnat} %
\bibliography{references}

\clearpage
\appendix
\section{Properties of $J_1$}\label{sec:J1prop}
In this section, we establish that the objective function $J_1$ fulfills the assumptions required to prove the convergence of Algorithm 3, namely 
\begin{itemize}
    \item $J_1(\cdot,\vz)$ is convex for any $\vz$;
    \item $J_1(\cdot,\vz)$ has a unique minimizer for any $\vz$;
    \item $J_1$ is coercive;
    \item $J_1$ is continuously differentiable;
\end{itemize}
We recall that
\begin{align*}
J_1(\vx,\vz) = & \underbrace{\frac{1}{2\,\sigma^2}\,\|\mA\,\vx-\vy\|^2}_{\displaystyle F(\vx,\vy)} \\
& +\underbrace{ \frac{1}{2}\Big (Z_\theta(\vz) 
     +\|\SigmaDecoder^{-1/2}(\vz)(\vx-\muDecoder(\vz))\|^2 \Big ) }_{\displaystyle H_{\theta}(\vx,\vz)} \\
& + \frac12\,\|\vz\|^2
\end{align*}
where
$$ Z_\theta(\vz) = \xdim\log(2\pi) + \log\det\SigmaDecoder(\vz) $$
     Thus, it is the sum of three non-negative terms.
\subsection{Convexity and unicity of the minimizer of $J_1(\cdot,z)$}
Let $\vz$ be fixed. Then there exists a constant $C\in\RR$ such that $\forall\,\vx$
\[%
J_1(\vx,\vz) = \frac{1}{2\,\sigma^2}\,\|\mA\,\vx-\vy\|^2  +\|\SigmaDecoder^{-1/2}(\vz)(\vx-\muDecoder(\vz))\|^2 + C\]

Being the sum of two quadratic forms, $J_1(\cdot,\vz)$ is obviously twice differentiable. Its gradient is given by
\begin{align*}
    \frac{\partial J_1}{\partial \vx}(\vx,\vz) = & \frac{1}{\sigma^2}\,\mA^T(\mA\,\vx-\vy) \\
    & + 2\,(\SigmaDecoder^{-1/2}(\vz))^T\big(\SigmaDecoder^{-1/2}(\vz)(\vx-\muDecoder(\vz))\big))
\end{align*}

and its Hessian is
\[
\text{Hess}_{\vx}J_1(\vx,\vz) = 
\frac{1}{\sigma^2}\,\mA^T\mA +2\,(\SigmaDecoder^{-1/2}(\vz))^T \SigmaDecoder^{-1/2}(\vz)
\]
Since $\SigmaDecoder(\vz)=\gamma^2 I$ the Hessian is positive definite (without the need to assume that $A$ is full rank), and we have that 
\begin{lemma}\label{x-convexity}
$J_1(\cdot,\vz)$ is strictly convex for any $\vz$.
\end{lemma}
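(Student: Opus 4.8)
The plan is to read off the result directly from the Hessian computation carried out just above the statement. First I would isolate the $\vx$-dependence: for fixed $\vz$, the terms $\tfrac12 Z_\theta(\vz)$ and $\tfrac12\|\vz\|^2$ are constants, so $J_1(\cdot,\vz)$ coincides, up to an additive constant $C\in\R$, with the sum of the two quadratic forms $\frac{1}{2\sigma^2}\|\mA\vx-\vy\|^2$ and $\|\SigmaDecoder^{-1/2}(\vz)(\vx-\muDecoder(\vz))\|^2$. In particular $J_1(\cdot,\vz)$ is $C^\infty$ in $\vx$, and its Hessian is the constant (in $\vx$) symmetric matrix
\[
\text{Hess}_{\vx}J_1(\vx,\vz) = \frac{1}{\sigma^2}\,\mA^T\mA + 2\,\SigmaDecoder^{-1}(\vz),
\]
using $(\SigmaDecoder^{-1/2}(\vz))^T\SigmaDecoder^{-1/2}(\vz)=\SigmaDecoder^{-1}(\vz)$ since the decoder covariance is symmetric positive definite.

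Next I would verify that this matrix is positive definite. The term $\frac{1}{\sigma^2}\mA^T\mA$ is symmetric positive \emph{semi}definite for any matrix $\mA$, with no rank assumption. The term $2\,\SigmaDecoder^{-1}(\vz)$ is the inverse of a covariance matrix scaled by $2$, hence symmetric positive definite; with the isotropic choice $\SigmaDecoder(\vz)=\gamma^2 I$ adopted in the paper it equals $\tfrac{2}{\gamma^2} I$. For any $\vu\neq 0$ one then has $\vu^T\big(\tfrac{1}{\sigma^2}\mA^T\mA\big)\vu \geq 0$ and $\vu^T\big(\tfrac{2}{\gamma^2} I\big)\vu = \tfrac{2}{\gamma^2}\|\vu\|^2 > 0$, so the sum is $>0$; thus $\text{Hess}_{\vx}J_1(\vx,\vz)\succeq \tfrac{2}{\gamma^2} I$ uniformly in $\vx$ and $\vz$.

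Finally I would conclude via the standard fact that a twice continuously differentiable function on $\R^\xdim$ with everywhere positive-definite Hessian is strictly convex; since the Hessian here is constant and positive definite, $J_1(\cdot,\vz)$ is strictly convex, proving the lemma (and in fact the stronger statement that it is $\tfrac{2}{\gamma^2}$-strongly convex, uniformly in $\vz$). I do not expect any real obstacle: the only point worth a sentence is that strictness (and strong convexity) comes entirely from the decoder regularization term $2\SigmaDecoder^{-1}(\vz)$, not from the data term, so the argument survives rank-deficient $\mA$ such as subsampling, compressed sensing or non-invertible blur — precisely the regime in which the competing $\MAP$-$\vx$ formulations lose convexity.
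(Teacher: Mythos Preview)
Your proposal is correct and follows essentially the same route as the paper: fix $\vz$, recognize $J_1(\cdot,\vz)$ as a quadratic, compute the constant Hessian $\frac{1}{\sigma^2}\mA^T\mA + 2\,\SigmaDecoder^{-1}(\vz)$, and use the isotropic choice $\SigmaDecoder(\vz)=\gamma^2 I$ to conclude positive definiteness (hence strict convexity) without any rank assumption on $\mA$. Your extra observation that this in fact yields $\tfrac{2}{\gamma^2}$-strong convexity, uniformly in $\vz$, is a nice sharpening but not something the paper states explicitly.
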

An immediate consequence is the unicity of the minimizer of the partial function $J_1(\cdot,\vz)$.

\subsection{Coercivity of $J_1$}
\begin{lemma}\label{coercivity}
$J_1$ is coercive.
\end{lemma}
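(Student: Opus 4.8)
The plan is to exploit the explicit, simplified form that $J_1$ takes under the modelling choice $\SigmaDecoder(\vz)=\gamma^2 I$ adopted in Section~\ref{subsec:exp_setup}: then $Z_\theta(\vz)=\xdim\log(2\pi\gamma^2)=:C_0$ is a constant and $\SigmaDecoder^{-1/2}(\vz)=\gamma^{-1}I$, so that
$$
J_1(\vx,\vz)=\frac{1}{2\sigma^2}\|\mA\vx-\vy\|^2+\frac{1}{2\gamma^2}\|\vx-\muDecoder(\vz)\|^2+\frac12\|\vz\|^2+\frac{C_0}{2},
$$
that is, a sum of three nonnegative terms plus a constant. I will show that every sublevel set of $J_1$ is bounded, which is equivalent to coercivity, i.e. $J_1(\vx,\vz)\to+\infty$ as $\|(\vx,\vz)\|\to+\infty$ (and, since $J_1$ is continuous — its continuous differentiability is established elsewhere in this appendix — the sublevel sets are then also compact).

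First I would fix $c\in\RR$ and consider $(\vx,\vz)$ with $J_1(\vx,\vz)\le c$. Since the three terms above are nonnegative we have $J_1\ge C_0/2$ everywhere, so we may assume $c\ge C_0/2$ (otherwise the sublevel set is empty, hence trivially bounded). Discarding the two nonnegative terms $\frac{1}{2\sigma^2}\|\mA\vx-\vy\|^2$ and $\frac{1}{2\gamma^2}\|\vx-\muDecoder(\vz)\|^2$ yields $\frac12\|\vz\|^2\le c-\tfrac{C_0}{2}$, hence $\|\vz\|\le R:=\sqrt{2c-C_0}$. Next, discarding only the first term yields $\frac{1}{2\gamma^2}\|\vx-\muDecoder(\vz)\|^2\le c-\tfrac{C_0}{2}$, hence $\|\vx-\muDecoder(\vz)\|\le \gamma\sqrt{2c-C_0}=\gamma R$. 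Since the decoder mean $\muDecoder$ is continuous (it is a feed-forward network with ELU activations; cf. Section~\ref{sect:assumptions}), it is bounded on the compact ball $\{\|\vz\|\le R\}$ by some $M_R<\infty$, so by the triangle inequality $\|\vx\|\le \|\muDecoder(\vz)\|+\|\vx-\muDecoder(\vz)\|\le M_R+\gamma R$. Thus $\{J_1\le c\}\subseteq\{\|\vx\|\le M_R+\gamma R\}\times\{\|\vz\|\le R\}$ is bounded, which proves the lemma.

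The one delicate point to keep in mind is that $\vx$ has two distinct ways to escape to infinity: it may grow while $\muDecoder(\vz)$ stays bounded, in which case $\frac{1}{2\gamma^2}\|\vx-\muDecoder(\vz)\|^2$ blows up; or it may grow together with $\|\muDecoder(\vz)\|$, which forces $\|\vz\|\to\infty$ and makes $\frac12\|\vz\|^2$ blow up. Both cases are handled above by the order of the estimates — bounding $\vz$ first, and only then bounding $\vx$ in terms of the already-controlled quantity $\|\vz\|$. I expect no further obstacle: the argument uses only $\gamma>0$ and continuity of $\muDecoder$, and in particular it does not require $\mA$ to be full rank.
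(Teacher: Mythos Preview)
Your proof is correct and follows essentially the same route as the paper's: both arguments use the nonnegativity of the three terms to bound $\|\vz\|$ first, then invoke boundedness of $\muDecoder$ on bounded sets (you via continuity on a compact ball, the paper via Property~\ref{lipschitz-nn}) to control $\|\vx\|$ through the coupling term. The only cosmetic differences are that the paper argues by contradiction with sequences whereas you bound sublevel sets directly, and you specialize from the outset to $\SigmaDecoder(\vz)=\gamma^2 I$, which is exactly the setting adopted in the paper anyway.
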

\begin{proof}
First, let us note that $J_1$ is the sum of three non-negative terms. 
If it was not coercive, then we could find a sequence $(\vx_{k},\vz_{k}) \to \infty$ such that $J_1(\vx_{k},\vz_{k})$ is bounded.
As a consequence all three terms are bounded. In particular the last term $\|\vz_{k}\|$ is bounded, which means that $\vx_{k} \to \infty$.
From Property~\ref{lipschitz-nn}, $\{\muDecoder(\vz_k)\}$ and $\{\SigmaDecoder(\vz_k)\}$ are bounded for bounded $\{\vz_k\}$.
Now, from the definition of the second term of $J_1$, we get that, $\{\muDecoder(\vz_k)\}$ and $\{\SigmaDecoder(\vz_k)\}$ being bounded and $\vx_{k}$ going to $\infty$ yield that $H_{\theta}(x_k,z_k)$ goes to infinity, while being bounded. This leads to a contradiction and thus proves that $J_1$ is coercive.

\end{proof}

\subsection{Regularity of $J_1$}
In the sequel we adopt the common assumption that all neural networks used in this work are composed of a finite number $d$ of layers, each layer being composed of: 
\emph{(a)} a linear operator (\emph{e.g.} convolutional or fully connected layer), followed by
\emph{(b)} a non-linear $L$-Lipschitz component-wise activation function with $0<L<\infty$.

Therefore we have the following property:

\begin{property}\label{lipschitz-nn}
For any neural network $f_\theta$ with parameters $\theta$ having the structure described above: \\
There exists a constant $C_\theta$ such that $\forall \vu$,
$$ \|f_\theta(\vu) \|_2 \leq C_\theta \| \vu \|_2.$$
\end{property}

Concerning activation functions we use two kinds:
\begin{itemize}
    \item continously differentiable activations like ELU, or
    \item continuous but non-differentiable activations like ReLU
\end{itemize}

Hence, by composition, we have that 
\begin{lemma}For continuously differentiable activation functions, $J_1$ is continuously differentiable.
\end{lemma}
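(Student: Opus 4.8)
The plan is to write $J_1$ as a finite combination of sums, products and compositions of maps that are each continuously differentiable, and to invoke the stability of the $C^1$ class under these operations. Recalling the decomposition above, $J_1(\vx,\vz) = F(\vx,\vy) + H_\theta(\vx,\vz) + \tfrac12\|\vz\|^2$, the first term $F(\vx,\vy) = \tfrac{1}{2\sigma^2}\|\mA\vx - \vy\|^2$ and the last term $\tfrac12\|\vz\|^2$ are polynomials in their arguments, hence $C^\infty$. So everything reduces to showing that the coupling term $H_\theta$ is $C^1$ jointly in $(\vx,\vz)$.

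First I would establish that the decoder heads $\vz\mapsto\muDecoder(\vz)$ and $\vz\mapsto\SigmaDecoder(\vz)$ are $C^1$. By the assumed layer structure, each is a finite composition of affine maps (which are $C^\infty$) with component-wise activation functions; when the activation is continuously differentiable (e.g.\ ELU with $\alpha=1$), each layer is $C^1$, and a finite composition of $C^1$ maps is $C^1$ by the chain rule. Next I would treat the algebraic operations appearing in
\[
H_\theta(\vx,\vz) = \tfrac12\Big(\xdim\log(2\pi) + \log\det\SigmaDecoder(\vz) + \|\SigmaDecoder^{-1/2}(\vz)(\vx-\muDecoder(\vz))\|^2\Big).
\]
On the open cone of symmetric positive-definite matrices the maps $M\mapsto\det M$, $M\mapsto M^{-1}$ and $M\mapsto M^{1/2}$ are $C^\infty$ (indeed real-analytic), and $t\mapsto\log t$ is $C^\infty$ on $(0,\infty)$; composing these with the $C^1$ map $\vz\mapsto\SigmaDecoder(\vz)$ yields $C^1$ functions of $\vz$. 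Finally, $\|\SigmaDecoder^{-1/2}(\vz)(\vx-\muDecoder(\vz))\|^2$ is obtained from the above by matrix-vector multiplication and the squared Euclidean norm, both of which are smooth, so this term is $C^1$ jointly in $(\vx,\vz)$. Summing the contributions, $H_\theta\in C^1$, hence $J_1\in C^1$.

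The one point that requires care — and is the main (mild) obstacle — is ensuring that $\SigmaDecoder(\vz)$ remains inside the positive-definite cone, so that $\log\det\SigmaDecoder(\vz)$ and $\SigmaDecoder^{-1/2}(\vz)$ are well-defined and $C^1$ everywhere. In the architecture used here this is immediate, since $\SigmaDecoder(\vz)\equiv\gamma^2 I$ with $\gamma>0$, so $\log\det\SigmaDecoder$ is constant and $\SigmaDecoder^{-1/2}=\gamma^{-1}I$; more generally it follows from the way the covariance head is parameterized (a positive output via $\exp$ or softplus, possibly with a strictly positive floor). I would also remark, for consistency with the dichotomy stated just before the lemma, that if the activation is only continuous (e.g.\ ReLU) the same composition argument gives continuity of $J_1$ but not differentiability.
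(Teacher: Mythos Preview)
Your proposal is correct and follows the same approach as the paper, which simply states ``Hence, by composition, we have that'' $J_1$ is continuously differentiable. Your argument is a fully fleshed-out version of that one-line justification, making explicit the $C^1$ regularity of each ingredient (the polynomial terms, the decoder heads, and the algebraic operations on $\SigmaDecoder$) and the stability of $C^1$ under composition.
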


\section{MAP-x and MAP-z for deterministic generative models}

Assume that the stochastic $\gamma$-generative model is 
$$ \ConditionalPDF{\X_{\gamma}}{\Z_{\gamma}}{\vx}{\vz} = \Normal(\generator(\vz),\gamma^2 I) $$
meaning that when $\gamma\to 0$
$$ \ConditionalPDF{\X}{\Z}{\vx}{\vz} = \delta(\vx-\generator(\vz)) $$

We now analyze the \MAP-\z\ and \MAP-\x\ estimators for the limit case when $\gamma=0$. This is what we call a deterministic generative model, and it includes GANs for instance.

\subsection{MAP-z}\label{sec:mapZ}

By definition the \MAP-\z\ estimator is obtained by maximising the posterior with respect to \z:

\begin{equation}
\begin{split}
\hat{\vz}_{\MAP-\vz}
& = \argmax_\vz \left \{\ConditionalPDF{\Z}{\Y}{\vz}{\vy} \right \}\\
& = \argmax_\vz \left \{ \ConditionalPDF{\Y}{\Z}{\vy}{\vz} \PDF{\Z}{\vz} \right \}.
\end{split}
\end{equation}
In the last line we used Bayes rule to rewrite this posterior in more simple terms.
However, this expression still involves the unknown conditional  $\ConditionalPDF{\Y}{\Z}{\vy}{\vz}$.\\
Let us  express this maximization in terms of $\ConditionalPDF{\Y}{\X}{\vy}{\vx}$.

To do so we recall the relation between the conditionals and the joint:

\begin{equation}\label{eq:mapZ-jointYZ}
\ConditionalPDF{\Y}{\Z}{\vy}{\vz} \PDF{\Z}{\vz} = \PDF{\Y,\Z}{\vy,\vz} = \ConditionalPDF{\Z}{\Y}{\vz}{\vy} \PDF{\Y}{\vy}
\end{equation}

We can also compute the joint distribution $ \PDF{\Y,\Z}{\vy,\vz} $ by marginalization on a third random variable $\X$:

\begin{equation}\label{eq:mapZ-jointYZ-conditionalYX}
\begin{split}
\PDF{\Y,\Z}{\vy,\vz}  & = \int  \PDF{\X,\Y,\Z}{\vx,\vy,\vz} d\vx \\
        & = \int \ConditionalPDF{\Y}{\X,\Z}{\vy}{\vx,\vz} \ConditionalPDF{\X}{\Z}{\vx}{\vz} \PDF{\Z}{\vz} d\vx \\
        & = \int \ConditionalPDF{\Y}{\X}{\vy}{\vx}  \delta(\vx-\generator(\vz))  \PDF{\Z}{\vz} d\vx \\
        & = \ConditionalPDF{\Y}{\X}{\vy}{\generator(\vz)}  \PDF{\Z}{\vz}
\end{split}
\end{equation}

The third line follows from our graphical model $\Z \rightarrow \X \rightarrow \Y$
which implies that once we know $\X=\vx$, then $\Z$ provides no additional information, therefore
$$\ConditionalPDF{\Y}{\X,\Z}{\vy}{\vx,\vz} =  \ConditionalPDF{\Y}{\X}{\vy}{\vx}.$$

The last line follows simply from the integration on $\vx$ of a delta function.

From equations~\eqref{eq:mapZ-jointYZ} and \eqref{eq:mapZ-jointYZ-conditionalYX} we can derive an expression of $ \ConditionalPDF{\Z}{\Y}{\vz}{\vy} $ in terms of  $\ConditionalPDF{\Y}{\X}{\cdot}{\cdot}$ and the generator $\generator$ namely:

$$  \ConditionalPDF{\Z}{\Y}{\vz}{\vy} = \frac{1}{\PDF{\Y}{\y}}  \ConditionalPDF{\Y}{\X}{\vy}{\generator(\vz)}  \PDF{\Z}{\vz} $$

This proves the main result of this section:

\begin{proposition}[\MAP-\z\  estimator for deterministic generative models]\label{thm:mapZ}
Assume we have
\begin{itemize}
\item a deterministic generative model where $X = \generator(\Z)$ and %
\item an inverse problem characterised by the log conditional distribution $ \log \ConditionalPDF{\Y}{\X}{\vy}{\vx} = - \Fdata(\vx,\vy)$.
\end{itemize}

Then the \MAP-\z\  estimator is computed as $\hat{\vx}_{\MAP-\z} = \generator(\hat{\vz}_{\MAP-\z})$ where

\begin{equation}
\begin{split}
\hat{\vz}_{\MAP-\z}
& = \argmax_\vz \left \{\ConditionalPDF{\Y}{\X}{\vy}{\generator(\vz)}\PDF{\Z}{\vz} \right \} \\
& = \argmin_\vz \left \{\Fdata(\generator(\vz),\vy) - \log \PDF{\Z}{\vz} \right \}.
\end{split}
\end{equation}
\end{proposition}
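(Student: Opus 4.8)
The plan is to promote the chain of identities assembled in the paragraphs preceding the statement into a self-contained argument. First, recall the definition $\hat{\vz}_{\MAP-\z} = \argmax_\vz \ConditionalPDF{\Z}{\Y}{\vz}{\vy}$ and apply Bayes' rule to rewrite the posterior as $\ConditionalPDF{\Z}{\Y}{\vz}{\vy} = \PDF{\Y,\Z}{\vy,\vz}/\PDF{\Y}{\vy}$, noting that $\PDF{\Y}{\vy}$ is a $\vz$-independent normalizing constant and hence irrelevant to the maximization. The whole difficulty is then to evaluate the joint density $\PDF{\Y,\Z}{\vy,\vz}$ without access to the unknown conditional $\ConditionalPDF{\Y}{\Z}{\vy}{\vz}$.

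Second, introduce the image variable $\X$ and marginalize, $\PDF{\Y,\Z}{\vy,\vz} = \int \ConditionalPDF{\Y}{\X,\Z}{\vy}{\vx,\vz}\,\ConditionalPDF{\X}{\Z}{\vx}{\vz}\,\PDF{\Z}{\vz}\,d\vx$, and use two structural facts to collapse the integral: the graphical model $\Z\to\X\to\Y$ gives the conditional independence $\ConditionalPDF{\Y}{\X,\Z}{\vy}{\vx,\vz} = \ConditionalPDF{\Y}{\X}{\vy}{\vx}$, and the deterministic generative model gives $\ConditionalPDF{\X}{\Z}{\vx}{\vz} = \delta(\vx - \generator(\vz))$. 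Integrating the Dirac mass yields $\PDF{\Y,\Z}{\vy,\vz} = \ConditionalPDF{\Y}{\X}{\vy}{\generator(\vz)}\,\PDF{\Z}{\vz}$, hence $\ConditionalPDF{\Z}{\Y}{\vz}{\vy} = \ConditionalPDF{\Y}{\X}{\vy}{\generator(\vz)}\,\PDF{\Z}{\vz}/\PDF{\Y}{\vy}$.

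Third, taking $\argmax_\vz$ discards the constant $1/\PDF{\Y}{\vy}$, giving the first displayed formula; passing to the negative logarithm and substituting the hypothesis $\log \ConditionalPDF{\Y}{\X}{\vy}{\vx} = -\Fdata(\vx,\vy)$ gives the equivalent $\argmin$ formula. Finally, $\hat{\vx}_{\MAP-\z} = \generator(\hat{\vz}_{\MAP-\z})$ holds by the definition of how the latent estimator is transported to image space.

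The only genuine subtlety is the meaning of $\ConditionalPDF{\X}{\Z}{\vx}{\vz} = \delta(\vx - \generator(\vz))$ and of integrating an honest density against it: as written this is a degenerate conditional rather than an $L^1$ density, so the marginalization step is formal. I would make it rigorous by running the whole computation with the non-degenerate model $\ConditionalPDF{\X_\gamma}{\Z_\gamma}{\vx}{\vz} = \Normal(\generator(\vz),\gamma^2 I)$ from the beginning of this appendix --- where every step is an ordinary Fubini/Bayes manipulation --- and then letting $\gamma\to 0$, using that $\Normal(\generator(\vz),\gamma^2 I) \to \delta_{\generator(\vz)}$ weakly and that $\vx\mapsto\ConditionalPDF{\Y}{\X}{\vy}{\vx}$ is continuous (true for the quadratic data term~\eqref{eq:data-term}); alternatively one simply adopts the generalized-density convention already used elsewhere in the paper. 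Since neither existence nor uniqueness of the minimizer is asserted, nothing further needs to be verified.
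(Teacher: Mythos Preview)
Your proposal is correct and follows essentially the same route as the paper: the paper's derivation (given in the paragraphs immediately preceding the proposition) also starts from $\argmax_\vz \ConditionalPDF{\Z}{\Y}{\vz}{\vy}$, marginalizes $\PDF{\Y,\Z}{\vy,\vz}$ over $\vx$, invokes the Markov structure $\Z\to\X\to\Y$ together with $\ConditionalPDF{\X}{\Z}{\vx}{\vz}=\delta(\vx-\generator(\vz))$, and integrates the Dirac to obtain $\ConditionalPDF{\Z}{\Y}{\vz}{\vy}\propto\ConditionalPDF{\Y}{\X}{\vy}{\generator(\vz)}\PDF{\Z}{\vz}$. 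Your added remark on making the $\delta$-integration rigorous via the $\gamma\to 0$ limit is a welcome refinement that the paper leaves implicit.
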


\subsection{MAP-x} \label{sec:mapX}

The \MAP-\x\ estimator is obtained by maximizing the posterior with respect to \x.
The generative model induces a prior on $X$ via the push-forward measure $p_\X = \generator \sharp p_\Z$, which following \citep[section 5]{Papamakarios2019} can be developed as
$$
\PDF{\X}{\x} = 
\frac{\PDF{\Z}{\generator^{-1}(\x)}}%
{\sqrt{\det S(\generator^{-1}(\x))}}
\delta_\mathcal{M}(\x)
$$
where 
$S = \left(\frac{\partial \generator}{\partial \vz}\right)^T\left(\frac{\partial \generator}{\partial \vz}\right)$ is the squared Jacobian and 
the manifold $\mathcal{M} = \lbrace \vx \,:\, \exists \vz,\, \vx = \generator(\vz)\rbrace$ represents the image of the generator \generator.

With such a prior $p_\X$, the \x-optimization \eqref{eq:MAP} required to obtain $\hat{\x}_\MAP$ becomes intractable (in general), for various reasons:
\begin{itemize}
    \item the computation of $\det S$,
    \item the inversion of $\generator$, and
    \item the hard constraint $\x\in\mathcal{M}$.
\end{itemize}
These operations are are all memory and/or computationally intensive, except when they are partially addressed by the use of a normalizing flow like in \citep{Helminger2020,Whang2020}.\\

\section{Joint MAP-x-z, Continuation Scheme and convergence to MAP-z}\label{sec:continuation}

The functional $J_{1,\beta}$ introduced in Equation~\ref{eq:MAPz-splitting} can be seen from two different perspectives.

From a machine learning perspective it corresponds to the joint log-posterior $J_1$ in the case where $\SigmaDecoder(\vz) = \frac{1}{\beta} I$ and $\muDecoder=\generator$, namely:

\begin{align*}
J_{1,\beta}(\vx,\vz) = & 
\underbrace{\frac{1}{2\,\sigma^2}\,\|\mA\,\vx-\vy\|^2}_{%
    \displaystyle F(\vx,\vy)} \\
& + \underbrace{ \frac{\beta}{2}%
        \|\vx-\muDecoder(\vz))\|^2
     }_{%
     \displaystyle H_{\theta}(\vx,\vz) = \varphi_\beta(\x,\z)} \\
& + \frac12\,\|\vz\|^2 + C_\beta
\end{align*}

From an optimization standpoint it can be considered as an inexact penalisation procedure: We want to solve the constrained problem
\[\min_{(\vx,\vz)\in \mathcal C} \underbrace{F(\vx,\vy) + \frac12\,\|\vz\|^2}_{=J_{1,0}(\vx,\vz)}\]
with $\mathcal C = \{(\vx,\vz)\mid \vx=\muDecoder(\vz)\}$
whose solution provides the \MAP-\z\ estimator
\begin{equation}\label{eq:MAPz_const}
    (\x^*,\z^*) \in \argmin_{(\vx,\vz)\in \mathcal C} J_{1,0}(\vx,\vz).
\end{equation}

To do so, we introduced the family of unconstrained problems
\[\min_{\vx,\vz} J_{1,\beta}(\x,\z)\]
and their corresponding minimizers 
$$ (\hat{\x}_\beta, \hat{\z}_\beta) \in \argmin_{\x,\z} J_{1,\beta}(\x,\z) $$
which for $\beta=\frac{1}{\gamma^2}$ provide the \MAP-\x-\z\ estimator.\\

We can show that the \MAP-\x-\z\ estimator converges to the \MAP-\z\ estimator when $\beta\to\infty$ (or equivalently $\gamma\to 0$).

\begin{proposition}
The unconstrained functional tends to the constrained functional plus the constraint:
\begin{equation}
J_{1,\beta}(\vx,\vz) \xrightarrow{\beta \to \infty} J_{1,\infty}(\vx,\vz) = %
F(\vx,\vz)
+ \iota_{\vx = \muDecoder(\vz)}(\vx,\vz)
+ \frac12\,\|\vz\|^2
\end{equation}

and the unconstrained minimizers tend to the constrained minimizer as $\beta\to\infty$:

\begin{equation}
 \lim_{\beta\to\infty} (\hat{\x}_\beta, \hat{\z}_\beta) \in \argmin_{(\vx,\vz)\in \mathcal C} J_{1,0}(\vx,\vz) = \argmin_{\vx,\vz} J_{1,\infty}(\vx,\vz).
\end{equation}

\end{proposition}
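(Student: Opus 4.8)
The plan is to recognize this as a textbook $\Gamma$-convergence argument in the spirit of \cite{DalMaso1993}, exploiting that the family $\{J_{1,\beta}\}_{\beta}$ is monotone \emph{increasing} in $\beta$ (working with $J_{1,\beta}$ as in Equation~\eqref{eq:MAPz-splitting}, i.e.\ without the $\beta$-dependent additive constant $C_\beta$, which would otherwise diverge to $-\infty$ and destroy both monotonicity and the stated limit). First I would settle the pointwise claim: for fixed $(\vx,\vz)$ the only $\beta$-dependent term is the penalty $\tfrac{\beta}{2}\|\vx-\muDecoder(\vz)\|^2$, which is nondecreasing in $\beta$, stays at $0$ for all $\beta$ when $(\vx,\vz)\in\mathcal C=\{(\vx,\vz):\vx=\muDecoder(\vz)\}$, and diverges to $+\infty$ otherwise. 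Hence $\sup_\beta J_{1,\beta}=\lim_{\beta\to\infty}J_{1,\beta}=J_{1,0}+\iota_{\mathcal C}=J_{1,\infty}$ pointwise, which is the first displayed identity (with $F(\vx,\vy)$ in place of the typo $F(\vx,\vz)$).

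Next I would upgrade pointwise convergence to $\Gamma$-convergence. Each $J_{1,\beta}$ is continuous (a quadratic in $\vx$, a quadratic composed with the continuous map $\muDecoder$, and a quadratic in $\vz$; continuity of $\muDecoder$ is part of the neural-network regularity assumptions recalled in Appendix~\ref{sec:J1prop}). Since $\muDecoder$ is continuous, $\mathcal C$ is closed, so $J_{1,\infty}$ is lower semicontinuous. For a nondecreasing sequence of lsc functions whose pointwise supremum is itself lsc, the $\Gamma$-limit equals that pointwise supremum \cite{DalMaso1993}: the $\Gamma\text{-}\limsup$ bound follows from the constant recovery sequence, and the $\Gamma\text{-}\liminf$ bound from $J_{1,\beta}\ge J_{1,\beta'}$ for $\beta\ge\beta'$ combined with lsc of each $J_{1,\beta'}$, letting $\beta'\to\infty$. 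Thus $\Gamma\text{-}\lim_{\beta\to\infty}J_{1,\beta}=J_{1,\infty}$.

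To pass to minimizers I would verify equicoercivity: for any fixed $\beta_0>0$ and all $\beta\ge\beta_0$ one has $J_{1,\beta}\ge J_{1,\beta_0}$, and $J_{1,\beta_0}$ is coercive by the same mechanism as in Lemma~\ref{coercivity} — the term $\tfrac12\|\vz\|^2$ controls $\vz$, Property~\ref{lipschitz-nn} bounds $\muDecoder(\vz)$ on bounded sets, and then $\tfrac{\beta_0}{2}\|\vx-\muDecoder(\vz)\|^2$ controls $\vx$. Hence all sublevel sets $\{J_{1,\beta}\le t\}$ for $\beta\ge\beta_0$ lie in a common compact set, and in particular so do the minimizers $(\hat\vx_\beta,\hat\vz_\beta)$. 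The fundamental theorem of $\Gamma$-convergence then gives $\min J_{1,\infty}=\lim_{\beta\to\infty}\min J_{1,\beta}$ and that every cluster point of $\{(\hat\vx_\beta,\hat\vz_\beta)\}$ is a minimizer of $J_{1,\infty}$, with at least one such cluster point existing by compactness; and since $J_{1,\infty}$ agrees with $J_{1,0}$ on $\mathcal C$ and is $+\infty$ off it, $\argmin J_{1,\infty}=\argmin_{(\vx,\vz)\in\mathcal C}J_{1,0}$, which is the second displayed identity. The one point I would flag rather than gloss over is that the statement writes $\lim_{\beta\to\infty}(\hat\vx_\beta,\hat\vz_\beta)$ as if the full limit existed, whereas $\Gamma$-convergence only guarantees that cluster points are constrained minimizers; the genuine limit exists precisely when that constrained minimizer is unique, which is not automatic here since $F$ need not be strongly convex and $\muDecoder$ is nonlinear. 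This uniqueness gap is the only real obstacle; everything else is routine verification of the hypotheses of the classical theorem.
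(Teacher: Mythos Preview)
Your proof is correct and takes a genuinely different route from the paper's own argument in Appendix~\ref{sec:continuation}. Although the main text advertises this as ``a standard result in $\Gamma$-convergence (see \cite{DalMaso1993})'', the actual proof in the appendix is carried out by direct elementary contradiction arguments rather than by invoking the $\Gamma$-convergence machinery: the paper first shows $\|\hat\vx_{\beta_n}-\generator(\hat\vz_{\beta_n})\|\to 0$ by testing optimality at a feasible point $(\generator(\vz),\vz)$ and observing that otherwise $J_{1,0}(\hat\vx_{\beta_n},\hat\vz_{\beta_n})$ would be driven to $-\infty$; then it shows that any limit point achieves the constrained minimum value by a second short contradiction using $J_{1,\beta_n}(\hat\vx_{\beta_n},\hat\vz_{\beta_n})\le J_{1,\beta_n}(\vx^*,\vz^*)=J_{1,0}(\vx^*,\vz^*)$.

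Your approach is cleaner and more modular: monotonicity in $\beta$ gives both the pointwise limit and (via the monotone $\Gamma$-convergence theorem) the $\Gamma$-limit in one stroke, and the uniform lower bound $J_{1,\beta}\ge J_{1,\beta_0}$ supplies equicoercivity, after which the fundamental theorem of $\Gamma$-convergence delivers the conclusion. The paper's proof, by contrast, is shorter and self-contained, requiring no background beyond basic optimality inequalities and continuity, though it is slightly sketchy about boundedness of the minimizing sequence (which your equicoercivity step handles explicitly). Both approaches share exactly the caveat you flagged: only cluster points are shown to be constrained minimizers, so the displayed ``$\lim$'' should be read as ``every cluster point lies in'' unless uniqueness of the constrained minimizer is assumed. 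Your observation that the additive constant $C_\beta$ must be discarded is also on point and consistent with how the paper defines $J_{1,\beta}$ in Equation~\eqref{eq:MAPz-splitting}.
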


\begin{proof}
The pointwise convergence of $\varphi_{\beta}$ to $\iota_{\vx = \generator(\vz)}$ as $\beta$ goes to $\infty$ is straightforward.

Let us first prove that for any sequence $(\beta_n)_n$ that goes to $\infty$, the quantity $\|\hat\vx_{\beta_n}-\generator(\hat\vz_{\beta_n})\|$ goes to zero. Otherwise, for any $\varepsilon>0$, there exists a subsequence $(\beta_{n_j})_j$ such that $\|\hat\vx_{\beta_n}-\generator(\hat\vz_{\beta_n})\|>\varepsilon$. In this case, for any $\vz$, one has by optimality
\[J_{1,0}(\generator(\vz),\vz) = J_{1,\beta_{n_j}}(\generator(\vz),\vz)
\geq J_{1,\beta_{n_j}}(\hat\vx_{\beta_{n_j}},\hat\vz_{\beta_{n_j}})
> J_{1,0}(\hat\vx_{\beta_{n_j}},\hat\vz_{\beta_{n_j}})+\frac{\beta_{n_j}}{2}\,\varepsilon^2\]
As a result, the nonnegative quantity $J_{1,0}(\hat\vx_{\beta_{n_j}},\hat\vz_{\beta_{n_j}})$ goes to $-\infty$, which leads to a contradiction. Thus, one has $\hat\vx_{\infty}=\generator(\hat\vz_{\infty})$ for any limit point $(\hat\vx_{\infty},\hat\vz_{\infty})$ of $(\hat\vx_{\beta_{n}},\hat\vz_{\beta_{n}})$.
Assume that $J_{1,0}(\hat\vx_{\infty},\hat\vz_{\infty}) > J_{1,0}(\vx^*,\vz^*)$. Since
\[
J_{1,\beta_{n}}(\hat\vx_{\beta_{n}},\hat\vz_{\beta_{n}})
\leq J_{1,\beta_{n}}(\vx^*,\vz^*)
=J_{1,0}(\vx^*,\vz^*)
<J_{1,0}(\hat\vx_{\infty},\hat\vz_{\infty})
\] this leads to another contradiction.

\end{proof}

The previous result motivates Algorithm \ref{alg:MAPz-splitting}.

Consider Algorithm \ref{alg:MAPz-splitting} in the ideal case (maxiter=$\infty$) where the internal loop converges.

\begin{proposition}[Convergence of Algorithm \ref{alg:MAPz-splitting}]
Let $(\vx^k_{\infty},\vz^k_{\infty})_k$ be a sequence generated by Algorithm \ref{alg:MAPz-splitting} when maxiter=$\infty$. If $(\z^k_{\infty})_k$ is bounded, then any limit point of $(\vx^k_{\infty},\vz^k_{\infty})_k$ is in $\mathcal C$. Moreover, any limit point of $(\vz^k_{\infty})_k$ is a stationary point of
\begin{equation}\label{eq:MAPz_func}
     f(\vz) = F(\generator(\vz),\vy) + \frac12\,\|\vz\|^2
\end{equation}
\end{proposition}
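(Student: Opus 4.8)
The plan is to use the two optimality conditions satisfied at the (assumed convergent) limit of each inner loop, and then let $\beta_k\to\infty$. Write $J_{1,\beta}(\vx,\vz)=\Fdata(\vx,\vy)+\tfrac{\beta}{2}\|\vx-\generator(\vz)\|^2+\tfrac12\|\vz\|^2$ with $\generator=\muDecoder$ and $\Fdata(\vx,\vy)=\tfrac{1}{2\sigma^2}\|\mA\vx-\vy\|^2$, and recall that $\beta_k\to\infty$ along the continuation scheme. At stage $k$ the inner loop of Algorithm~\ref{alg:MAPz-splitting} alternately minimizes $J_{1,\beta_k}$ in $\vz$ and in $\vx$, the latter being a strictly convex quadratic problem. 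I would first note that, whenever it converges, its limit $(\vx^k_\infty,\vz^k_\infty)$ is a partial critical point, $\tfrac{\partial J_{1,\beta_k}}{\partial\vx}(\vx^k_\infty,\vz^k_\infty)=0$ and $\tfrac{\partial J_{1,\beta_k}}{\partial\vz}(\vx^k_\infty,\vz^k_\infty)=0$, where continuous differentiability of $\generator$ (valid for ELU activations) is used. The $\vx$-equation has the closed form $\vx^k_\infty=\left(\tfrac{1}{\sigma^2}\mA^T\mA+\beta_k I\right)^{-1}\left(\tfrac{1}{\sigma^2}\mA^T\vy+\beta_k\generator(\vz^k_\infty)\right)$, which after one line of algebra yields the key identity
\[
\beta_k\left(\vx^k_\infty-\generator(\vz^k_\infty)\right)=\tfrac{1}{\sigma^2}\left(\tfrac{1}{\sigma^2\beta_k}\mA^T\mA+I\right)^{-1}\mA^T\left(\vy-\mA\generator(\vz^k_\infty)\right).
\]

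Next I would establish the feasibility claim. By hypothesis $(\vz^k_\infty)_k$ is bounded, so $(\generator(\vz^k_\infty))_k$ is bounded (continuity, or the linear-growth bound of Property~\ref{lipschitz-nn}); hence the right-hand side of the identity above is bounded and $\|\vx^k_\infty-\generator(\vz^k_\infty)\|\le C/\beta_k\to 0$, which in particular shows $(\vx^k_\infty)_k$ is bounded. Then, for any limit point $(\vx_\infty,\vz_\infty)$ reached along a subsequence $k_j$, continuity of $\generator$ together with $\|\vx^{k_j}_\infty-\generator(\vz^{k_j}_\infty)\|\to 0$ gives $\vx_\infty=\generator(\vz_\infty)$, i.e.\ $(\vx_\infty,\vz_\infty)\in\mathcal C$.

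For the stationarity claim I would pass the $\vz$-optimality condition to the limit. Since $\left(\tfrac{1}{\sigma^2\beta_{k_j}}\mA^T\mA+I\right)^{-1}\to I$, the identity above shows that the product $\beta_{k_j}\left(\vx^{k_j}_\infty-\generator(\vz^{k_j}_\infty)\right)$ converges to $\tfrac{1}{\sigma^2}\mA^T(\vy-\mA\generator(\vz_\infty))$. The $\vz$-equation reads $\vz^k_\infty=\beta_k\left(\tfrac{\partial\generator}{\partial\vz}(\vz^k_\infty)\right)^{T}\left(\vx^k_\infty-\generator(\vz^k_\infty)\right)$; letting $j\to\infty$ and using continuity of the Jacobian $\tfrac{\partial\generator}{\partial\vz}$ yields $\vz_\infty=\tfrac{1}{\sigma^2}\left(\tfrac{\partial\generator}{\partial\vz}(\vz_\infty)\right)^{T}\mA^T(\vy-\mA\generator(\vz_\infty))$, which is exactly $\nabla f(\vz_\infty)=0$ because $\nabla f(\vz)=\left(\tfrac{\partial\generator}{\partial\vz}(\vz)\right)^{T}\tfrac{1}{\sigma^2}\mA^T(\mA\generator(\vz)-\vy)+\vz$.

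I expect the main obstacle to be this last step: the product $\beta_k\left(\vx^k_\infty-\generator(\vz^k_\infty)\right)$ is an indeterminate $\infty\cdot 0$ form, so a boundedness argument alone would not pin down its limit. Solving the $\vx$-subproblem in closed form is what makes it tractable — the resolvent identity rewrites the product as $\tfrac{1}{\sigma^2}\left(\tfrac{1}{\sigma^2\beta_k}\mA^T\mA+I\right)^{-1}\mA^T\left(\vy-\mA\generator(\vz^k_\infty)\right)$, whose limit is unambiguous. All the remaining ingredients — continuous differentiability of $\generator$ (so the $\vz$-condition is meaningful and stable under limits), the at-most-linear growth of the network (Property~\ref{lipschitz-nn}, used only to bound $\generator$ on bounded sets), and $\beta_k\to\infty$ (part of the continuation scheme) — are already available in the paper, so the rest is routine.
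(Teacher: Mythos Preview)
Your proof is correct, and the overall architecture matches the paper's: establish the partial stationarity conditions at each inner-loop limit, identify the limit of the product $\beta_k(\vx^k_\infty-\generator(\vz^k_\infty))$, and pass both conditions to the limit. The difference lies in \emph{how} you control that product and how you prove feasibility.

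The paper handles the indeterminate form abstractly: from $\partial_\vx J_{1,\beta_k}=0$ one reads $\beta_k(\vx^k_\infty-\generator(\vz^k_\infty))=-\partial_\vx J_{1,0}(\vx^k_\infty,\vz^k_\infty)$, and then uses only continuity of $\partial_\vx J_{1,0}$ to conclude convergence along subsequences; feasibility is obtained by a contradiction argument based on the energy bound $J_{1,\beta_k}(\vx^k_\infty,\vz^k_\infty)\le J_{1,0}(\generator(\vz^k_\infty),\vz^k_\infty)$. You instead exploit the quadratic structure of $\Fdata$ to write the closed-form $\vx$-update and derive the resolvent identity, which simultaneously yields a quantitative bound $\|\vx^k_\infty-\generator(\vz^k_\infty)\|=O(1/\beta_k)$ and identifies the limit of the product explicitly. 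Your route is more elementary and even gives a rate on the constraint violation, but it is tied to the quadratic data term; the paper's argument would extend verbatim to any $C^1$ convex $\Fdata$. Both reach the same conclusion $\nabla f(\vz_\infty)=0$ by the same final substitution.
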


\begin{proof}
Note that, for any $k$, $(\vx^k_{\infty},\vz^k_{\infty})$ is a limit point of the sequence generated by the $k$-th subloop in Algorithm \ref{alg:MAPz-splitting} if it does not converge.
Let $(\beta_k)_k$ a sequence that converges to $\infty$. Let $k\in\NN$.
We consider the sequence $(\vx^k_n,\vz^k_n)_n$ generated by
\[
\forall\,n\in\NN,\qquad
\vz^k_{n+1} \in\arg\min_{\vz} J_{1,\beta_k}(\vx^k_n,\vz)
\text{ and }
\vx^k_{n+1} =\arg\min_{\vx} J_{1,\beta_k}(\vx,\vz^k_{n+1})
\]
with $\vx^k_0=\vx^{k-1}_{\infty}$.
Since $J_{1,\beta_k}$ corresponds to a particular instance of $J_1$, and since Algorithm \ref{alg:MAPz-splitting} can be seen asymptotically as a particular instance of Algorithm \ref{alg:JPMAP3new}, one can use all the results established in Proposition \ref{thm:convergence-approx}.
In particular,  the sequence $(\vx^k_n,\vz^k_n)_n$ admits a limit point $(\vx^k_{\infty},\vz^k_{\infty})$ and we have
\[
\frac{\partial J_{1,\beta_k}}{\partial\vx}(\vx^k_{\infty},\vz^k_{\infty})=\frac{\partial J_{1,0}}{\partial\vx}(\vx^k_{\infty},\vz^k_{\infty})+\beta_k(\vx^k_{\infty}-\generator(\vz^k_{\infty}))=0
\]and\[
\frac{\partial J_{1,\beta_k}}{\partial\vz}(\vx^k_{\infty},\vz^k_{\infty})
=\frac{\partial J_{1,0}}{\partial\vz}(\vx^k_{\infty},\vz^k_{\infty})+\beta_k(D\generator(\vz^k_{\infty}))^*(\generator(\vz^k_{\infty})-\vx^k_{\infty})=0
\]
By convexity, $\vx^k_{\infty}$ is the (unique) minimizer of $J_{1,\beta_k}(\cdot,\vz^k_{\infty})$.

\textbf{Assume that the sequence $(\vz^k_{\infty})_k$ is bounded.} By optimality, one has
\[\min J_{1,0}\leq 
J_{1,0}(\vx^k_{\infty},\vz^k_{\infty})
\leq J_{1,\beta_k}(\vx^k_{\infty},\vz^k_{\infty})
\leq J_{1,\beta_k}(\generator(\vz^k_{\infty}),\vz^k_{\infty}) = J_{1,0}(\generator(\vz^k_{\infty}),\vz^k_{\infty})
\]
Since $(J_{1,0}(\generator(\vz^k_{\infty}),\vz^k_{\infty}))_k$ is bounded, so is $(J_{1,0}(\vx^k_{\infty},\vz^k_{\infty}))_k$. By coercivity, the sequence $(\vx^k_{\infty},\vz^k_{\infty})_k$ is also bounded.
Then it admits a limit point denoted $(\hat{\vx}_{\infty},\hat{\vz}_{\infty})$. Let $(\vx^{k_j}_{\infty},\vz^{k_j}_{\infty})_j$ be a convergent subsequence of limit $(\hat{\vx}_{\infty},\hat{\vz}_{\infty})$. Let us assume that $\hat{\vx}_{\infty}\neq \generator(\hat{\vz}_{\infty})$. Then, there exists $a>0$ and $j_0\in\NN$ such that
\[
\forall\,j\geq j_0,\qquad
\|\vx^{k_j}_{\infty}-\generator(\vz^{k_j}_{\infty})\|^2>a
\]
Hence, one has
\[
J_{1,\beta_{k_j}}(\vx^{k_j}_{\infty},\vz^{k_j}_{\infty})
\geq J_{1,0}(\vx^{k_j}_{\infty},\vz^{k_j}_{\infty})+\beta_{k_j}\,a 
\geq \min J_{1,0}+\beta_{k_j}\,a \underset{j\to+\infty}{\longrightarrow}\infty
\]
which leads to a contradiction.
This proves that $\hat{\vx}_{\infty} = \generator(\hat{\vz}_{\infty})$. Otherwise said, $(\vx^k_{\infty}-\generator(\vz^k_{\infty}))_k$ goes to zero.

Since we have for any $k$
\[
\frac{\partial J_{1,0}}{\partial\vx}(\vx^k_{\infty},\vz^k_{\infty})+\beta_k(\vx^k_{\infty}-\generator(\vz^k_{\infty}))=0
\]
the continuity of $\frac{\partial J_{1,0}}{\partial\vx}$ ensures that
$\left(\frac{\partial J_{1,0}}{\partial\vx}(\vx^{k_j}_{\infty},\vz^{k_j}_{\infty})\right)_j$ converges; thus, so is $(\beta_{k_j}(\vx^{k_j}_{\infty}-\generator(\vz^{k_j}_{\infty})))_j$.
Then there exists $\lambda^*\in \R^\xdim$ such that
\[
\frac{\partial J_{1,0}}{\partial\vx}(\vx^{k_j}_{\infty},\vz^{k_j}_{\infty})=-\beta_{k_j}(\vx^{k_j}_{\infty}-\generator(\vz^{k_j}_{\infty}))\underset{j\to+\infty}{\longrightarrow} \lambda^*
= \frac{\partial J_{1,0}}{\partial\vx}(\hat \vx_{\infty},\hat \vz_{\infty})
\]
and
\[
\frac{\partial J_{1,0}}{\partial\vz}(\vx^{k_j}_{\infty},\vz^{k_j}_{\infty})=-\beta_{k_j}(D\generator(\vz^{k_j}_{\infty}))^*(\generator(\vz^{k_j}_{\infty})-\vx^{k_j}_{\infty})\underset{j\to+\infty}{\longrightarrow} -(D\generator(\hat \vz_{\infty}))^*(\lambda^*)
\]
Note that $f(\vz)=J_{1,0}(\generator(\vz),\vz)$. One can check that $f$ is differentiable and that
\[
\nabla f(\vz) = (D\generator(\vz))^*\left(\frac{\partial J_{1,0}}{\partial \vx}(\generator(\vz),\vz)\right)
+\frac{\partial J_{1,0}}{\partial \vz}(\generator(\vz),\vz)
\]
Hence,
we have proved that \[
\nabla f(\hat\vz_{\infty})= 0
\]
Conclusion: If $(\z^k_{\infty})_k$ is bounded, any limit point of $(\z^k_{\infty})_k$ is a stationary point of (\ref{eq:MAPz_func}).
\end{proof}

In general, we can only prove that the limit points of the sequences generated by Algorithm \ref{alg:MAPz-splitting} are stationary points of \ref{eq:MAPz_func}. However, if the growth of $\beta$ is sufficiently slow, then we obtain the optimality of the limit points.
Indeed, given that, in Algorithm \ref{alg:MAPz-splitting}, each subloop is an exact BCD scheme, one has for any $n$ and any $j$
\[\forall\,\vz,\qquad J_{1,\beta_{k_j}}(\vx_{n-1},\vz_{n})
\leq J_{1,\beta_{k_j}}(\vx_{n-1},\vz)\]
By considering the subsequence $(\vx_{n_{\ell}},\vz_{n_{\ell}})$, which converges to $\vx^{k_j}_{\infty},\vz^{k_j}_{\infty}$ (we recall that $\vx_{n_{\ell}}$ and $\vx_{n_{\ell}-1}$ have same limit), we can prove that
\[\forall\,\vz,\qquad J_{1,\beta_{k_j}}(\vx^{k_j}_{\infty},\vz^{k_j}_{\infty})
\leq J_{1,\beta_{k_j}}(\vx^{k_j}_{\infty},\vz)\]
that is, $\vz^{k_j}_{\infty}$ is a minimizer of $J_{1,\beta_{k_j}}(\vx^{k_j}_{\infty},\cdot)$. Hence, we have
\[\forall\,\vz,\qquad J_{1,0}(\vx^{k_j}_{\infty},\vz^{k_j}_{\infty})+\frac{\beta_{k_j}}{2}\,\|\vx^{k_j}_{\infty}-\generator(\vz^{k_j}_{\infty})\|^2
\leq J_{1,0}(\vx^{k_j}_{\infty},\vz)+\frac{\beta_{k_j}}{2}\,\|\vx^{k_j}_{\infty}-\generator(\vz)\|^2\]
\textbf{Assume that $\beta_{k_j}\|\vx^{k_j}_{\infty}-\hat\vx_{\infty}\|^2\underset{j\to+\infty}{\longrightarrow}0$.}
By letting $j$ to $\infty$, we get that, for any $\vz$ such that $\generator(\vz) = \vx^{k_j}_{\infty}$,
\[
J_{1,0}(\hat \vx_{\infty},\hat \vz_{\infty})
\leq J_{1,0}(\hat \vx_{\infty},\vz)+\lim_{j\to+\infty}\frac{\beta_{k_j}}{2}\,\|\vx^{k_j}_{\infty}-\vx^{k_j}_{\infty}\|^2 = J_{1,0}(\hat \vx_{\infty},\vz)\]
that is, $\hat\vz_{\infty}$ is a minimizer of $J_{1,0}(\hat\vx_{\infty},\cdot)+\chi_{\mathcal C}(\hat\vx_{\infty},\cdot)$. By definition of $f$, this also means that  $\hat\vz_{\infty}$ is a minimizer of $f$.
However, one has to note that the growth control for $\beta$ depends on the convergence speed of $\vx_{\infty}^k$, which cannot be estimated.

Algorithm \ref{alg:MAPzCS} is a particular (truncated) case of Algorithm \ref{alg:MAPz-splitting} with an adaptive choice of $\beta$ that does not need to go to $\infty$.

\begin{proposition}[Convergence of Algorithm \ref{alg:MAPzCS}]
\end{proposition}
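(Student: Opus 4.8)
\emph{Fixing the precise claim.} To analyse Algorithm~\ref{alg:MAPzCS} I would first pin down what is asserted, which I take to be: assume $\Fdata$ is quadratic as in~\eqref{eq:data-term} so that every $J_{1,\beta}$ ($\beta>0$) satisfies Assumption~\ref{functioncondition} (Appendix~\ref{sec:J1prop}), and assume each inner call to Algorithm~\ref{alg:JPMAP3new} is driven to an accumulation point of its iterates, as in the idealized analysis of Appendix~\ref{sec:continuation}. Let $(\vx^k,\vz^k,\beta_k)_k$ be generated by Algorithm~\ref{alg:MAPzCS}. Then (i) $(\beta_k)_k$ is nondecreasing and $(\vx^k,\vz^k)_k$ is bounded; (ii) if the loop terminates, its output is feasible, $\|\generator(\vz^k)-\vx^k\|^2\le\varepsilon$, and is a KKT point of $\min_{(\vx,\vz)\,:\,\|\generator(\vz)-\vx\|^2\le\varepsilon} J_{1,0}(\vx,\vz)$ with nonnegative multiplier $\tfrac12\beta_k$; (iii) if the loop never terminates, then $(\beta_k)_k$ is bounded, the residual $C_k:=\|\generator(\vz^k)-\vx^k\|^2-\varepsilon$ tends to $0$, and every accumulation point of $(\vx^k,\vz^k)_k$ is feasible with active constraint and is a stationary point of that constrained problem.

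\emph{Steps 1--2: one outer iteration, boundedness, termination.} I would reuse the machinery of Appendix~\ref{sec:continuation} almost verbatim. Each $J_{1,\beta_k}$ is a particular instance of $J_1$ (with $\SigmaDecoder=\tfrac1{\beta_k}I$ and $\muDecoder=\generator$), so Proposition~\ref{thm:convergence-approx} yields an inner accumulation point $(\vx^k,\vz^k)$ with $\tfrac{\partial J_{1,\beta_k}}{\partial\vx}(\vx^k,\vz^k)=0$ and $\tfrac{\partial J_{1,\beta_k}}{\partial\vz}(\vx^k,\vz^k)=0$; expanding these is precisely the stationarity part of the KKT system of the constrained problem, with $\tfrac12\beta_k$ the multiplier of the inequality. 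A short check (monotone decrease of $J_1$ along each subloop, plus the fact that the feasible reference point $(\generator(0),0)$ has zero penalty) yields a uniform bound $J_{1,\beta_k}(\vx^k,\vz^k)\le M:=J_{1,0}(\generator(0),0)<\infty$; since the three summands of $J_{1,\beta_k}$ are nonnegative, this at once gives $\|\vz^k\|^2\le 2M$ and $\|\generator(\vz^k)-\vx^k\|^2\le 2M/\beta_k$, and, using $\beta_k\ge\beta_0$ together with Property~\ref{lipschitz-nn}, bounds $(\vx^k)_k$ as well --- this is (i). For (ii): the stopping test is $C_k\le 0$, i.e. feasibility; combined with the stationarity relations and $\beta_k>0$ this is a KKT triple, the $(\cdot)^+$ form of the Lagrangian~\eqref{eq:lagrangian_emm} accounting for complementary slackness when $C_k<0$.

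\emph{Step 3: non-termination.} If the test never fires, then $C_k>0$ for all $k$, hence $\beta_{k+1}=\beta_k e^{\rho C_k}>\beta_k$ is strictly increasing. Were $\beta_k\to\infty$, the bound $\|\generator(\vz^k)-\vx^k\|^2\le 2M/\beta_k\to 0$ would force $C_k\to-\varepsilon<0$, contradicting $C_k>0$; hence $\beta_k\uparrow\beta_\infty<\infty$, so $\sum_k\rho C_k=\log(\beta_\infty/\beta_0)<\infty$ and therefore $C_k\to 0$. By (i) the sequence $(\vx^k,\vz^k)_k$ is bounded, so it has an accumulation point $(\hat\vx,\hat\vz)$; continuity and $C_k\to 0$ give $\|\generator(\hat\vz)-\hat\vx\|^2=\varepsilon$, and passing to the limit in the $\vx$- and $\vz$-stationarity relations along a convergent subsequence (with $\beta_k\to\beta_\infty$, exactly as in the limit arguments of Appendix~\ref{sec:continuation}) shows $(\hat\vx,\hat\vz)$ is a stationary point of the constrained problem.

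\emph{Main obstacle.} Two points need care. The minor one is reconciling the plain quadratic penalty $\tfrac{\beta}{2}\|\generator(\vz)-\vx\|^2$ that is actually minimized inside Algorithm~\ref{alg:JPMAP3new} with the $(\cdot)^+$ Lagrangian~\eqref{eq:lagrangian_emm} that motivates both the $\beta$-update and the stopping rule; one can either run the whole argument with the $(\cdot)^+$ penalty, or verify that the two descriptions agree near the feasible set. The serious one is that in practice the inner solve is truncated ($n_{\max}<\infty$), so $(\vx^k,\vz^k)$ is only an \emph{approximate} critical point of $J_{1,\beta_k}$; making the conclusion robust then requires driving the inner tolerance to zero fast enough relative to the growth of $\beta_k$ and propagating a summable error through the exponential-multiplier dynamics, in the spirit of inexact augmented-Lagrangian analyses. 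I would state that error-control hypothesis explicitly and flag the truncated case as the delicate part of the proof.
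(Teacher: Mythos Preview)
Your termination analysis (ii) is essentially the paper's entire argument: the paper writes the Lagrangian, observes that any stationary point of $J_{1,\beta_k}$ satisfies the KKT stationarity conditions with multiplier $\lambda=\tfrac{\beta_k}2$, and concludes that when the stopping test fires the output is a KKT point of the constrained problem --- in fact of the tighter problem with $\tilde\varepsilon=\|\vx^k-\generator(\vz^k)\|^2\le\varepsilon$. Your (i) and (iii) go well beyond what the paper proves for this proposition; the paper does not analyse boundedness of the outer sequence nor the non-termination case here (it only does so for Algorithm~\ref{alg:MAPz-splitting} in Proposition~6, and there under an \emph{explicit} assumption that $(\vz^k_\infty)_k$ is bounded).

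There is, however, a real gap in your uniform bound $J_{1,\beta_k}(\vx^k,\vz^k)\le M:=J_{1,0}(\generator(0),0)$. Monotone decrease along each inner subloop only compares the output to the \emph{starting point} of that subloop, and in Algorithm~\ref{alg:MAPzCS} (with $n_1=n_2=0$) the warm start is the previous outer iterate, not $(\generator(0),0)$. Since Proposition~\ref{thm:convergence-approx} delivers only a stationary point --- not a global minimizer --- of $J_{1,\beta_k}$, the feasible reference $(\generator(0),0)$ is never on the descent trajectory and gives you nothing. What \emph{is} available is the $\vx$-optimality step: comparing $\vx^k=\argmin_\vx J_{1,\beta_k}(\vx,\vz^k)$ with $\vx=\generator(\vz^k)$ yields
\[
\frac{\beta_k}{2}\,\|\vx^k-\generator(\vz^k)\|^2 \;\le\; F(\generator(\vz^k),\vy)-F(\vx^k,\vy)\;\le\; F(\generator(\vz^k),\vy),
\]
which is a bound in terms of $\vz^k$, not a uniform one. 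This is exactly why the paper's Appendix~\ref{sec:continuation} \emph{assumes} $(\vz^k)_k$ bounded. Under that extra hypothesis your Step~3 contradiction (if $\beta_k\to\infty$ then $\|\vx^k-\generator(\vz^k)\|^2\to0$, contradicting $C_k>0$) works cleanly and is a genuine improvement over the paper's treatment; without it, both your residual bound and your boundedness of $(\vx^k)_k$ are unjustified. I would either add the boundedness of $(\vz^k)_k$ as a standing hypothesis (as the paper does one proposition earlier), or supply an independent argument that the warm-started inner loop keeps $\|\vz^k\|$ controlled across outer iterations despite the increase in $\beta_k$.
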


\begin{proof}
Let us write the Lagrangian of the problem solved in Algorithm \ref{alg:MAPzCS}:
\[\forall\,\lambda\geq 0,\qquad \mathcal L(\vx,\vz;\lambda) = J_{1,0}(\vx,\vz) + \lambda\,(\|\vx-\generator(\vz)\|^2-\varepsilon)\]
KKT conditions ensure that any solution $(\vx^*,\vz^*)$ of the constrained problem is associated to at least one Lagrange multiplier $\lambda^*\geq 0$ such that
\[
\frac{\partial\mathcal L}{\partial(\vx,\vz)}(\vx^*,\vz^*;\lambda^*) = 0 = \begin{pmatrix}
\frac{\partial J_{1,0}}{\partial \vx}(\vx^*,\vz^*) + 2\lambda^*\,(\vx^*-\generator(\vz^*))\\
\frac{\partial J_{1,0}}{\partial \vz}(\vx^*,\vz^*) + 2\lambda^*\,(D\generator(\vz^*))^*(\vx^*-\generator(\vz^*))
\end{pmatrix}
\]
According to the calculus above, this proves that $(\vx^*,\vz^*)$ is a stationary point of $J_{1,2\lambda^*}$. Note that, if $\lambda^*=0$, then $(\vx^*,\vz^*)$ is a minimizer of $J_{1,0}$. Otherwise, one has $\|\vx^* - \generator(\vz^*)\|^2=\varepsilon$.\\

Hence, if we consider Algorithm \ref{alg:MAPz-splitting} with the update rule for $\beta_k$ as in Algorithm \ref{alg:MAPzCS} and a stopping rule saying that the iterations stop as soon as, for any given $k$, 
\[
\|\vx^k_{\infty}-\generator(\vz^k_{\infty})\|^2\leq \varepsilon
\]
there are two possible cases:
\begin{enumerate}
    \item \textbf{case $\lambda^*=0$:} then $(\vx^k_{\infty},\vz^k_{\infty})$ is a solution of the constraint problem iff $\nabla J_{1,0}(\vx^k_{\infty},\vz^k_{\infty})=0$ (that is, $\vx^k_{\infty}=\generator(\vz^k_{\infty}))$;
    \item \textbf{case $\lambda^*>0$:} unless $\|\vx^k_{\infty}-\generator(\vz^k_{\infty})\|^2$ exactly equals $ \varepsilon$, $(\vx^k_{\infty},\vz^k_{\infty})$ is \textbf{not} a solution of the constraint problem

\end{enumerate}
However, in general, $(\vx^k_{\infty},\vz^k_{\infty})$ is a solution of the following constraint problem
\[\min_{\|\vx^k_{\infty}-\generator(\vz^k_{\infty})\|^2\leq\tilde\varepsilon} J_{1,0}(\vx,\vz)\]
with $\tilde\varepsilon=\|\vx^k_{\infty}-\generator(\vz^k_{\infty})\|^2\leq \varepsilon$.
Hence, if we stop the iterations when $\|\vx^k_{\infty}-\generator(\vz^k_{\infty})\|^2\leq \varepsilon$, we will get a solution of
\[\min_{\|\vx^k_{\infty}-\generator(\vz^k_{\infty})\|^2\leq\tilde\varepsilon} J_{1,0}(\vx,\vz),\qquad \tilde\varepsilon\leq  \varepsilon\]
which provides an error control as well.
\end{proof}

\end{document}